\documentclass{article}

\usepackage[preprint]{neurips_arxiv} 
\usepackage{notations}

\usepackage[utf8]{inputenc} 
\usepackage[T1]{fontenc}    
\usepackage{hyperref}       
\usepackage{url}            
\usepackage{booktabs}       
\usepackage{amsfonts}       
\usepackage{nicefrac}       
\usepackage{microtype}      
\usepackage{xcolor}         
\usepackage{graphicx}       
\usepackage{subfigure}
\usepackage{amsmath,amsfonts,amssymb,amsthm} 
\usepackage{placeins} 
\usepackage{bm}

\usepackage[capitalize,noabbrev]{cleveref} 

\usepackage{microtype}
\usepackage{graphicx}
\usepackage{subfigure}
\usepackage{booktabs} 
\usepackage{notations}
\usepackage{subcaption}

\usepackage{hyperref}

\theoremstyle{plain}
\newtheorem{theorem}{Theorem}
\newtheorem{proposition}{Proposition}

\theoremstyle{definition}

\newtheorem{assumption}[theorem]{Assumption}
\theoremstyle{remark}
\newtheorem{remark}[theorem]{Remark}

\title{When pre-training hurts LoRA fine-tuning:\\a dynamical analysis via single-index models}

\author{%
Gibbs Nwemadji\textsuperscript{1}
\quad
Bruno Loureiro\textsuperscript{2}
\quad
Jean Barbier\textsuperscript{3}
\\[0.8em]
\textsuperscript{1}International School of Advanced Studies, Trieste, Italy
\\
\textsuperscript{2}Département d'Informatique, École Normale Supérieure, PSL \& CNRS
\\
\textsuperscript{3}The Abdus Salam International Centre for Theoretical Physics, Trieste, Italy
}

\begin{document}

\maketitle


\begin{abstract}
   Pre-training on a source task is usually expected to facilitate fine-tuning on similar downstream problems.
In this work, we mathematically show that this naive intuition is not always true: excessive pre-training can computationally \emph{slow down} fine-tuning optimization.
We study this phenomenon for low-rank adaptation (LoRA) fine-tuning on single-index models trained under one-pass SGD. Leveraging a summary statistics description of the fine-tuning dynamics, we precisely characterize how the convergence rate depends on the initial fine-tuning alignment and the degree of non-linearity of the target task. The key take away is that even when the pre-training and downstream tasks are well aligned, strong pre-training can induce a prolonged search phase and hinder convergence.
Our theory thus provides a unified picture of how pre-training strength and task difficulty jointly shape the dynamics and limitations of LoRA fine-tuning in a nontrivial tractable model. On the practical side, we empirically show that our theoretical findings extend beyond our toy model and remain relevant in the context of a vision-transformer model trained on real data.
\end{abstract}

\section{Introduction}
\label{introduction}
Recent advances in machine learning have established pre-trained models as a central paradigm for solving a wide range of tasks \cite{han2021pre}.
Rather than training models from scratch, modern approaches increasingly rely on adapting a fixed, pre-trained representation using limited task-specific data \cite{qiu2020pre, han2021pre}.
This trend is particularly pronounced for large language models (LLMs) but extends more broadly.
Beyond their empirical performance, pre-trained models offer practical advantages, including reduced computational cost and faster convergence.
These benefits have motivated the development of \emph{parameter-efficient fine-tuning} (PEFT) methods~\cite{houlsby2019parameter,xu2023parameter},
which aim to adapt large models while updating only a small subset of parameters.

Among PEFT methods, \emph{Low-Rank Adaptation} (LoRA)~\cite{hu2022lora} has been widely adopted. In this framework, a pre-trained model is adapted to a downstream task by learning a low-rank update of a subset of choice of the weight matrices,
while keeping the remaining parameters fixed~\cite{hu2022lora,dettmers2023qlora}.
Formally, consider a neural network mapping an input $\boldx$ to an output via $f(\boldx;\theta)$,
where $\theta$ denotes the set of model parameters.
Let $\tilde{\omeg}\in\mathbb{R}^{K\times d}$ denote a specific pre-trained weight matrix within $\theta$—for instance,
an attention or projection layer—to which LoRA is applied (in practice, LoRA can be applied simultaneously to multiple matrices). In this setting, $\tilde{\omeg}$ is adapted by learning a low-rank correction of the form $\boldsymbol{U}\omeg$,
where $\boldsymbol{U}\in\mathbb{R}^{K\times R}$ and $\omeg\in\mathbb{R}^{R\times d}$ are trainable matrices with
$R\ll\min\{K,d\}$.
The adapted model can be~written~as
\begin{equation}
    f_{\tilde{\omeg}}(\boldx;\boldsymbol{U},\omeg):=
    f(\boldx;\,\theta\setminus\{\tilde{\omeg}\}\cup\{\tilde{\omeg}+\boldsymbol{U}\omeg\}).
\end{equation}
Compared to full fine-tuning which updates all $k\times d$ entries of $\tilde{\omeg}$, LoRA reduces the number of trainable parameters to $(K+d)R$,
yielding significant computational savings when $R$ is small.
Crucially, the pre-trained matrix $\tilde{\omeg}$ is assumed to have been learned from diverse data, and therefore to encode representations that are informative for the downstream task. Understanding the interplay between the pre-trained weights and the low-rank fine-tuning update is critical to the development of a principled understanding~of~LoRA.

Despite its remarkable success~\cite{yang2024low,mao2025survey}, LoRA is only partially understood.
While recent studies have begun to characterize its expressive power, convergence properties, and optimization behavior~\cite{zeng2023expressive,mu2025convergence,kratsios2025sharp,liang2025does,kim2025lora,xu2025understanding,zhang2025lora},
several fundamental questions remain open. In particular, an important one is \emph{how the pre-trained representations impact the LoRA fine-tuning}. This is relevant as Hu \textit{et al}.~\cite[Sec.~7.3]{hu2022lora} suggest that the LoRA adaptation matrix may amplify task-specific features learned during pre-training but not sufficiently emphasized by the pretrained model.

In this work we address this question in the context of single-index models, focusing on the particular role the pre-trained weights play on the fine-tuning learning dynamics. Single-index models define a class of target functions $f_{\star}(\boldx)=\phi(\omeg_{\star}\cdot \boldx)$ in which the relevant information for prediction effectively lies on a one-dimensional subspace of $\mathbb{R}^{d}$. It has recently gained in popularity as a non-linear but analytically tractable model for studying non-convex optimization and feature learning in high-dimensional learning~\cite{gardner1989three,barbier2019optimal,arous2021online,damian2022neural,ba2022high,cui24asymptotics,dandi2024two}. Ben Arous \textit{et al} \cite{arous2021online} has shown that the convergence rate of one-pass SGD for single-index models is polynomial in the dimension $n = \Theta(d^{k^\star-1})$, with $k_{\star}$ being the \emph{information exponent}, quantifying the degree of non-linearity of activation $\phi$. This sample complexity is far from optimal, with $n=\Theta(d)$ typically sufficing for efficiently retrieving $\boldsymbol{\omega}_{\star}$ for most well-behaved $\phi$ \cite{kalai2009isotron, barbier2019optimal, damian2024computational}.

In this manuscript, we build on this literature to investigate the question of how the initial alignment of the pre-trained weights impact the LoRA fine-tuning under one-pass SGD. We analyze the optimization process, with particular emphasis on the early-stage \emph{search phase}, during which the model must identify the relevant direction in the data. This provides an algorithmic perspective on LoRA that complements existing statistical guarantees. More precisely, our \textbf{main contributions} are three-fold:
\begin{itemize}
    \item First, we quantify how the alignment between the pre-trained representation and the downstream task governs the time needed to escape the search phase. For common activation functions---linear, $\mathrm{erf}$, ReLU, and sigmoid---we uncover a trade-off between improved initial performance and the algorithmic tractability of subsequent LoRA fine-tuning: stronger pre-training can slow down adaptation rather than accelerate it (Figure~\ref{fig:dynamics_linear_activation}). This is consistent with recent empirical observations in LLMs by Springer \textit{et al} \cite{springer2025overtrained} and Isik \textit{et al} \cite{Isik2025Scaling}, often described as \emph{catastrophic overtraining}.
    
    \item Second, we identify regimes, determined jointly by the pre-training alignment and the activation function, in which escape from the search phase is provably impossible under the original labels. We then show that an appropriate label transformation, in particular label squaring, can remove this obstruction and restore escape, clarifying how target pre-processing can change the algorithmic difficulty of LoRA fine-tuning.
    
    \item Third, we complement the theory with controlled vision-transformer LoRA experiments on EMNIST$\to$FashionMNIST and CIFAR-10$\to$SVHN. These experiments confirm the qualitative prediction: later checkpoints can have higher source accuracy while converging to worse downstream performance under LoRA fine-tuning.
\end{itemize}

\begin{figure*}
    \centering
    {\includegraphics[width=0.9\textwidth]{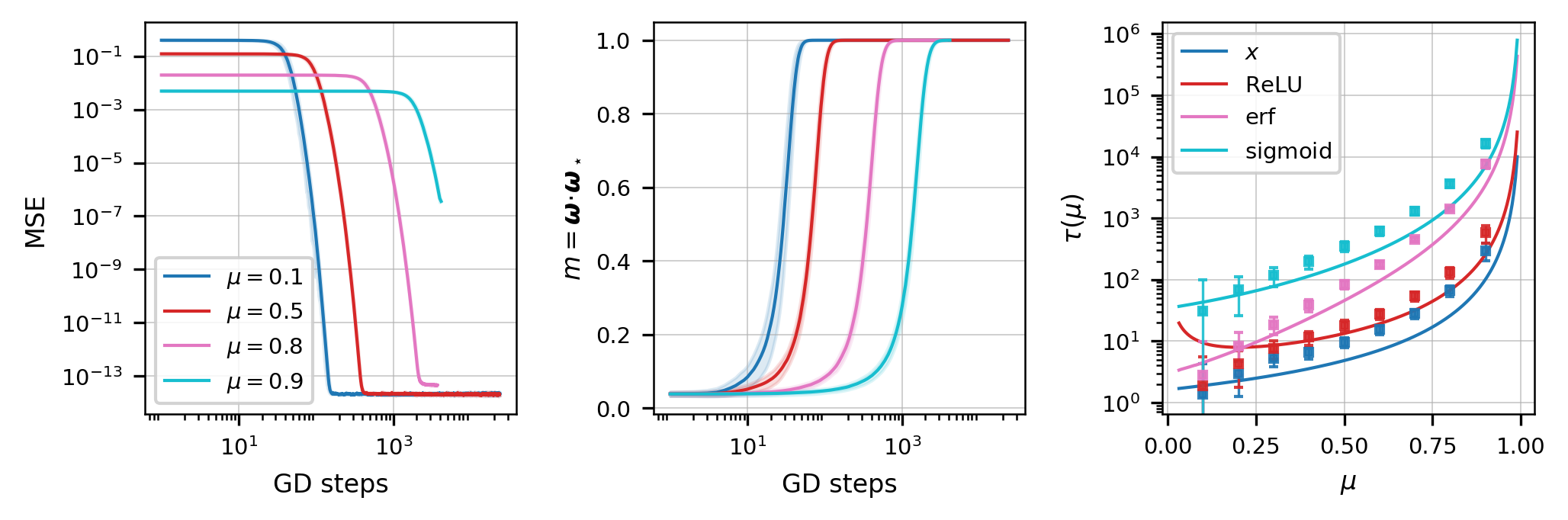}}
\vspace{-10pt}
 \caption{
Learning dynamics and escape times for the student model~\eqref{equ:student_model} trained on data from the teacher model~\eqref{equ:teacher_model}. 
\textbf{Left--middle:} finite online-SGD dynamics for the matched linear setting, showing the test MSE and alignment $m=\boldsymbol{\omega}\!\cdot\!\boldsymbol{\omega}_\star$ for different pretraining alignments $\mu$. Simulations use $d=1000$ and $B=5000$; shaded regions show one standard deviation over three runs.
\textbf{Right:} escape time $\tau(\mu)$ for matched activations with $\mathrm{IE}=1$. Solid lines are theory and markers are online-SGD estimates for different activations. Additional numerical evidence for ReLU and sigmoid activations is provided in App.~\ref{appendix:numerical evidence of slow down}.
}  \label{fig:dynamics_linear_activation}
\end{figure*}

\paragraph{Additional related work.}
A growing literature has recently begun to establish rigorous theoretical foundations for LoRA~\cite{hu2022lora}.
Early work by Zeng \textit{et al} \cite{zeng2023expressive} characterized rank conditions under which LoRA can accurately approximate a target function.
In the neural tangent kernel (NTK) regime, Jang \textit{et al} \cite{jang2024lora} showed that for ranks $R \ge \sqrt{n}$—where $R$ denotes the LoRA rank and $n$ the sample size—the loss landscape contains no spurious local minima.
Beyond asymptotic analyses, Mu \textit{et al} \cite{mu2025convergence} proved that, for a finite number of gradient steps, the convergence rate of LoRA is independent of the rank, although the limit point may differ substantially from that of full-rank gradient descent.
Related studies~\cite{hayou2024impact, kim2025lora,xu2025understanding} highlighted the role of zero initialization in biasing the dynamics toward regions containing global minima, while Zhang \textit{et al} \cite{zhang2025lora} demonstrated that, under carefully chosen initializations, a single gradient step may suffice for recovery.
In an asymmetric setting where only one LoRA factor is trained, Kratsios \textit{et al} \cite{kratsios2025sharp} showed that the generalization error scales as $O(\sqrt{R/n})$ with high probability (up to polylog factors). A common limitation across these works is that pre-trained weights are typically modeled as random or unstructured, and therefore not explicitly aligned with the downstream task, leaving the role of task-relevant pre-training largely unexplored. Our work participates to fill this gap in the context of single-index models.

\section{Setting}
We study a tractable yet non-trivial teacher--student model that captures the essential structure of LoRA-based transfer learning on a non-linear, non-convex task. More precisely, we assume the target function is a single-index model of the form $f_{\star}(\bx)=\phi(\omeg_\star \cdot \boldx)$, where $\omeg_\star\in\mathbb{S}^{d-1}$ is an unknown unit vector and $\boldx\in\mathbb{R}^d$ is the input data. We observe a dataset
$\mathcal{D}=\{(\boldx_i,y_i)\}_{i=1}^n$ consisting of conditionally (on $\omeg_\star$) i.i.d.\ samples, with inputs $\boldx_i\sim\mathcal{N}(0,I_d)$ and labels
\begin{equation}
\label{equ:teacher_model}
    y_i = \phi(\omeg_\star \cdot \boldx_i).
\end{equation}
In analogy with LoRA, we consider a student model obtained by updating a pre-trained direction with limited signal strength, i.e., we study predictors of the form:
\begin{equation}
\label{equ:student_model}
    f_{\tilde{\omeg}}(x_i; u,\omeg)
    = \sigma\bigl((\tilde{\omeg} + u\,\omeg)\cdot \boldx_i\bigr),
\end{equation}
where the learnable parameters consist of a scalar $u\in\mathbb{R}$ and a direction $\omeg\in\mathbb{S}^{d-1}$. The vector $\tilde{\omeg}\in\mathbb{R}^{d}$ represents a
feature direction inherited from pre-training, and is assumed to be aligned with the target direction $\omeg_\star$. A natural choice is therefore $\tilde{\omeg} := \mu\,\omeg_\star$, where the parameter $\mu\in(0,1)$  quantifies the amount of prior information carried by the pre-trained model. In App.~\ref{app_different_pre-trained weight}, we show that this encompasses also misaligned pre-trained weights such $\tilde{\boldsymbol{\omeg}}=\mu\,\boldsymbol{\omeg}_\star+(1-\mu)\,\boldsymbol{\xi}$
without loss of generality. 

This formulation corresponds to a transfer-learning scenario in which pre-training yields an informative but imperfect representation for the downstream task, here given by \eqref{equ:teacher_model}. Fine-tuning on the dataset $\mathcal{D}$ then improves performance beyond what is achievable with the pre-trained weights alone, via the rank-one correction $u\,\boldsymbol{\omega}$. Throughout this work, we assume that both the target $\phi(\cdot)$ and are square-integrable with respect to the standard Gaussian measure. It is worth noting that \cite{springer2025overtrained} provides theoretical grounds to quantify \emph{catastrophic overtraining}, but their analysis is limited to linear models and relies on an explicit alignment penalty to model weight transfer. As a result, it abstracts away from unconstrained fine-tuning as commonly used in practice and does not incorporate PEFT methods. In parallel, \cite{Isik2025Scaling} show that downstream performance may fail to scale monotonically with pre-training when source and target tasks are poorly aligned. In contrast, even when pre-training and downstream tasks are explicitly aligned, our framework exhibits a significant delay in learning induced by stronger pre-training.
 
\subsection{One-pass stochastic gradient descent}
Given a dataset $\mathcal{D}=\{(\boldx_{i},y_{i})\}_{i=1}^{n}$ of $n$ samples drawn from model \cref{equ:teacher_model}, we are interested in the fine-tuning training dynamics of $(u,\boldsymbol{\omega})$ with \emph{spherical one-pass~SGD}:
\begin{align}
\label{equ:gradient_step_omega}
\omeg^{i+1}
&=\frac{\omeg^{i}-\gamma\,\nabla_{\omeg} (y_{i}-f_{\tilde{\omeg}}(\boldx_i; u^{i},\omeg^{i}))^{2}
}{\bigl\|\omeg^{t}-\gamma\,\nabla_{\omeg} (y_{i}-f_{\tilde{\omeg}}(\boldx_i; u^{i},\omeg^{i}))^{2}\bigr\|},\quad
u^{i+1}=u^{i} -\gamma \nabla_{u}(y_{i}-f_{\tilde{\omeg}}(\boldx_i; u^{i},\omeg^{i}))^{2}
\end{align}
where $\gamma>0$ is the learning rate and $i\in[n]$. Note that, in line with other works on one-pass SGD for single-index models, the choice of constraining $\omeg\in\mathbb{S}^{d-1}$ is mostly for analytical simplicity, and it can be shown it does not impact the dynamics considerably \cite{arous2021online}. Note that one-pass SGD (also known as \emph{streaming} or \emph{online} SGD), sees one sample at a time with no repetition, implying that the convergence rate is equal to the sample complexity of the algorithm. 

Since the gradients in \cref{equ:gradient_step_omega} are unbiased estimators of the population gradient, the one-pass dynamics can be seen as a random discretization of gradient flow on the population risk \cite{robbins1951stochastic}:
\begin{equation}
\label{equ:population_loss}
\mathcal{L}(u,\omeg) = \mathbb{E}[\left(y-f_{\tilde{\omeg}}(\boldx; u,\omeg)\right)^{2}]
\end{equation}
where the expectation is over the pair $(\boldx,y)$. The key observation for the analysis that will follow is that the population loss in \cref{equ:population_loss} only sees the data through the pre-activations of the target $\lambda_\star:=\omeg_\star \cdot \boldx$ and model $\lambda:=(\mu\,\omeg_\star + u\,\omeg)\cdot \boldx$. Since $\boldx\sim\mathcal{N}(0,I_d)$, the pair $(\lambda_\star,\lambda)$ is jointly Gaussian with law $$ (\lambda_\star,\lambda) \sim \mathcal{N}\!\left( 0,\, \Sigma \right), \qquad \Sigma = \begin{pmatrix}
1 & \mu + u m \\
\mu + u m & r
\end{pmatrix},
$$
where we have introduced the ``overlap'' $m := \omeg\cdot\omeg_\star$ and the student pre-activation variance $r := \mu^2 + u^2 + 2\mu u m$. An important related quantity is the effective alignment $m_{\rm eff}:=\mu+u m$, which measures the total alignment of the model predictor with the target direction and serves as an indicator of recovery.

\paragraph{Order parameters and their dynamics.}
An important consequence of the discussion above is that the population loss $\mathcal{L}(u,\omeg)$ depends on the parameters $(u,\omeg)$ only through two scalar quantities: the coefficient $u$ and the overlap $m=\omeg\cdot\omeg_\star$. This suggests a reduced description of the high-dimensional learning dynamics in \cref{equ:gradient_step_omega} in terms of the evolution of these two summary statistics.  Indeed, this reduction will be the key technical tool in the analysis of the convergence rate for the LoRA fine-tuning that will follow. 

The evolution of $m$ can be obtained from \cref{equ:gradient_step_omega} by projecting the update $\boldsymbol{\omega}^{i+1}$ onto the target direction $\omeg_\star$. This defines a system of coupled stochastic processes for $(m^{i},u^{i})$, which are not autonomous. However, defining $t=\gamma i$ it can be shown that for small enough learning rate $\gamma$, this process concentrate on a deterministic limit given by gradient descent on the population loss
\begin{align*}
(m^{t+1},u^{t+1})
&=(m^{t},u^{t})-\gamma\,\nabla \tilde{\mathcal{L}}(u^t,m^t)+O(\gamma^2),
\end{align*}
where $\tilde{\mathcal{L}}(u,m)$ denotes the population loss expressed solely in terms of $(u,m)$. 
We refer the reader to App.~\ref{app:evolution_of_order_parameters} for details. For simplicity, we will henceforth use $\mathcal{L}(u,m)$ to denote this reduced loss.

We note that similar ideas, rooted in the statistical physics of learning literature have been employed in wide range of contexts in the study the one-pass SGD dynamics in machine learning \cite{saad1996learning,goldt2019dynamics,goldt2020modeling,veiga2022phase,arnaboldi2023high,arnaboldi2023escaping,saglietti2022analytical,mori2025optimal,soletskyi2025theoretical}. 

We consider initial conditions consistent with standard LoRA practice.
In applications, the scalar parameter $u$ is typically initialized at zero
\cite{hu2022lora, dettmers2023qlora, yang2024low}, while the student direction
$\omeg$ is chosen at random. In the high-dimensional limit $d\to\infty$, such a random initialization yields an initial overlap $m = \omeg \cdot \omeg_\star = O(d^{-1/2})$
with high probability. For analytical convenience, and in order to obtain a nontrivial joint evolution of the order parameters, we therefore assume that both $u$ and $m$ are initialized
at scale $d^{-1/2}$. This scaling remains asymptotically close to the standard initialization used in practice. Consistently, recent works \cite{hayou2024impact, kim2025lora, xu2025understanding} show that small initialization of the LoRA block leads to improved generalization error.

\subsection{Hermite expansion and learning phases}
\label{sec:hermite}
In the standard well-specified $\sigma(\cdot)=\phi(\cdot)$ single-index model, Ben Arous \textit{et al} \cite{arous2021online} has shown that the convergence rate of one-pass SGD in the high-dimensional and small learning rate regime is given by $n=\Theta(d^{k^{\star}-1})$, where $k^{\star}$ is the \emph{information exponent} (IE) associated to the activation function, defined as the smallest non-zero term $k^\star \geq 1$ in the Hermite expansion of the activation function $\phi$. Indeed, this is intuitive, as $k^{\star}$ quantifies the strength of the leading gradient signal available at the early phases~of~SGD. 

In our fine-tuning setting, model and target may have different pre-activation variances, which requires a generalized Hermite expansion. Let $a>0$ denote the variance of a Gaussian pre-activation, and assume that
$\mathbb{E}[\sigma(z)^2] < \infty$ for $z \sim \mathcal{N}(0,a)$.
 Then $\sigma$ admits
the expansion
\begin{equation}
\label{equ:generalize_Hermite}
\sigma(z)
=\sum_{k\ge 0} \frac{\sigma^{[a]}_k}{k!a^{k}}\,\He^{[a]}_k(z),
\end{equation}
with details on $\sigma^{[a]}_k$ and  $\He^{[a]}_k$---that denotes the $k$-th Hermite polynomial---available in App.~\ref{app:Hermite_properties}. The teacher activation operates at unit variance, and thus admits a standard
Hermite expansion $\phi(z)=\sum_{k\ge 0} \frac{\phi_{k}}{k!}\,\He_k(z)$.
The mismatch between the pre-activation variances of the target and model plays a nontrivial role in shaping the IE of the model. To illustrate this effect, we consider the well-specified setting $\sigma(\cdot)=\phi(\cdot)$ and assume, for simplicity, that the activation function is a pure Hermite polynomial of degree $k^\star \ge 2$ corresponding to having a target with IE $=k^\star$. In our student model, the pre-activation variance is $r =r(u,m,\mu):= \mu^2 + u^2 + 2\mu um$, which is generically different from $1$. As a consequence, the Hermite expansion of the student possesses nonzero coefficients only for Hermite polynomials whose degrees match the parity of $k^\star$, thereby inducing an effective reduction of the student’s IE to at most $2$, independently of that of the teacher. In particular, this analysis suggests that a byproduct of learning within the LoRA framework is an implicit regularization of the IE to at most $2$. A detailed derivation of this variance-induced IE reduction is provided in App.~\ref{appen:non-unit variance}.

Using the generalized Hermite expansion \eqref{equ:generalize_Hermite}, together with the detailed computations presented in App.~\ref{app:property_Hermite}, the population loss can be written explicitly as
\begin{equation*}
\mathcal{L}(u,m)
=
\sum_{k=0}^{\infty} \frac{1}{k!}
\Big(
\frac{\phi_k^2}{2}
+
\frac{(\sigma^{[r]}_k)^2}{2r^{k}}
-
\frac{\sigma^{[r]}_k\phi_k}{r^{k}}(\mu+um)^k
\Big).
\end{equation*}
Although the second term depends on $(u,m)$ via $r(u,m,\mu)$, it simply acts as an effective regularization (it corresponds to the term in the loss depending on the student only).
Hence, learning of task-relevant structure is driven by the third term, which governs the sample complexity. When the teacher has ${\rm IE}=k^\star$, the leading nonzero contribution of this term scales as
$m_{\rm eff}^{k^\star}=(\mu+um)^{k^\star}$, while higher-order terms scale as
$m_{\rm eff}^{k^\star+p}$ for $p\in\mathbb{N}_{+}$.
In the setting of Ben Arous \textit{et al} \cite{arous2021online} ($\mu=0,u=1$), one has $m_{\rm eff}=m\ll1$ at early times, so higher-order terms are suppressed and the loss is well approximated by its leading Hermite component.
In contrast, when $\mu\neq0$, although $um$ remains infinitesimal initially, $m_{\rm eff}=O(1)$ due to $\mu$, making higher-order contributions non-negligible.
As a result, the population loss cannot be truncated to its leading Hermite term, showing that \emph{the IE alone is insufficient} to characterize the loss near initialization in the presence of pre-training.

\paragraph{Learning phases.}
Our analysis reveals the existence of two distinct phases in the evolution of the order parameters, which are clearly illustrated by simulations (see Figure~\ref{fig:dynamics_linear_activation}):
\begin{itemize}
    \item \textit{Correlated search phase:} the dynamics starts from a weakly correlated regime in which the order parameters satisfy $\max(|u|,|m|) \ll \mu$, though the effective teacher-student overlap satisfies $m_{\rm eff}\simeq \mu.$
    \item \textit{Descent phase:} once the process enters this regime, at least one of the order parameters becomes comparable to the signal strength, i.e.\ $|m| \ge \mu$ and/or $|u| \geq \mu$.
\end{itemize}
In Ben Arous \textit{et al} \cite{arous2021online}, Thm.~1.5 shows that the majority of samples/updates are used during the search phase, while, once this phase is exited, the model recovers the target direction at an exponential rate in time, making the search phase the most algorithmically relevant regime. This is in sharp contrast to our setting, where the model has finite correlation $\mu>0$ with the target vector $\omeg_\star$ at initialization and hence throughout the full trajectory, so that the search dynamics remain correlated. Our main results (see Section \ref{main_results}) provide a fine-grained analysis of how the signal strength $\mu$ and the choice of the activation function affect the phenomenology of this correlated search phase. Moreover, we establish in App.~\ref{descent_phase} that, upon entering the descent phase, the dynamics still lead to exponential recovery of the teacher direction for matching teacher-student activation function.

\section{Main results}
\label{main_results}
In the correlated search phase, both $u$ and $m$ remain small compared to $\mu$, which implies that the drift of the dynamics is governed by the leading-order dependence of the population-loss gradients on $(u,m)$. We therefore linearize the gradient of its full Hermite expansion, keeping only terms proportional to $u$ and $m$.
In the infinitesimal learning-rate limit, this leads to the following linearized dynamics for the order parameters:
\begin{equation}
\label{equ:dynamic_u_m_correlated_search_phase}
\dot u = u B + m A,
\qquad
\dot m = u A,
\end{equation}
with 
\begin{align}
\label{equ:A_and_B}
A&=-\sum_{k=0}^{\infty}
\frac{\bar{\sigma}^{[r]}_k}{k!\mu^{k+1}}
\Big(-\phi_k + \frac{\sigma^{[r]}_k}{\mu^k}\Big), B=-\Big[
\sum_{k=0}^{\infty}\frac{\bar{\sigma}^{[r]}_k\sigma^{[r]}_k}{k!\,\mu^{2k+2}}
+\sum_{k=1}^{\infty}\frac{\phi_k !\mu^{-k-2}}{(k-1)}
\Big(\sigma^{[r]}_k - \frac{\bar{\sigma}^{[r]}_k}{k}\Big)
\Big]    
\end{align}
with $\bar{\sigma}^{[r]}_k$ defined in App.~\ref{app:gradient_population_risk} and derivations details are provided in the App.~\ref{app:linearize_the_population_loss} and \ref{appendix:escaping_time}. Note that for $A \neq 0$ there is no solution in which $u$ or $m$ evolves independently: any infinitesimal deviation from the origin immediately induces coupled dynamics in both variables.

\subsection{Escaping time}
 Using the linearized gradient dynamics in $(u,m)$, one would expect the exit time $t_{\rm exit}$—defined as the time at which at least one of the order parameters becomes comparable to the signal strength, i.e.\ $|u|=\Theta(\mu)$ and/or $|m|=\Theta(\mu)$—to scale at most as $O(\log d)$, in line with the analysis of Ben Arous \textit{et al} \cite{arous2021online} for IE $=2$. However this intuition is not always correct: depending on the choice of the activation function, the learning dynamics may fail to escape the correlated search phase because of the pre-training. We capture this phenomenon by providing a sharper characterization of $t_{\rm exit}$ that makes its dependence on the drift coefficients $A,B$ explicit. From now on, for sequences $a=a_d$ and $b=b_d$ we write $a\sim_d b$ if $\lim_{d\to\infty}a/b=1$. 

\begin{proposition}[Exit time from the correlated search phase]
\label{prop:escaping_mediocrity}
Let us call $\mu$ the signal strength and assume that $A \neq 0$. Then the time at which the dynamics escapes the correlated
search phase satisfies
\begin{equation}
\label{equ:exit_time}
    t_{\rm exit} \sim_{d} \frac{\tau(\mu)}{2} \log d, \ \ \tau(\mu)
  =
  \frac{-B+\sqrt{B^{2} + 4A^{2}}}{2A^{2}},
\end{equation}
with $A,B$ given in \eqref{equ:A_and_B} depend on the teacher and student activation functions
$\sigma(\cdot), \phi(\cdot)$ and $\mu$.
\end{proposition}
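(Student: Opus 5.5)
The plan is to solve the linear system \eqref{equ:dynamic_u_m_correlated_search_phase} explicitly and read off when the solution reaches scale $\mu$. Write $z=(u,m)^\top$, so that $\dot z = Mz$ with
\[
M=\begin{pmatrix} B & A\\ A & 0\end{pmatrix}.
\]
$M$ is symmetric, so it has real eigenvalues $\lambda_\pm=\tfrac{1}{2}\bigl(B\pm\sqrt{B^2+4A^2}\bigr)$. Their product is $\det M=-A^2<0$ whenever $A\neq 0$, so $\lambda_+>0>\lambda_-$. This is exactly where the hypothesis $A\neq0$ enters: it guarantees a strictly unstable direction, regardless of the sign of $B$. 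Note that
\[
\frac{1}{\lambda_+}=\frac{2}{B+\sqrt{B^2+4A^2}}=\frac{2\bigl(-B+\sqrt{B^2+4A^2}\bigr)}{4A^2}=\tau(\mu),
\]
so $\lambda_+=1/\tau(\mu)$. The target statement is therefore $t_{\rm exit}\sim_d \frac{\log d}{2\lambda_+}$.

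Next we decompose the initial condition along the orthonormal eigenvectors $v_\pm$:
\[
z(t)=c_+e^{\lambda_+ t}v_+ + c_- e^{\lambda_- t}v_-, \qquad c_\pm=\langle z(0),v_\pm\rangle .
\]
By the initialization assumption, $u(0),m(0)=\Theta(d^{-1/2})$. We will take as a genericity assumption (holding with high probability for a random $\omeg$, or for all but a measure-zero set of initial directions) that $|c_+|=\Theta(d^{-1/2})$. The eigenvector $v_+\propto(\lambda_+,A)$ has both components nonzero, because $A\neq0$ and $\lambda_+>0$. Hence both $u$ and $m$ grow like $|c_+|e^{\lambda_+ t}$, up to a decaying error of size $O(d^{-1/2})$. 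The exit condition $\max(|u|,|m|)=\Theta(\mu)$, with $\mu$ fixed and independent of $d$, then gives
\[
e^{\lambda_+ t_{\rm exit}}=\Theta\bigl(\mu\, d^{1/2}\bigr),
\]
that is,
\[
t_{\rm exit}=\frac{1}{\lambda_+}\Bigl(\tfrac12\log d+O(1)\Bigr).
\]
Dividing by $\tfrac{\tau(\mu)}{2}\log d$ and letting $d\to\infty$ yields $\sim_d$. The $O(1)$ term absorbs $\log\mu$, the $\Theta(\cdot)$ constants and $\log|c_+\sqrt d|$.

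The main obstacle is justifying that the linearized dynamics faithfully describe the true trajectory all the way up to $t_{\rm exit}$, where $u,m$ are no longer infinitesimal. I would handle this by stopping the linear analysis at a level $\epsilon\mu$ with $\epsilon$ small but fixed. The full drift equals $Mz+O(|z|^2)$, with $A$ and $B$ evaluated at the reference point (equivalently, with $r\approx\mu^2$). Using a Grönwall comparison, the nonlinear solution stays within a factor $1\pm O(\epsilon)$ of the linear one until it reaches $|z|=\epsilon\mu$, and this takes time $\frac{1}{\lambda_+}\bigl(\frac12\log d+O_\epsilon(1)\bigr)$. Going from $\epsilon\mu$ to $\Theta(\mu)$ takes an additional $O_\epsilon(1)$ time, because the drift there is of order one and nondegenerate. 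Both contributions are $o(\log d)$ apart from the leading term, so the asymptotic equivalence survives. Finally, the passage from SGD to the deterministic ODE is assumed from App.~\ref{app:evolution_of_order_parameters} in the small-$\gamma$ limit. One could add a remark that the stochastic fluctuations, of size $O(\sqrt{\gamma})$, do not alter the leading-order escape time provided $\gamma\ll d^{-1}$.
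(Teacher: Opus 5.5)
Your proposal is correct and follows essentially the same route as the paper: linearize at the origin, diagonalize $\begin{pmatrix} B & A\\ A & 0\end{pmatrix}$, use $A\neq 0$ to get $\lambda_+>0$, and read off $t_{\rm exit}=\frac{1}{\lambda_+}\bigl(\tfrac12\log d+O(1)\bigr)$ from an initial condition of size $d^{-1/2}$, with $\tau(\mu)=1/\lambda_+$. Your extra steps make explicit what the paper leaves implicit: checking $\det M=-A^2<0$ and $1/\lambda_+=\tau(\mu)$, requiring the unstable component $c_+$ to be of order $d^{-1/2}$ up to $d^{o(1)}$ factors (which is what holds with probability tending to one), and a Gr\"onwall control of the quadratic remainder up to level $\epsilon\mu$. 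Since $\epsilon\mu$ is already $\Theta(\mu)$, your final $O_\epsilon(1)$ step is unnecessary.
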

 
\begin{remark}
Prop. \ref{prop:escaping_mediocrity} shows that the exit time scales as $A^{-2}\log d$.
Consequently, small values of $A$ and $B<0$ substantially slow down the transition out of
the correlated search phase. In particular, if for some choice of teacher--student
activation functions and $\mu$ one has $A=0$, then the ratio $t_{\rm exit}/\log d $ diverges. This corresponds to a regime in which the dynamics fails to escape the correlated
search phase despite the presence of non-zero initial correlation. The proof is provided in App.~\ref{appendix:escaping_time}.
\end{remark}

\begin{figure}[t]
    \centering
    \includegraphics[width=\columnwidth]{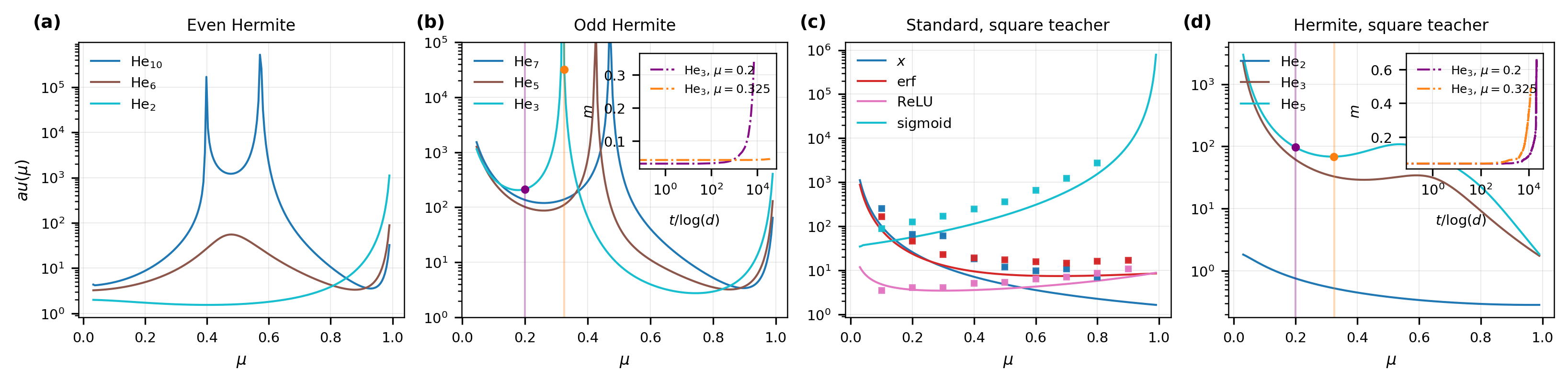}
    \vspace{-21pt}
 \caption{
Characteristic escape time $\tau(\mu)$ for exiting the correlated search phase. Solid lines show theoretical predictions and markers denote numerical estimates.
\textbf{(a)} Even Hermite activations in the matched teacher--student setting.
\textbf{(b)} Odd Hermite activations in the same setting. The vertical lines and colored dots mark $\mu\in\{0.2,0.325\}$, for which the inset shows the overlap dynamics of $\He_3$, illustrating the slowdown near the singular region.
\textbf{(c)} Standard activations when the teacher labels are squared before training the student, thereby modifying the information exponent of the loss.
\textbf{(d)} Hermite activations in the same squared-label setting. Squaring removes the singular behavior seen in panel~\textbf{(b)} and leads to a more regular dependence of $\tau(\mu)$ on $\mu$; the inset shows the corresponding $\He_3$ overlap dynamics at the same two values of $\mu$.
}
\label{fig:escaping_time_IE_1_matching_activation}
\end{figure}

\subsection{Well-specified activation function}

Although we argued that the IE is not enough to characterize the population loss, we nevertheless use it here as a guiding descriptor of the chosen activation function. Our conclusions should therefore be understood as statements about the specific activation considered, rather than about the entire class of functions sharing the same IE.
\paragraph{(a) Linear activation function.}
We first consider the linear case $\sigma(x)=\phi(x)=x$.
As expected, larger values of $\mu$ yield smaller initial test mean squared error,
reflecting stronger initial alignment with the teacher
(left panel of Fig.~\ref{fig:dynamics_linear_activation}).
However, a striking phenomenon emerges during training:
despite their worse initial performance, models initialized with smaller
signal strength $\mu$ converges to the global optimum significantly faster: highly pre-trained models (e.g.\ $\mu=0.9$) exhibit a pronounced
delay before convergence.
This behavior is clearly visible in the evolution of the overlap $m$
(middle panel of Fig.~\ref{fig:dynamics_linear_activation}), where models with larger $\mu$ remain trapped in the correlated
search phase for substantially longer times. Our theory accounts for this trade-off between initial, pre-trained performance and algorithmic tractability of the downstream LoRA fine-tuning. Since the characteristic time $\tau(\mu)$  determines the exit time
$t_{\rm exit}$ (Prop.~\ref{prop:escaping_mediocrity}), 
Fig.~\ref{fig:dynamics_linear_activation} (see right panel)
shows that $\tau(\mu)$ indeed increases monotonically with $\mu$. This slowdown is driven by the drift coefficient $A$ controlling the evolution
of $m$, which for linear activation scales as $A\propto 1-\mu$
(App.~\ref{app:dynamics_of_order_parameter_for_linear_activation}).
Consequently, stronger pre-trained alignment weakens the effective gradient signal along the teacher direction, delaying escape from the correlated search phase. We provide in App.~\ref{app:ODE_of_the_damp_harmonic_oxillator} a
physics-inspired explanation of this phenomenon.
A related phenomenon has been observed in the context of LLMs \cite{springer2025overtrained}, where excessive pre-training, corresponding to increasing the amount of pre-training data, is shown to impair downstream performance.

\paragraph{(b) Activation functions with IE $=1$.}
We next consider commonly used activations with $\mathrm{IE}=1$, including
$\mathrm{erf}$, sigmoid, and ReLU. Overall, the qualitative behavior observed
for the linear activation function largely persists: increasing the signal strength
$\mu$ typically leads to longer residence times in the correlated search phase.
For $\mathrm{erf}$ and sigmoid,
Fig.~\ref{fig:dynamics_linear_activation} (see right panel) shows that $\tau(\mu)$ governing escape grows monotonically with $\mu$. ReLU exhibits a mild deviation from this monotone trend at relatively small signal
strengths: as $\mu$ increases from $0$ to $0.2$, $\tau(\mu)$ decreases
slightly, suggesting that a weakly more informative pre-trained weight can
initially accelerate escape. However, once $\mu$ exceeds $0.2$, the behavior aligns with the other activations. A common feature across all these activation functions is the divergence of the $\tau(\mu)$ as $\mu\to 1^{-}$, implying that highly correlated pre-trained weights require substantially more gradient steps—and, in the online regime, substantially more fresh data—before the dynamics can leave the correlated search phase. Finally, for all IE $=1$ activations considered here, the escape-time curves appear continuous over $\mu\in(0,1)$: escape occurs for any fixed $\mu\in(0,1)$, but the associated time scale depends sensitively on both $\mu$ and the choice of activation function.

\paragraph{(c) Beyond IE $=1$.}
We now consider activation functions with $\mathrm{IE}\ge 2$, which in the standard, non-fine tuning case, are known to be harder to learn. As shown in~\cite{arous2021online, damian2023smoothing}, the duration of the search phase for such activations grows as a power of the IE. Our objective here is to assess how increasing the activation function complexity qualitatively alters the correlated search dynamics relative to the setting studied in~\cite{arous2021online}. To this end, we focus on pure Hermite activation functions—whose Hermite degree coincides with their IE—and distinguish between odd and even degrees, which exhibit qualitatively different behaviors.

\textbf{-- Odd Hermite activations.} For odd-degree Hermite activations, we have:

\begin{proposition}
\label{prop:singularity_pure_odd_Hermite}
Consider the well-specified setting $\sigma=\phi$ with pure
Hermite activation function $\sigma(z)=\He_k(z)$ of odd degree $k\ge 3$. Let $A(\mu)$ be as in \eqref{equ:A_and_B}. There exists at least one $\tilde{\mu}\in(0,1)$ such that $A(\tilde{\mu})=0$. Thus at $\mu=\tilde{\mu}$ the exit time
$t_{\rm exit}\sim_{d} (\tau(\tilde{\mu})\,\log d)/2$ diverges.
\end{proposition}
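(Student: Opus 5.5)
The plan is to recognise $A$ as a derivative of a one-variable function of $\mu$, and then get the zero from an intermediate-value argument on that function.

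\emph{Step 1: an intrinsic form of $A$.} I would not work with the series in \eqref{equ:A_and_B} directly. The reduced loss depends on $(u,m)$ only through $c:=\mu+um$ and $r=\mu^2+u^2+2\mu u m$. By the chain rule,
\[
\partial_m\mathcal L=u\,\bigl(\partial_c\mathcal L+2\mu\,\partial_r\mathcal L\bigr).
\]
Comparing with $\dot m=uA$ in \eqref{equ:dynamic_u_m_correlated_search_phase} gives $A=-(\partial_c+2\mu\,\partial_r)\mathcal L$, evaluated at $(c,r)=(\mu,\mu^2)$. At that point the student pre-activation is exactly $\lambda=\mu\lambda_\star$. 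Moreover, the direction $(\partial_c+2\mu\partial_r)$ is tangent to the curve $\varepsilon\mapsto(c,r)=(\mu+\varepsilon,(\mu+\varepsilon)^2)$, which corresponds to $\lambda=(\mu+\varepsilon)\lambda_\star$. Hence
\[
A(\mu)=-g'(\mu),\qquad g(\mu):=\mathbb E_{z\sim\mathcal N(0,1)}\bigl[(\He_k(z)-\He_k(\mu z))^2\bigr].
\]
I would check this identity term by term against \eqref{equ:A_and_B}, using the generalized Hermite coefficients of App.~\ref{app:Hermite_properties} at variance $r=\mu^2$. This bookkeeping is routine but is the step most likely to hide sign or normalisation slips; any positive multiplicative constant is irrelevant for locating zeros.

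\emph{Step 2: the shape of $g$.} I would expand
\[
g(\mu)=k!-2\,\mathbb E[\He_k(z)\He_k(\mu z)]+\mathbb E[\He_k(\mu z)^2].
\]
\begin{itemize}
\item Since $\He_k(\mu z)$ is a polynomial of degree $k$ in $z$ with leading coefficient $\mu^k$, orthogonality gives $\mathbb E[\He_k(z)\He_k(\mu z)]=k!\,\mu^k$.
\item For odd $k$ we have $\He_k(0)=0$ and $\He_k'(0)=k\,\He_{k-1}(0)\neq0$. Therefore $\mathbb E[\He_k(\mu z)^2]=\He_k'(0)^2\mu^2+O(\mu^4)$.
\end{itemize}
Because $k\ge3$, the cross term is $O(\mu^3)$, so
\[
g(\mu)=k!+\He_k'(0)^2\mu^2+O(\mu^3).
\]
In particular $g'(\mu)>0$ for all small $\mu>0$. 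On the other hand, $g$ is a polynomial with $g(1)=0<k!=g(0)$. By the mean value theorem there is a point in $(0,1)$ where $g'<0$. Since $g'$ is continuous, it has a zero $\tilde\mu\in(0,1)$, and so $A(\tilde\mu)=0$. The hypothesis $k\ge3$ is used exactly here: for $k=1$ the cross term is of order $\mu$ and would dominate.

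\emph{Step 3: divergence of the exit time.} At $\mu=\tilde\mu$ the linearised system \eqref{equ:dynamic_u_m_correlated_search_phase} decouples into $\dot m=0$ and $\dot u=Bu$. Hence $m$ stays at its initial scale $d^{-1/2}$ for all times of order $\log d$, so no finite $\tau$ governs escape. Equivalently, one can let $\mu\to\tilde\mu$ in \eqref{equ:exit_time}: when $B(\tilde\mu)<0$ we get $\tau\sim -B/A^2\to\infty$.

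I expect the main obstacle to be this last interpretation. If $B(\tilde\mu)>0$, the formula gives a finite limit $1/B$, coming from growth of $u$ alone. I would therefore either verify $B(\tilde\mu)<0$ numerically or analytically for pure odd Hermite activations, or argue that escape in the sense relevant for recovery requires $m$ to reach $\Theta(\mu)$, which is impossible while $A=0$. The identification in Step 1 is the second point where care is needed.
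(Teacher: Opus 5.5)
Your proof is correct, and it reaches $A(\tilde\mu)=0$ by a genuinely different route from the paper.

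\textbf{The paper's route.} The paper never interprets $A$. It computes the generalized Hermite coefficients of $\He_k$ at variance $r=\mu^2$. After rescaling by $\delta=(k!\,k)^{-1}$, this gives the closed form $A=-\bigl[\mu^{k-1}(\mu^k-1)+\tfrac{k!}{k}\mu^{-1}f(k,\mu)\bigr]$, with $f$ an explicit finite sum. It reads off $A<0$ near $0$ from the lowest power of $\mu$ in $f$. It shows $A>0$ on all of $(\sqrt{1-2/k},1)$, because every summand of $f$ is negative there. The intermediate value theorem then finishes.

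\textbf{Your route.} Your identification $A=-\tfrac{\delta}{2}\,g'(\mu)$ is exact and consistent with the paper's formulas. For $k=3$ both give $A=-\tfrac12\mu\,(5\mu^4-4\mu^2-2\mu+1)$, whose root $\approx 0.32$ is the singularity seen in the figures. There is no regularity issue at the degenerate covariance $r=c^2$. Indeed, $F(c,r)=\tfrac12\bigl[k!+Q(r)\bigr]-k!\,c^k$ with $Q(r)=\mathbb E_{\mathcal N(0,r)}[\He_k^2]$ a polynomial.

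\textbf{What each route buys.}
\begin{itemize}
\item Your argument avoids all coefficient bookkeeping.
\item It explains the phenomenon: singular values of $\mu$ are exactly the interior critical points of the pre-trained loss along the ray $\lambda=\mu\lambda_\star$.
\item It applies verbatim to any matched activation for which $\mu\mapsto\mathbb E[(\sigma(z)-\sigma(\mu z))^2]$ is not monotone on $(0,1)$. The linear case, $g=(1-\mu)^2$, correctly has no zero.
\item The paper's route, in exchange, localizes the zero in $(0,\sqrt{1-2/k}]$.
\item It also produces closed forms of $A$ and $B$ that are reused in Prop.~\ref{prop:exit_time_mu_to_1_pure_Hermite}.
\end{itemize}

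\textbf{Your Step 3 concern.} The concern is legitimate. The paper's proof stops at $A(\tilde\mu)=0$ and never checks the sign of $B$, yet $B(\tilde\mu)>0$ would give the finite limit $\tau\to 1/B$. Your Step 1 settles this at no cost.
\begin{itemize}
\item The same chain rule gives $B=-2\delta\,\partial_rF$ and $A=-\delta(\partial_cF+2\mu\,\partial_rF)$.
\item Hence $A=\mu B-\delta\,\partial_cF=\mu B+\delta\,k!\,k\,\mu^{k-1}$. In the paper's normalization this reads $A=\mu B+\mu^{k-1}$, which can also be read off its explicit formulas.
\item At any zero of $A$ this forces $B(\tilde\mu)=-\tilde\mu^{\,k-2}<0$.
\item So $\tau(\mu)\sim -B/A^2\to\infty$ as $\mu\to\tilde\mu$.
\item At $\mu=\tilde\mu$ itself, the linearized flow $\dot m=0$, $\dot u=Bu$ never leaves the $O(d^{-1/2})$ neighbourhood.
\end{itemize}
Neither a numerical check nor a change of the exit criterion is needed.
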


Prop.~\ref{prop:singularity_pure_odd_Hermite} shows that, in contrast to all activation functions considered so far, for pure odd Hermite activations there exist at least one nonzero value of the pre-training alignment $\tilde{\mu}$ for which the learner cannot escape the correlated search phase despite the initial
correlation. The proof is in App.~\ref{appendix:proof_proposition_Hermite}
Using Prop.~\ref{prop:escaping_mediocrity}, we can further refine the location of the singularity guaranteed by Prop.~\ref{prop:singularity_pure_odd_Hermite}. Indeed, in panel (b) of Figure~\ref{fig:escaping_time_IE_1_matching_activation}, we observe the divergence of $t_{\rm exit}$ for pure odd Hermite activation functions. This critical value shifts to the right as the degree of the Hermite polynomial increases. On both sides of this singularity, $t_{\rm exit}$ exhibits a parabola-like behavior, with again divergences as $\mu\to 0^+$ and $\mu\to 1^-$. Interestingly, as $\mu\to 1^-$, higher Hermite degrees yield a faster escape than lower ones, a counterintuitive behavior given the findings of \cite{arous2021online}. This qualitative behavior is reversed in the regime $\mu\to 0^+$, which agrees with Ben Arous \textit{et al} \cite{arous2021online}. Prop.~\ref{prop:exit_time_mu_to_1_pure_Hermite} provides quantitative results for these extremes. In the inset of Fig.~\ref{fig:escaping_time_IE_1_matching_activation} (see panel (c)), we provide numerical results for Hermite $\He_3$ showing the dynamic of $m$ across epoch for $\mu\in\{0.2, 0.325\}$. This confirms that the dynamics indeed take a significantly longer time for $\mu=0.325$, which is close to the singularity, compared to $\mu=0.2$. Further details on other observables can be found in App.~\ref{app:numerics_evidence_singularity}.

\textbf{-- Even Hermite activation functions.}
Using Prop.~\ref{prop:escaping_mediocrity}, we observe that for even Hermite
degrees that are sufficiently large, two singularities appear.
Indeed, as shown in panel (b) of Fig.~\ref{fig:escaping_time_IE_1_matching_activation},
where $\tau(\mu)$ is reported for pure even Hermite activation functions,
no singularity is observed for $\He_2$ and $\He_6$, whereas clear singularities
emerge for $\He_{10}$. The behavior as $\mu\to 1^{-}$ is qualitatively similar to that observed in the odd Hermite case. A distinct regime emerges, however, as $\mu\to 0^{+}$: $\tau(\mu)$ remains finite across all degrees, a fact for which Prop.~\ref{prop:exit_time_mu_to_1_pure_Hermite} provides a mathematical guarantee. This behavior is the result of the presence of a nonzero mean on the model—suggesting that the mean is easy to learn even for vanishing $\mu$—induced by such activations, whose magnitude increases with the Hermite degree but vanishes in the limit $\mu \to 1$; see App.~\ref{app:mean_even_Hermite} for a formal analysis.  Numerically (cf. App.~\ref{app:numerics_bias_even_Hermite}), we further observe
that for moderate signal strengths $\mu \le 0.2$, the dynamics exit the correlated search phase rapidly, in agreement with theoretical predictions, but subsequently enter an extended plateau in the evolution of $u$, while the overlap $m$ continues to grow.  Once $u$ leaves this plateau, convergence toward the global minimizer becomes rapid. This intermediate stagnation regime is specific to even Hermite activations and suggests that the mean component is learned quickly, whereas higher-order features are learned much later. Since the model has zero mean when using odd Hermite activations, this behavior is not observed in that case; supporting numerical evidence is reported in App~\ref{app:numerics_bias_even_Hermite}.

\textbf{-- Pure Hermite activations in extreme pre-training regimes.}
We now consider the extreme cases in which the pre-trained weights are either highly aligned or extremely weakly aligned with the target direction. For functions $a=a(\mu)$ and $b=b(\mu)$, $a \sim_\mu b$ means $\lim_{\mu\to\mu_0} a/b=1$.

\begin{proposition}
\label{prop:exit_time_mu_to_1_pure_Hermite}
Consider the matching teacher--student setting $\sigma=\phi$ with a pure Hermite activation function of degree $k\ge 1$, $\sigma(z)=\He_k(z)$ with $k\ge 3$. Let $\tau(\mu)$ 
be the characteristic prefactor of $t_{\rm exit}$ appearing in the scaling
\eqref{equ:exit_time} of Prop.~\ref{prop:escaping_mediocrity}.
Then the following asymptotics hold:
\begin{itemize}
    \item With strong alignment $\mu\to 1^{-}$, $$\tau(\mu) \sim_{\mu}  \frac{4}{(2k -1)^2}\frac{1}{(\mu^2-1)^2}.$$
    \item With weak alignment $\mu\to 0^{+}$,
    $$
    \tau(\mu) \sim_{\mu}
    \begin{cases}
         B^{-1}_0, & \text{if } k=2p \text{ with } p\in\mathbb{N},\; p\ge 1,\\[4pt]
        \displaystyle -\frac{B_0}{c_p\mu^2}, & \text{if } k=2p+1 \text{ with } p\in\mathbb{N},\; p\ge 1,
    \end{cases}
    $$
    where $B_{0}=\lim_{\mu\to 0^{+}} B(\mu)$ is finite (recall \eqref{equ:A_and_B}) and $c^{-1}_p=p!(p-1)!2^{2p-1}/(2p-1)!$.
\end{itemize}
\end{proposition}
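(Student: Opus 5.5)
The starting point is the closed form $\tau(\mu)=\bigl(-B+\sqrt{B^2+4A^2}\bigr)/(2A^2)$ from Prop.~\ref{prop:escaping_mediocrity}. Everything then reduces to finding the leading-order behaviour of $A(\mu)$ and $B(\mu)$ in \eqref{equ:A_and_B} as $\mu\to1^-$ and as $\mu\to0^+$, with $\sigma=\phi=\He_k$. Two elementary expansions of $\tau$ will be used throughout.
\begin{itemize}
\item If $A^2\ll B^2$ and $B>0$, then $\tau\approx 1/B$.
\item If $A^2\ll B^2$ and $B<0$, then $\tau\approx -B/A^2$.
\item If $A$ and $B$ are both small, then $\tau$ blows up at a rate fixed by the relative sizes of $A^2$ and $B^2$.
\end{itemize}
To obtain $A$ and $B$ explicitly, I would evaluate the generalized Hermite coefficients of $\He_k$ at variance $r$. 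At the search-phase point $u=m=0$ we have $r=\mu^2$, so this means expanding $\He_k(z)$ in $\He^{[\mu^2]}_j(z)$. The standard scaling identity $\He_k(z)=\sum_j \binom{k}{2j}\frac{(2j)!}{2^j j!}(a-1)^j\He^{[a]}_{k-2j}(z)$ (up to normalization, App.~\ref{app:Hermite_properties}) gives nonzero $\sigma^{[\mu^2]}_j$ only for $j\equiv k \pmod 2$, with coefficients polynomial in $(\mu^2-1)$. The companion coefficients $\bar\sigma^{[r]}_j$ from App.~\ref{app:gradient_population_risk}, which come from differentiating in $r$, are obtained in the same way. Substituting these into \eqref{equ:A_and_B} turns $A(\mu)$ and $B(\mu)$ into finite rational functions of $\mu$, because the sums truncate at $j\le k$.

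\textbf{Regime $\mu\to1^-$.} Set $\epsilon=1-\mu^2$. When $\epsilon=0$ the student equals the teacher, so the point is a global minimum. Hence the gradient terms vanish there, and $A\to0$ and $B\to0$. I would expand both to leading order in $\epsilon$. The term $-\phi_k+\sigma^{[r]}_k/\mu^k$ in $A$ should be $O(\epsilon)$, and $B$ should also be $O(\epsilon)$, with a coefficient of the form $(2k-1)/2$ up to sign and normalization. I expect $A=O(\epsilon)$ as well, which leaves two cases for the prefactor: either $\tau\approx 1/B$, giving a $1/\epsilon$ blow-up, or $\tau$ is of order $1/A^2$. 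The target formula $4/((2k-1)^2\epsilon^2)$ suggests that the relevant combination is effectively $\tau\sim 1/B^2$-type. This would happen if $A\sim |B|$ and $B<0$, so that $\tau\approx(|B|+\sqrt{B^2+4A^2})/(2A^2)$ collapses to a constant times $\epsilon^{-2}$. Pinning down the exact leading constants of $A$ and $B$ in $\epsilon$, signs included, is the main computational obstacle. I would do it by carefully tracking only the $j=k$ and $j=k-2$ Hermite modes, since those are the only ones that contribute at first order in $\epsilon$.

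\textbf{Regime $\mu\to0^+$.} Here the factors $\mu^{-j-1}$ and $\mu^{-2j-2}$ are singular, but the variance-$\mu^2$ coefficients $\sigma^{[\mu^2]}_j$ carry compensating powers $\mu^{2j}$. The plan is to show that $B(\mu)\to B_0$, finite and nonzero, with the sign fixed by the parity of $k$.

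For even $k=2p$, the mode $j=0$ is a nonzero mean $\sigma_0=\mathbb{E}\He_{2p}(\mu z)\to(-1)^p(2p-1)!!$. This mode dominates, making $B_0>0$ while $A=O(\mu)$. The first expansion above then gives $\tau\to B_0^{-1}$.

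For odd $k=2p+1$, the lowest mode is $j=1$. Its coefficient is $\sigma_1\propto \mu^2\cdot(2p+1)!!$-type times $(\mu^2-1)^p$, which yields $A^2\sim c_p\mu^2$ with $c_p$ equal to the squared combinatorial factor $\bigl((2p+1)!/(2^p p!)\bigr)^2$ renormalized. I would verify that this matches $c_p^{-1}=p!(p-1)!2^{2p-1}/(2p-1)!$. Together with $B_0<0$, the second expansion above gives $\tau\approx -B_0/(c_p\mu^2)$.

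The only genuinely delicate points are the exact constants and the sign of $B_0$ in each parity. I would double-check both against the numerics in Fig.~\ref{fig:escaping_time_IE_1_matching_activation}.
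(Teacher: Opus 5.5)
Your overall route matches the paper's. You compute the variance-$\mu^2$ Hermite coefficients of $\mathrm{He}_k$ and obtain closed forms for $A$ and $B$; after rescaling by $\delta=(k\,k!)^{-1}$ the paper gets $A=-[\mu^{k-1}(\mu^k-1)+\tfrac{k!}{k}\mu^{-1}f]$ and $B=-[\mu^{2k-2}+\tfrac{k!}{k}\mu^{-2}f]$. You then expand $\tau=(-B+\sqrt{B^2+4A^2})/(2A^2)$. Your $\mu\to0^+$ structure is also right. For even $k$, the $j=0$ mean mode gives $B_0>0$ and $A=O(\mu)$, so $\tau\to B_0^{-1}$. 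For odd $k$, the $j=1$ mode gives $B_0<0$ and $A=O(\mu)$, so $\tau\approx-B_0/A^2$.

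\textbf{The gap is at $\mu\to1^-$.} You claim $A\to0$ \emph{and} $B\to0$ because the gradient vanishes at the global minimum; this is false for $B$. The gradient at $u=m=0$ vanishes for every $\mu$, and $A$ and $B$ are second derivatives there. Indeed $A=-\delta\,\partial_u\partial_m\mathcal{L}|_0=-\delta\,\mathcal{L}_{\rm pre}'(\mu)$, where $\mathcal{L}_{\rm pre}(\mu):=\mathcal{L}(0,m)$ is the pre-trained loss, so $A(1)=0$ by optimality. But $B=-\delta\,\partial_u^2\mathcal{L}|_0$ is the curvature transverse to the teacher. At $\mu=1$ it equals $-\delta\,\mathbb{E}[\sigma'(z)^2]=-\delta\,k\,k!=-1$; in the closed form, the $j=k$ mode contributes $-\mu^{2k-2}\to-1$.

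Your fallback mechanism cannot produce the target either: if $A$ and $|B|$ were both $\Theta(\epsilon)$, then $\tau=\Theta(\epsilon^{-1})$, not $\Theta(\epsilon^{-2})$. What actually happens is your own second bullet. We have $B\to-1$ and $A=-\tfrac{2k-1}{2}(\mu^2-1)+O((\mu^2-1)^2)$. The $\tfrac{k}{2}$ comes from the $j=k$ mode via $\mu^k-1$, and the $\tfrac{k-1}{2}$ from the $j=k-2$ mode; equivalently $\delta\,\mathcal{L}_{\rm pre}''(1)=\delta\,\mathbb{E}[\sigma'(z)^2z^2]=2k-1$. Hence $A^2\ll B^2$ with $B<0$, and $\tau\approx-B/A^2\sim 4/((2k-1)^2(\mu^2-1)^2)$. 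The factor $(2k-1)/2$ you attached to $B$ belongs to $A$.

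\textbf{Your promised check of the odd-$k$ constant will also fail.} The leading part of $A$ is $-\delta\,\bar\sigma^{[r]}_1\sigma^{[r]}_1/\mu^3$. This gives $A\sim-c_p\mu$ with exactly the stated $c_p$, so $A^2\sim c_p^2\mu^2$, not $c_p\mu^2$. The same mode gives $B_0=-c_p$, hence $\tau\sim-B_0/A_0^2=1/(c_p\mu^2)$.

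For $k=3$ one gets $B=-\tfrac12(5\mu^4-4\mu^2+1)$ and $A=-(\tfrac52\mu^5-2\mu^3-\mu^2+\tfrac12\mu)$, so $\mu^2\tau(\mu)\to2$. The displayed formula instead gives $-B_0/(c_1\mu^2)=\mu^{-2}$. So the stated odd-$k$ constant is not what the computation yields. The paper's proof correctly reaches $\tau\approx-B_0/A_0^2+1/|B_0|$ but mis-evaluates it in its last line. You should state the result as $\tau\sim-B_0/A_0^2$ with $A_0=-c_p\mu$, rather than forcing agreement with the displayed expression.
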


This result provides fine-grained analysis of observations made in Fig.~\ref{fig:escaping_time_IE_1_matching_activation} (see panels (b) and (c)). For $\mu$ near one, higher-degree Hermite activation functions can escape the correlated search phase faster than lower one despite having a larger IE (and thus being harder to learn in the standard setting). In contrast, near zero, odd and even degrees behave differently, with odd degrees exhibiting a pronounced slowdown (see numerics in Fig.~\ref{fig:dynamics_Hermite_activation_appendix} of App.~\ref{app:numerics_bias_even_Hermite}). Taken together, these scalings clarify how the interplay between pre-training strength and activation function complexity reshapes the search dynamics. The proof of Prop. \ref{prop:exit_time_mu_to_1_pure_Hermite}  is done in App.~\ref{appendix:proof_proposition_Hermite}.

\subsection{Misspecified activation functions}
As discussed in the introduction, when ${\rm IE} > 2$, there exists a fundamental gap between the sample complexity of one-pass SGD and that of optimal algorithms for weakly learning single-index models. This gap stems from the fact that, at initialization, SGD gradients are both independent and weak. In particular, one-pass SGD updates can be viewed as CSQ queries of the form $\mathbb{E}[y,\varphi(x)]$, for which a general lower bound of $n \gtrsim O(d^{-{\rm IE}/2})$ holds \cite{damian2023smoothing}. By contrast, iterative algorithms such as AMP \cite{barbier2019optimal} or full-batch gradient descent \cite{dandi2024benefits}, which typically achieve weak recovery with $n=\tilde{O}(d)$ samples, effectively implement more expressive queries of the form $\mathbb{E}[\psi(y)\varphi(x)]$. These are known as SQ queries and allow for arbitrary transformations of the labels.\footnote{The sample complexity lower bound for SQ algorithms $n \gtrsim \tilde{O}(d^{{\rm GE}/2})$ is instead governed by the so-called \emph{generative exponent}. For most common activation functions $\phi$, one has ${\rm GE}=2$ \cite{damian2024computational}.}

Therefore, it is natural to ask whether label transformations can speed up the convergence of fine-tuning.  Note that transformations of the labels effectively correspond to introducing a mismatch between the target and model activation functions. A simple transformation that achieves this goal is label squaring. As shown in App.~\ref{app:squaring_pure_Hermite}, squaring a pure Hermite polynomial yields a function whose lowest nonzero Hermite component is $\He_2$, thereby reducing the effective IE to be at most $2$. We emphasize that such transformations are not intended to improve the final test mean squared error, but rather to reshape the optimization landscape during the early stage of learning; similar strategies have been employed in related contexts \cite{damian2024computational}. Accordingly, once the search phase is exited, we revert to the original labels in order to fully recover the target direction. This resulting two-stage learning strategy—using transformed, or ``softened'', labels early and switching back to hard labels thereafter—is widely used in practice, notably in semi-supervised and self-training methods such as FixMatch~\cite{sohn2020fixmatch}, as well as in curriculum learning approaches~\cite{bengio2009curriculum}.

Beyond pure Hermite activations, we also investigate how this transformation affects the phenomenology observed so far for activation functions with $\mathrm{IE}=1$. In the panel (c) of Fig.~\ref{fig:escaping_time_IE_1_matching_activation}, we observe that after applying this transformation to the labels generated by activation functions such as the linear $x$, ReLU, and $\textrm{erf}$, the behavior in the correlated search phase is qualitatively \emph{reversed} compared to the matching-activation setting of Fig.~\ref{fig:dynamics_linear_activation} (see right panel). In particular, for $x$ and $\textrm{erf}$, stronger alignment of the pre-trained weight with the target direction (larger $\mu$) now leads to \emph{faster} escape from the correlated search phase, in sharp contrast with the matching-activation case. Moreover, for $\textit{x}$, ReLU, and $\textrm{erf}$, the exit time no longer diverges as $\mu\to 1^{-}$, unlike what was observed previously. For the sigmoid activation, squaring the labels has little effect on $\tau(\mu)$, which is expected given that this transformation induces only a mild change in the effective shape of the activation function. Panel (d) of Fig.~\ref{fig:escaping_time_IE_1_matching_activation} shows that the general trend observed for $\mathrm{IE}\ge 2$ activation functions under this transformation is an \emph{overall decreasing} behavior of $\tau(\mu)$ as $\mu$ increases. Most notably, for odd Hermite activation functions, the singularities previously observed in the matching case disappear after squaring the labels, indicating that this transformation smooths the optimization landscape and restores efficient escape from the correlated search phase. Additional numerical evidence demonstrating how these observations can be leveraged to accelerate learning—and, in particular, to bypass singular values of $\mu$—is provided in the App.~\ref{app:numerics_use_square_label}.

\paragraph{Transformer LoRA experiments.}
To test whether the mechanism isolated by the teacher--student analysis also appears in more standard fine-tuning pipelines, we run controlled vision-transformer LoRA experiments. A small ViT is pretrained on a source classification task using AdamW. Checkpoints are extracted along the pretraining trajectory, after which the backbone is frozen and LoRA modules on the attention projections are trained on the downstream task. When the label space differs between source and downstream tasks, the classification head is reinitialized and trained jointly with the LoRA parameters. In the main text we consider EMNIST Letters$\to$FashionMNIST \cite{cohen2017emnist,xiao2017fashion}, which changes both input distribution and label space, and CIFAR-10$\to$SVHN \cite{krizhevsky2009learning,goodfellow2013multi}, a natural-image transfer task. Table~\ref{tab:transformer-lora-transfer} reports seed-matched results at fixed fine-tuning budgets and shows the same qualitative decoupling predicted by the theory: later checkpoints can have higher source accuracy while adapting worse downstream under the constrained LoRA update. Additional EMNIST$\to$shifted-EMNIST \cite{cohen2017emnist} experiments and full downstream loss/accuracy dynamics are reported in App.~\ref{app:vit-lora-experiments}. Simulations used an NVIDIA A100 for 25 GPU-hours.

\begin{table}[t]
\centering
\caption{
Source accuracy and downstream LoRA adaptation at fixed fine-tuning budgets. Entries are mean $\pm$ std over seed-matched runs. Bold downstream entries indicate the best downstream performance within each task block. Bold source entries highlight the source checkpoint comparison.
}
\begin{tabular}{llccc}
\toprule
Task & Source ckpt. & Source acc. & FT ep. 20 & FT ep. 100 \\
\midrule
EMNIST$\to$FashionMNIST & 002 & 87.52 $\pm$ 0.83 & 77.88 $\pm$ 0.13 & 78.97 $\pm$ 0.19 \\
EMNIST$\to$FashionMNIST & 005 & 91.13 $\pm$ 0.58 & \textbf{79.64 $\pm$ 0.43} & \textbf{80.24 $\pm$ 0.30} \\
EMNIST$\to$FashionMNIST & 100 & \textbf{93.30 $\pm$ 0.24} & 76.91 $\pm$ 0.65 & 77.49 $\pm$ 0.38 \\
\midrule CIFAR-10$\to$SVHN & 050 & 73.10 $\pm$ 0.00 & \textbf{27.38 $\pm$ 1.26} & \textbf{40.33 $\pm$ 1.05} \\
CIFAR-10$\to$SVHN & 100 & 75.26 $\pm$ 0.00 & 26.52 $\pm$ 1.37 & 37.46 $\pm$ 1.64 \\
CIFAR-10$\to$SVHN & 200 & \textbf{76.02 $\pm$ 0.00} & 23.94 $\pm$ 1.46 & 35.63 $\pm$ 1.62 \\
\bottomrule
\end{tabular}
\label{tab:transformer-lora-transfer}
\end{table}

\paragraph{Discussion:}
Overall, our results show that the pretrained signal can hinder, rather than accelerate, LoRA adaptation, with a physics-based explanation provided in App.~\ref{app:ODE_of_the_damp_harmonic_oxillator}. App.~\ref{role_of_the_rank_versus_signal} and Fig.~\ref{fig:dynamics_activation_two_layers_hermite2} extend the picture to two-layer networks, where the pretrained weight controls the search phase while rank mainly affects the final loss. Transformer experiments support the same qualitative mechanism beyond the solvable model, while a theory of attention-based LoRA dynamics is left for future work.


\section*{Acknowledgements}
We thank Kimia Nadjahi for insightful discussions. G.N. thanks Mauro Pastore for extensive discussions and insightful feedback throughout the development of this work. B.L. was supported by the French government, managed by the National Research Agency (ANR), under the France 2030 program with the project references ``ANR-23-IACL-0008'' (PR[AI]RIE-PSAI) and ``ANR-25-CE23-5660'' (MAPLE), as well as the Choose France - CNRS AI Rising Talents program. J.B. was funded by the European Union (ERC, CHORAL, project number 101039794). Views and opinions expressed are however those of the authors only and do not necessarily reflect those of the European Union or the European Research Council. Neither the European Union nor the granting authority can be held responsible for them.

\footnotesize{
\bibliography{references.bib}} 

@article{hu2022lora,
  title={Lora: Low-rank adaptation of large language models.},
  author={Hu, Edward J and Shen, Yelong and Wallis, Phillip and Allen-Zhu, Zeyuan and Li, Yuanzhi and Wang, Shean and Wang, Lu and Chen, Weizhu and others},
  journal={ICLR},
  volume={1},
  number={2},
  pages={3},
  year={2022}
}

@article{dettmers2023qlora,
  title={Qlora: Efficient finetuning of quantized llms},
  author={Dettmers, Tim and Pagnoni, Artidoro and Holtzman, Ari and Zettlemoyer, Luke},
  journal={Advances in neural information processing systems},
  volume={36},
  pages={10088--10115},
  year={2023}
}

@inproceedings{
isik2025scaling,
title={Scaling Laws for Downstream Task Performance in Machine Translation},
author={Berivan Isik and Natalia Ponomareva and Hussein Hazimeh and Dimitris Paparas and Sergei Vassilvitskii and Sanmi Koyejo},
booktitle={The Thirteenth International Conference on Learning Representations},
year={2025},
url={https://openreview.net/forum?id=vPOMTkmSiu}
}

@article{barbier2025generalization,
  title={Generalization performance of narrow one-hidden layer networks in the teacher-student setting},
  author={Barbier, Jean and Gerace, Federica and Ingrosso, Alessandro and Lauditi, Clarissa and Malatesta, Enrico M and Nwemadji, Gibbs and Ortiz, Rodrigo P{\'e}rez},
  journal={arXiv preprint arXiv:2507.00629},
  year={2025}
}

@article{gerace2024gaussian,
  title={Gaussian universality of perceptrons with random labels},
  author={Gerace, Federica and Krzakala, Florent and Loureiro, Bruno and Stephan, Ludovic and Zdeborov{\'a}, Lenka},
  journal={Physical Review E},
  volume={109},
  number={3},
  pages={034305},
  year={2024},
  publisher={APS}
}

@article{arnaboldi2024repetita,
  title={Repetita iuvant: Data repetition allows sgd to learn high-dimensional multi-index functions},
  author={Arnaboldi, Luca and Dandi, Yatin and Krzakala, Florent and Pesce, Luca and Stephan, Ludovic},
  journal={arXiv preprint arXiv:2405.15459},
  year={2024}
}

@article{hayou2024impact,
  title={The impact of initialization on lora finetuning dynamics},
  author={Hayou, Soufiane and Ghosh, Nikhil and Yu, Bin},
  journal={Advances in Neural Information Processing Systems},
  volume={37},
  pages={117015--117040},
  year={2024}
}

@article{yang2024low,
  title={Low-rank adaptation for foundation models: A comprehensive review},
  author={Yang, Menglin and Chen, Jialin and Tao, Jinkai and Zhang, Yifei and Liu, Jiahong and Zhang, Jiasheng and Ma, Qiyao and Verma, Harshit and Zhang, Regina and Zhou, Min and others},
  journal={arXiv preprint arXiv:2501.00365},
  year={2024}
}

@article{zeng2023expressive,
  title={The expressive power of low-rank adaptation},
  author={Zeng, Yuchen and Lee, Kangwook},
  journal={arXiv preprint arXiv:2310.17513},
  year={2023}
}

@article{mu2025convergence,
  title={On the Convergence Rate of LoRA Gradient Descent},
  author={Mu, Siqiao and Klabjan, Diego},
  journal={arXiv preprint arXiv:2512.18248},
  year={2025}
}

@article{kratsios2025sharp,
  title={Sharp Generalization Bounds for Foundation Models with Asymmetric Randomized Low-Rank Adapters},
  author={Kratsios, Anastasis and Cheng, Tin Sum and Lucchi, Aurelien and Borde, Haitz S{\'a}ez de Oc{\'a}riz},
  journal={arXiv preprint arXiv:2506.14530},
  year={2025}
}

@article{liang2025does,
  title={Does Low Rank Adaptation Lead to Lower Robustness against Training-Time Attacks?},
  author={Liang, Zi and Hu, Haibo and Ye, Qingqing and Xiao, Yaxin and Li, Ronghua},
  journal={arXiv preprint arXiv:2505.12871},
  year={2025}
}

@article{kim2025lora,
  title={LoRA Training Provably Converges to a Low-Rank Global Minimum or It Fails Loudly (But it Probably Won't Fail)},
  author={Kim, Junsu and Kim, Jaeyeon and Ryu, Ernest K},
  journal={arXiv preprint arXiv:2502.09376},
  year={2025}
}

@article{xu2025understanding,
  title={Understanding the Learning Dynamics of LoRA: A Gradient Flow Perspective on Low-Rank Adaptation in Matrix Factorization},
  author={Xu, Ziqing and Min, Hancheng and MacDonald, Lachlan Ewen and Luo, Jinqi and Tarmoun, Salma and Mallada, Enrique and Vidal, Ren{\'e}},
  journal={arXiv preprint arXiv:2503.06982},
  year={2025}
}

@article{zhang2025lora,
  title={LoRA-One: One-Step Full Gradient Could Suffice for Fine-Tuning Large Language Models, Provably and Efficiently},
  author={Zhang, Yuanhe and Liu, Fanghui and Chen, Yudong},
  journal={arXiv preprint arXiv:2502.01235},
  year={2025}
}

@article{mao2025survey,
  title={A survey on lora of large language models},
  author={Mao, Yuren and Ge, Yuhang and Fan, Yijiang and Xu, Wenyi and Mi, Yu and Hu, Zhonghao and Gao, Yunjun},
  journal={Frontiers of Computer Science},
  volume={19},
  number={7},
  pages={197605},
  year={2025},
  publisher={Springer}
}

@inproceedings{houlsby2019parameter,
  title={Parameter-efficient transfer learning for NLP},
  author={Houlsby, Neil and Giurgiu, Andrei and Jastrzebski, Stanislaw and Morrone, Bruna and De Laroussilhe, Quentin and Gesmundo, Andrea and Attariyan, Mona and Gelly, Sylvain},
  booktitle={International conference on machine learning},
  pages={2790--2799},
  year={2019},
  organization={PMLR}
}

@article{barbier2019optimal,
  title={Optimal errors and phase transitions in high-dimensional generalized linear models},
  author={Barbier, Jean and Krzakala, Florent and Macris, Nicolas and Miolane, L{\'e}o and Zdeborov{\'a}, Lenka},
  journal={Proceedings of the National Academy of Sciences},
  volume={116},
  number={12},
  pages={5451--5460},
  year={2019},
  publisher={National Academy of Sciences}
}

@article{gardner1989three,
  title={Three unfinished works on the optimal storage capacity of networks},
  author={Gardner, Elizabeth and Derrida, Bernard},
  journal={Journal of Physics A: Mathematical and General},
  volume={22},
  number={12},
  pages={1983},
  year={1989},
  publisher={IOP Publishing}
}

@inproceedings{kalai2009isotron,
  title={The Isotron Algorithm: High-Dimensional Isotonic Regression.},
  author={Kalai, Adam Tauman and Sastry, Ravi},
  booktitle={COLT},
  volume={1},
  pages={9},
  year={2009}
}

@article{han2021pre,
  title={Pre-trained models: Past, present and future},
  author={Han, Xu and Zhang, Zhengyan and Ding, Ning and Gu, Yuxian and Liu, Xiao and Huo, Yuqi and Qiu, Jiezhong and Yao, Yuan and Zhang, Ao and Zhang, Liang and others},
  journal={Ai Open},
  volume={2},
  pages={225--250},
  year={2021},
  publisher={Elsevier}
}

@article{qiu2020pre,
  title={Pre-trained models for natural language processing: A survey},
  author={Qiu, Xipeng and Sun, Tianxiang and Xu, Yige and Shao, Yunfan and Dai, Ning and Huang, Xuanjing},
  journal={Science China technological sciences},
  volume={63},
  number={10},
  pages={1872--1897},
  year={2020},
  publisher={Springer}
}

@article{jang2024lora,
  title={LoRA training in the NTK regime has no spurious local minima},
  author={Jang, Uijeong and Lee, Jason D and Ryu, Ernest K},
  journal={arXiv preprint arXiv:2402.11867},
  year={2024}
}

@article{xu2023parameter,
  title={Parameter-efficient fine-tuning methods for pretrained language models: A critical review and assessment},
  author={Xu, Lingling and Xie, Haoran and Qin, Si-Zhao Joe and Tao, Xiaohui and Wang, Fu Lee},
  journal={arXiv preprint arXiv:2312.12148},
  year={2023}
}

@article{arous2021online,
  title={Online stochastic gradient descent on non-convex losses from high-dimensional inference},
  author={Ben Arous, Gerard and Gheissari, Reza and Jagannath, Aukosh},
  journal={Journal of Machine Learning Research},
  volume={22},
  number={106},
  pages={1--51},
  year={2021}
}

@article{damian2023smoothing,
  title={Smoothing the landscape boosts the signal for sgd: Optimal sample complexity for learning single index models},
  author={Damian, Alex and Nichani, Eshaan and Ge, Rong and Lee, Jason D},
  journal={Advances in Neural Information Processing Systems},
  volume={36},
  pages={752--784},
  year={2023}
}

@article{damian2024computational,
  title={Computational-statistical gaps in gaussian single-index models},
  author={Damian, Alex and Pillaud-Vivien, Loucas and Lee, Jason D and Bruna, Joan},
  journal={arXiv preprint arXiv:2403.05529},
  year={2024}
}

@article{sohn2020fixmatch,
  title={Fixmatch: Simplifying semi-supervised learning with consistency and confidence},
  author={Sohn, Kihyuk and Berthelot, David and Carlini, Nicholas and Zhang, Zizhao and Zhang, Han and Raffel, Colin A and Cubuk, Ekin Dogus and Kurakin, Alexey and Li, Chun-Liang},
  journal={Advances in neural information processing systems},
  volume={33},
  pages={596--608},
  year={2020}
}

@book{mezard1987spin,
  title={Spin glass theory and beyond: An Introduction to the Replica Method and Its Applications},
  author={M{\'e}zard, Marc and Parisi, Giorgio and Virasoro, Miguel Angel},
  volume={9},
  year={1987},
  publisher={World Scientific Publishing Company}
}

@inproceedings{choromanska2015loss,
  title={The loss surfaces of multilayer networks},
  author={Choromanska, Anna and Henaff, Mikael and Mathieu, Michael and Arous, G{\'e}rard Ben and LeCun, Yann},
  booktitle={Artificial intelligence and statistics},
  pages={192--204},
  year={2015},
  organization={PMLR}
}

@article{mehta2019high,
  title={A high-bias, low-variance introduction to machine learning for physicists},
  author={Mehta, Pankaj and Bukov, Marin and Wang, Ching-Hao and Day, Alexandre GR and Richardson, Clint and Fisher, Charles K and Schwab, David J},
  journal={Physics reports},
  volume={810},
  pages={1--124},
  year={2019},
  publisher={Elsevier}
}

@inproceedings{
springer2025overtrained,
title={Overtrained Language Models Are Harder to Fine-Tune},
author={Jacob Mitchell Springer and Sachin Goyal and Kaiyue Wen and Tanishq Kumar and Xiang Yue and Sadhika Malladi and Graham Neubig and Aditi Raghunathan},
booktitle={Forty-second International Conference on Machine Learning},
year={2025},
url={https://openreview.net/forum?id=YW6edSufht}
}

@inproceedings{bengio2009curriculum,
  title={Curriculum learning},
  author={Bengio, Yoshua and Louradour, J{\'e}r{\^o}me and Collobert, Ronan and Weston, Jason},
  booktitle={Proceedings of the 26th annual international conference on machine learning},
  pages={41--48},
  year={2009}
}

@inproceedings{damian2022neural,
  title={Neural networks can learn representations with gradient descent},
  author={Damian, Alexandru and Lee, Jason and Soltanolkotabi, Mahdi},
  booktitle={Conference on Learning Theory},
  pages={5413--5452},
  year={2022},
  organization={PMLR}
}

@article{ba2022high,
  title={High-dimensional asymptotics of feature learning: How one gradient step improves the representation},
  author={Ba, Jimmy and Erdogdu, Murat A and Suzuki, Taiji and Wang, Zhichao and Wu, Denny and Yang, Greg},
  journal={Advances in Neural Information Processing Systems},
  volume={35},
  pages={37932--37946},
  year={2022}
}

@InProceedings{cui24asymptotics,
  title = 	 {Asymptotics of feature learning in two-layer networks after one gradient-step},
  author =       {Cui, Hugo and Pesce, Luca and Dandi, Yatin and Krzakala, Florent and Lu, Yue and Zdeborova, Lenka and Loureiro, Bruno},
  booktitle = 	 {Proceedings of the 41st International Conference on Machine Learning},
  pages = 	 {9662--9695},
  year = 	 {2024},
  editor = 	 {Salakhutdinov, Ruslan and Kolter, Zico and Heller, Katherine and Weller, Adrian and Oliver, Nuria and Scarlett, Jonathan and Berkenkamp, Felix},
  volume = 	 {235},
  series = 	 {Proceedings of Machine Learning Research},
  month = 	 {21--27 Jul},
  publisher =    {PMLR},
}

@article{dandi2024two,
  title={How two-layer neural networks learn, one (giant) step at a time},
  author={Dandi, Yatin and Krzakala, Florent and Loureiro, Bruno and Pesce, Luca and Stephan, Ludovic},
  journal={Journal of Machine Learning Research},
  volume={25},
  number={349},
  pages={1--65},
  year={2024}
}

@article{robbins1951stochastic,
  title={A stochastic approximation method},
  author={Robbins, Herbert and Monro, Sutton},
  journal={The annals of mathematical statistics},
  pages={400--407},
  year={1951},
  publisher={JSTOR}
}

@inproceedings{arnaboldi2023high,
  title={From high-dimensional \& mean-field dynamics to dimensionless odes: A unifying approach to sgd in two-layers networks},
  author={Arnaboldi, Luca and Stephan, Ludovic and Krzakala, Florent and Loureiro, Bruno},
  booktitle={The Thirty Sixth Annual Conference on Learning Theory},
  pages={1199--1227},
  year={2023},
  organization={PMLR}
}

@article{veiga2022phase,
  title={Phase diagram of stochastic gradient descent in high-dimensional two-layer neural networks},
  author={Veiga, Rodrigo and Stephan, Ludovic and Loureiro, Bruno and Krzakala, Florent and Zdeborov{\'a}, Lenka},
  journal={Advances in Neural Information Processing Systems},
  volume={35},
  pages={23244--23255},
  year={2022}
}

@article{mori2025optimal,
  title={Optimal protocols for continual learning via statistical physics and control theory},
  author={Mori, Francesco and Mannelli, Stefano Sarao and Mignacco, Francesca},
  journal={Journal of Statistical Mechanics: Theory and Experiment},
  volume={2025},
  number={8},
  pages={084004},
  year={2025},
  publisher={IOP Publishing}
}

@article{dandi2024benefits,
  title={The benefits of reusing batches for gradient descent in two-layer networks: Breaking the curse of information and leap exponents},
  author={Dandi, Yatin and Troiani, Emanuele and Arnaboldi, Luca and Pesce, Luca and Zdeborov{\'a}, Lenka and Krzakala, Florent},
  journal={arXiv preprint arXiv:2402.03220},
  year={2024}
}

@article{arnaboldi2023escaping,
  title={Escaping mediocrity: how two-layer networks learn hard single-index models with SGD},
  author={Arnaboldi, Luca and Krzakala, Florent and Loureiro, Bruno and Stephan, Ludovic},
  journal={CoRR},
  year={2023}
}

@article{soletskyi2025theoretical,
  title={A theoretical perspective on mode collapse in variational inference},
  author={Soletskyi, Roman and Gabri{\'e}, Marylou and Loureiro, Bruno},
  journal={Machine Learning: Science and Technology},
  volume={6},
  number={2},
  pages={025056},
  year={2025},
  publisher={IOP Publishing}
}

@article{saglietti2022analytical,
  title={An analytical theory of curriculum learning in teacher-student networks},
  author={Saglietti, Luca and Mannelli, Stefano and Saxe, Andrew},
  journal={Advances in Neural Information Processing Systems},
  volume={35},
  pages={21113--21127},
  year={2022}
}

@article{goldt2020modeling,
  title={Modeling the influence of data structure on learning in neural networks: The hidden manifold model},
  author={Goldt, Sebastian and M{\'e}zard, Marc and Krzakala, Florent and Zdeborov{\'a}, Lenka},
  journal={Physical Review X},
  volume={10},
  number={4},
  pages={041044},
  year={2020},
  publisher={APS}
}

@article{saad1996learning,
  title={Learning with noise and regularizers in multilayer neural networks},
  author={Saad, David and Solla, Sara},
  journal={Advances in Neural Information Processing Systems},
  volume={9},
  year={1996}
}

@article{goldt2019dynamics,
  title={Dynamics of stochastic gradient descent for two-layer neural networks in the teacher-student setup},
  author={Goldt, Sebastian and Advani, Madhu and Saxe, Andrew M and Krzakala, Florent and Zdeborov{\'a}, Lenka},
  journal={Advances in neural information processing systems},
  volume={32},
  year={2019}
}

@article{xiao2017fashion,
  title={Fashion-mnist: a novel image dataset for benchmarking machine learning algorithms},
  author={Xiao, Han and Rasul, Kashif and Vollgraf, Roland},
  journal={arXiv preprint arXiv:1708.07747},
  year={2017}
}

@inproceedings{cohen2017emnist,
  title={EMNIST: Extending MNIST to handwritten letters},
  author={Cohen, Gregory and Afshar, Saeed and Tapson, Jonathan and Van Schaik, Andre},
  booktitle={2017 international joint conference on neural networks (IJCNN)},
  pages={2921--2926},
  year={2017},
  organization={IEEE}
}

@article{krizhevsky2009learning,
  title={Learning multiple layers of features from tiny images},
  author={Krizhevsky, Alex and Hinton, Geoffrey and others},
  year={2009},
  publisher={Toronto, ON, Canada}
}

@article{goodfellow2013multi,
  title={Multi-digit number recognition from street view imagery using deep convolutional neural networks},
  author={Goodfellow, Ian J and Bulatov, Yaroslav and Ibarz, Julian and Arnoud, Sacha and Shet, Vinay},
  journal={arXiv preprint arXiv:1312.6082},
  year={2013}
}

\newpage

\appendix

\section{Recap setting}

We are in the following setup:
\begin{itemize}
    \item \textbf{Data:} Data are generated by a noiseless model:
    \begin{equation}
    \label{equ:teacher}
        y_i = \phi(\omeg_\star \cdot \boldx_i),
    \end{equation}
    with $\boldx_i \sim \mathcal{N}(0, I_d)$, $\omeg_\star \in \mathbb{S}^{d-1}$, and $\phi(x)$ a generic activation function.

    \item \textbf{Model:} We consider the following simplified model for LoRA:
    \begin{equation}
    \label{equ:student}
        \hat{y}_i = f(\boldx,u,\omeg) = \sigma\big( (\tilde{\omeg} + u\omeg)\cdot \boldx_i \big)
    \end{equation}
    with
    \begin{equation}
    \label{equ:pre_trained_weight}
        \tilde{\omeg} = \mu \omeg_\star,
    \end{equation}
    where $\mu \in (0,1)$; the trainable parameters are $u \in \mathbb{R}$ and $\omeg \in \mathbb{S}^{d-1}$.

    \item \textbf{Overlap:} We define the overlaps as follows:
    \begin{equation}
    \label{equ:overlap_r_m_eff}
        m = \omeg_\star \cdot \omeg, 
        \quad m_{\rm eff}=\mu+um, \quad r=\norm{\tilde{\omeg}+u\omeg}^2=\mu^2+u^2+2\mu um.
    \end{equation}
    An interesting quantity that we can also track is
    \begin{equation}
        m_{\rm eff} = \omeg_\star \cdot (\mu \omeg_\star + u \omeg) = \mu + u m,
    \end{equation}
    which, in the 1D tensor case, is sufficient to quantify how close the combination $\mu \omeg_\star + u \omeg$ is to $\omeg_\star$. Notably, at the global minimum—regardless of the activation function or the value of $\mu$—this quantity always equals $1$ (or $\pm 1$ for even activations) in the scenario $\phi(\cdot)=\sigma(\cdot)$.

    \item \textbf{Loss:} The population loss under consideration is the mean square loss given by
    \begin{equation}
        \mathcal{L}(u,\omeg) 
        = \frac{1}{2} \mathbb{E}_{x}\Big[\big( y - \hat{y} \big)^2\Big]
        = \frac{1}{2} \mathbb{E}_{x}\Big[\Big( \phi( \omeg_\star \cdot \boldx_i ) 
        - \sigma\big( (\mu \omeg_\star + u \omeg)\cdot \boldx_i \big) \Big)^2\Big],
    \end{equation}
    with
    \[
    L(u,\omeg)
    = \frac{1}{2n}\sum_{i=1}^{n}
    \Big( \phi( \omeg_\star \cdot \boldx_i )
    - \sigma\big( (\mu \omeg_\star + u \omeg)\cdot \boldx_i \big) \Big)^2
    \]
    being the empirical loss.
\end{itemize}

We are interested in understanding how the activation functions $\phi(x), \sigma(x)$ and the signal strength $\mu$ affect the exit time, namely the time required to exit the search phase. For our analysis, we will consider generic activation functions, for which their Hermite expansions will be exploited subsequently. We investigate correlation-based losses in App.~\ref{app:different_loss} and show that, for linear activations, pre-trained weights do not affect the dynamics, highlighting that the impact of pre-training depends critically on the choice of loss.

\subsection{Another model of pre-training}
\label{app_different_pre-trained weight}

Another natural candidate for a pre-trained weight is
\begin{equation}
\label{equ_different_weight}
\tilde{\omeg}=\mu\,\omeg_\star+(1-\mu)\,\boldsymbol{\xi},
\end{equation}
where $\boldsymbol{\xi}$ is a random unit vector independent of $\boldsymbol{\omega}_\star$.
In this setting, the effective overlap takes the form
\begin{align}
\label{equ:m_eff_diff}
m_{\mathrm{eff}}&=\mu+(1-\mu)\,\omeg_\star \cdot \boldsymbol{\xi}+u m
=\mu+u m+O(d^{-1/2}),\\
\nonumber
r&:=\norm{\mu\omeg_\star+(1-\mu)\,\boldsymbol{\xi}+u\omeg}^2
=\mu^2+(1-\mu)^2+u^2+2\mu(1-\mu)\omeg_\star \cdot \boldsymbol{\xi}
+2\mu u m
+2(1-\mu)u \omeg \cdot \boldsymbol{\xi},\\
\label{equ:r_eff_diff}
r&=\mu^2+(1-\mu)^2+u^2+2\mu u m+O(d^{-1/2}),
\end{align}
where the $O(d^{-1/2})$ term arises from the high-dimensional limit: with high probability, the dot product of two random vectors is of order $O(d^{-1/2})$, and these vectors therefore remain orthogonal throughout the dynamics.

From \eqref{equ:m_eff_diff}, we see that, at the level of the overlap $m_{\rm eff}$,  \eqref{equ:pre_trained_weight} is equivalent to \eqref{equ_different_weight}. However, the norm of the pre-activation is different from that of \eqref{equ:pre_trained_weight}, as there is an additional $(1-\mu)^2$ term in this second model (see \eqref{equ:overlap_r_m_eff}). Since $r^2$ also drives the dynamics, computing the gradient with respect to $u$ or $m$ shows that this additional term has no effect on the dynamics of $u$ and $m$. Therefore, beyond a trivial shift of the pre-activation
norm, this model provides no additional insight into the phenomenology observed with \eqref{equ:pre_trained_weight}. We illustrate this by providing in Figure~\ref{fig:dynamics_activation_different_pre-trained}, numerical evidence for a student model with pre-trained weights designed according to~\eqref{equ_different_weight}; details of the experimental hyperparameters are provided in the caption. The key takeaway is that the same phenomenology is observed as for the student model~\eqref{equ:student}.
\begin{figure}
    \centering
    \vspace{0.0em}
    \begin{subfigure}
        \centering
        \includegraphics[width=0.9\textwidth]{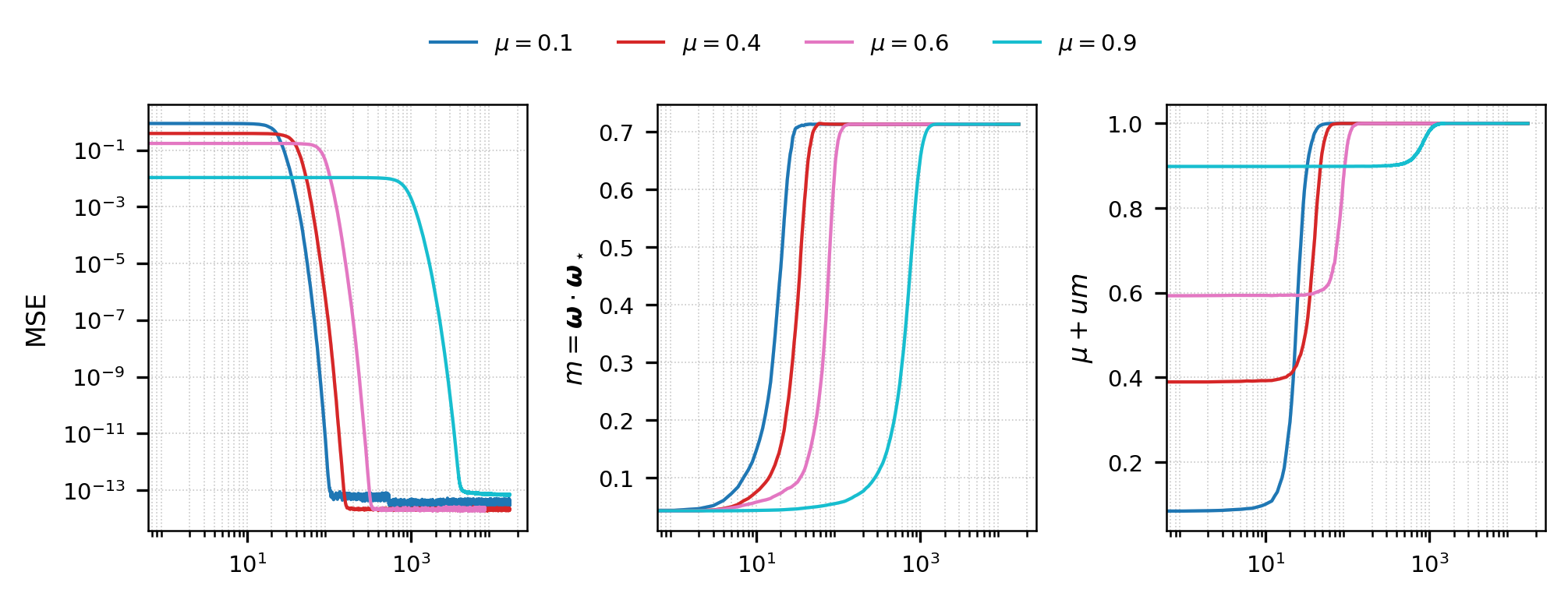}
        \label{fig:diff_teacher_dynamics_linear_activation_appendix}
    \end{subfigure}
    \vspace{-1.0em}
    \begin{subfigure}
        \centering
        \includegraphics[width=0.9\textwidth]{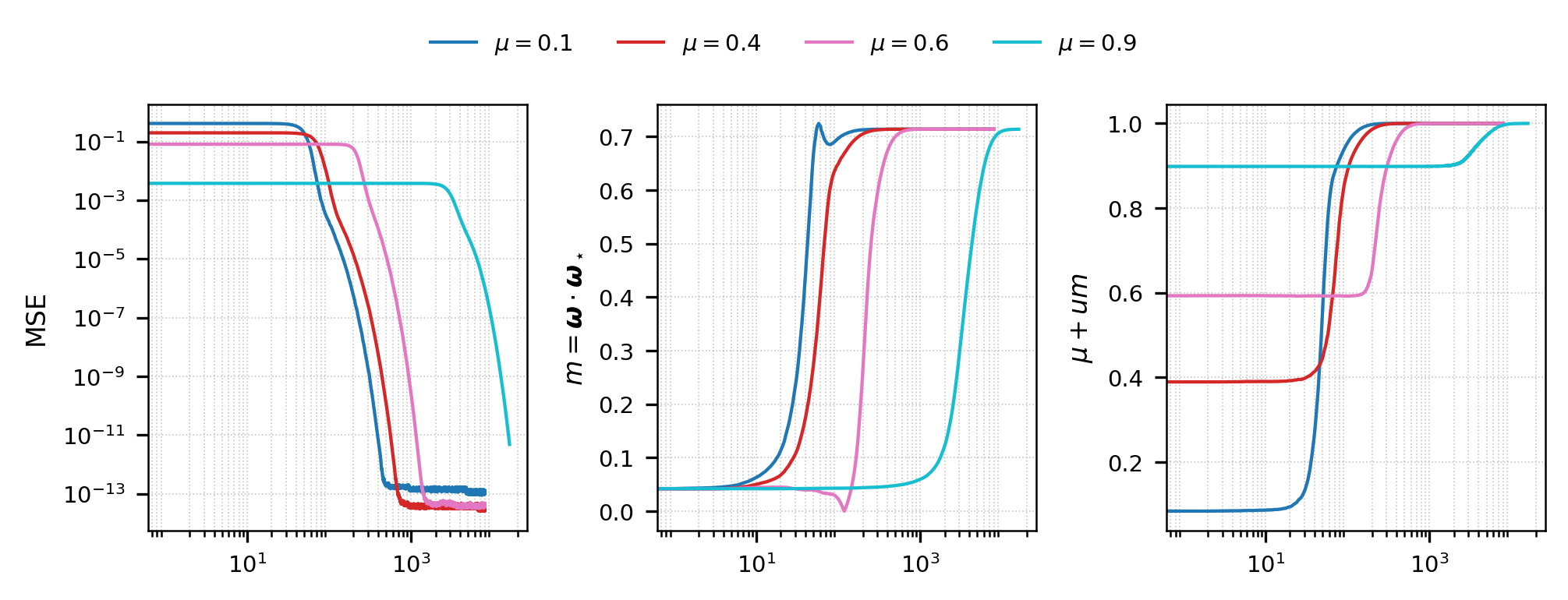}
        \label{fig:diff_teacher_dynamics_erf_activation_appendix}
    \end{subfigure}
    \vspace{-1.0em}
    \begin{subfigure}
        \centering
        \includegraphics[width=0.9\textwidth]{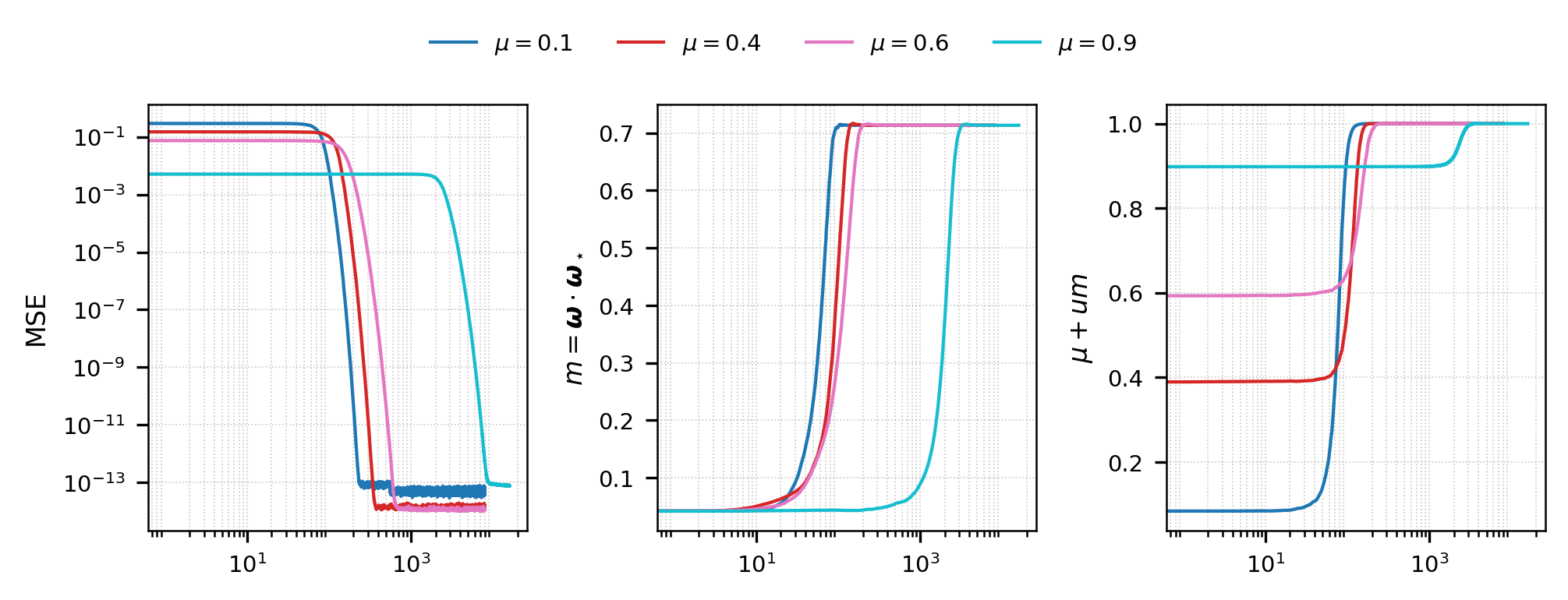}
        \label{fig:diff_dynamics_relu_activation_appendix}
    \end{subfigure}
    \caption{Learning dynamics of the student model~\eqref{equ_different_weight} for different levels of pre-training alignment $\mu \in \{0.1, 0.4, 0.6, 0.9\}$, trained on data generated by the teacher model~\eqref{equ:teacher}. We consider the matched teacher--student Linear, Erf, and ReLU (from top to bottom) activation setting and train the student using one-pass SGD with batch size $B=5000$ and input dimension $d=1000$. The panels report the test mean squared error (MSE) (left), the alignment between the student and teacher directions $m=\boldsymbol{\omega}_\star\!\cdot\!\boldsymbol{\omega}$ (middle), and the effective teacher--student overlap $m_{\mathrm{eff}}=\mu+u m$ (right).
Shaded regions indicate one standard deviation over three independent runs.
The spherical constraint is enforced by normalizing $\boldsymbol{\omega}$ after each gradient step and we used  a learning rate $lr=0.2 .$}
    \label{fig:dynamics_activation_different_pre-trained}
\end{figure}

\section{Hermite expansion}
\label{app:generalize_Hermite}

\subsection{Properties of scaled Hermite polynomials}
\label{app:Hermite_properties}

Let $z\sim\mathcal N(0,r)$ with $r>0$. We denote by $\{\mathrm{He}^{[r]}_k\}_{k\ge0}$ the probabilists' Hermite polynomials orthogonal with respect to the Gaussian measure $\mathcal N(0,r)$.

\paragraph{Definition.}
The scaled Hermite polynomials are defined through the generating function
\begin{equation}
\label{app:property_Hermite definition}
\exp\!\left(zt-\frac{r t^2}{2}\right)
=\sum_{k=0}^{\infty}\frac{t^k}{k!}\,\mathrm{He}^{[r]}_\ell(z).
\end{equation}

\paragraph{Orthogonality.}
For all $k,m\ge0$,
\begin{equation}
\label{app:property_Hermite orthogonality}
\mathbb E_{z\sim\mathcal N(0,r)}
\!\left[\mathrm{He}^{[r]}_k(z)\,\mathrm{He}^{[r]}_m(z)\right]
=\delta_{k m}\,k!\,r^k .
\end{equation}

\paragraph{Zero mean.}
All non-constant Hermite polynomials have zero expectation:
\begin{equation}
\label{app:property_Hermite zero_mean}
\mathbb E_{z\sim\mathcal N(0,r)}\!\left[\mathrm{He}^{[r]}_k(z)\right]=0
\quad\text{for all }k\ge1,
\qquad
\mathrm{He}^{[r]}_0(z)=1.
\end{equation}

\paragraph{Derivative identity.}
Hermite polynomials satisfy the lowering relation
\begin{equation}
\label{app:property_Hermite derivative}
\frac{d}{dz}\,\mathrm{He}^{[r]}_k(z)
= k\,\mathrm{He}^{[r]}_{k-1}(z).
\end{equation}

\paragraph{Multiplication by the coordinate.}
Multiplication by $z$ raises and lowers the Hermite degree:
\begin{equation}
\label{app:property_Hermite multiplication_constant}
z\,\mathrm{He}^{[r]}_k(z)
= \mathrm{He}^{[r]}_{k+1}(z)
+ k\,r\,\mathrm{He}^{[r]}_{k-1}(z).
\end{equation}

\paragraph{Scaling relation.}
If $z=\sqrt r\,x$ with $x\sim\mathcal N(0,1)$, then
\begin{equation}
\label{app:property_Hermite scaling relation}
\mathrm{He}^{[r]}_k(z)
= r^{k/2}\,\mathrm{He}_k(x),
\end{equation}
where $\mathrm{He}_k$ denotes the standard probabilists' Hermite polynomial.

\paragraph{Ornstein--Uhlenbeck eigenfunctions.}
Define the Ornstein--Uhlenbeck operator
\begin{equation}
\label{app:property_Hermite eigenfunction}
\mathcal L_r f(z) := r f''(z) - z f'(z).
\end{equation}
Then Hermite polynomials are eigenfunctions of $\mathcal L_r$:
\begin{equation}
\mathcal L_r\,\mathrm{He}^{[r]}_\ell(z)
= -\ell\,\mathrm{He}^{[r]}_\ell(z).
\end{equation}

\paragraph{Parity.}
Hermite polynomials have definite parity:
\begin{equation}
\label{app:property_Hermite parity}
\mathrm{He}^{[r]}_\ell(-z)=(-1)^\ell\,\mathrm{He}^{[r]}_\ell(z).
\end{equation}
As a consequence, even (resp.\ odd) functions have vanishing odd (resp.\ even) Hermite coefficients.

\paragraph{Hermite expansion of square-integrable functions.}
Any function $\sigma\in L^2(\mathcal N(0,r))$ admits the expansion
\begin{equation}
\label{app:property_Hermite expansion}
\sigma(z)=\sum_{k=0}^\infty
\frac{\sigma^{[r]}_k}{k!\,r^k}\,
\mathrm{He}^{[r]}_k(z),
\qquad
\sigma^{[r]}_k
:=\mathbb E\!\left[\sigma(z)\,\mathrm{He}^{[r]}_k(z)\right].
\end{equation}

\paragraph{Gaussian expectation.}
Only the zeroth Hermite coefficient contributes to the mean:
\begin{equation}
\label{app:property_Hermite expectation}
\mathbb E_{z\sim\mathcal N(0,r)}[\sigma(z)] = \sigma^{[r]}_0.
\end{equation}

\paragraph{Interpretation.}
The family $\{\mathrm{He}^{[r]}_k\}_{k\ge0}$ forms an orthogonal basis of $L^2(\mathcal N(0,r))$, often referred to as the Wiener chaos decomposition. Each Hermite degree corresponds to an independent mode under Gaussian expectations and dynamics.

\subsection{Hermite expansion of a generic activation function}
\label{app:property_Hermite}

The Hermite expansion of $\phi(\cdot)$ with preactivation $z\sim\mathcal{N}(0,1)$ is
\begin{equation}
\label{app:property_Hermite unit_variance_preactivation}
\phi(z)=\sum_{k\ge 0}\frac{\phi_k}{k!}\,\He_k(z)
\qquad \Rightarrow \qquad
\E[\phi(z)^2]=\int Dz\,\phi(z)^2=\sum_{k=0}^{\infty}\frac{\phi_k^2}{k!},
\end{equation}
with
\begin{equation*}
Dz:=\diff z\,\exp(-z^2/2)/\sqrt{2\pi}.
\end{equation*}

Now consider $z_1\sim\mathcal{N}(0,r_1)$ and $z_2\sim\mathcal{N}(0,r_2)$.
From the Mehler expansion, one gets
\begin{equation}
\EE[\sigma(z_1)\sigma(z_2)]
= \sum_{k=0}^{\infty}
\frac{\sigma_k^{[r_1]}\sigma_k^{[r_2]}}{k!\,r_{1}^{k}r_{2}^{k}}\,
h^k,
\qquad \text{with}\qquad
h=\EE[z_1 z_2].
\end{equation}

As a warm-up, let us compute how the population loss depends on the summary statistic $m$:
\begin{equation}
\label{equ:population_loss_app}
\mathcal{L}(\omeg)
= \frac{1}{2}\,\mathbb{E}\Big[\big(\phi(\omeg_{\star}\cdot \boldx) - \sigma\big((\mu \omeg_{\star} + u\omeg)\cdot \boldx\big)\big)^2\Big]
= \frac{1}{2}\left( \mathbb{E}[\phi(\lambda_{\star})^2] + \mathbb{E}[\sigma(\mu\lambda_{\star} + u\lambda)^2] \right)
- \mathbb{E}[\phi(\lambda_{\star}) \sigma(\mu\lambda_{\star} + u\lambda)],
\end{equation}
where we recall $\lambda_{\star}=\omeg_{\star}\cdot \boldx$ and $\lambda=\omeg\cdot \boldx$.

Now let us recall the distribution of the different preactivations:
\begin{equation*}
\lambda_{\star}\sim\mathcal{N}(0,1),
\qquad
\mu \lambda_{\star} + u\lambda \sim \mathcal{N}(0,r),
\qquad
\EE[\lambda_{\star}(\mu\lambda_{\star} + u\lambda)] = \mu + u m,
\end{equation*}
with $m=\omeg_{\star}\cdot \omeg$ and $r=\mu^2 + u^2 + 2\mu u m$.

One then exploits the previous property to obtain
\begin{align*}
\EE[\phi(\lambda_{\star})^2]
&= \sum_{k=0}^{+\infty}\frac{\phi_k^2}{k!},\quad
\EE[\sigma(\mu\lambda_{\star} + u\lambda)^2]
= \sum_{k=0}^{+\infty}\frac{\big(\sigma^{[r]}_k\big)^2}{k!\,r^k},\quad
\EE[\phi(\lambda_{\star})\sigma(\mu\lambda_{\star} + u\lambda)]
= \sum_{k=0}^{+\infty}\frac{\phi_{k}\sigma^{[r]}_{k}}{k!\,r^k}\,(\mu + u m)^k.
\end{align*}
The population loss becomes
\begin{equation}
\mathcal{L}(u,m)= \sum_{k=0}^{+\infty}\frac{\phi_k^2}{2k!}+ \sum_{k=0}^{+\infty}\frac{\big(\sigma^{[r]}_k\big)^2}{2k!\,r^{k}}-\sum_{k=0}^{+\infty}\frac{\sigma^{[r]}_k \phi_k}{k!\,r^{k}}\,(\mu+u m)^k= \sum_{k=0}^{+\infty}\frac{\phi_k^2}{2k!} + \mathcal{L}_{1}(u,m)+\mathcal{L}_{2}(u,m).
\end{equation}

\subsection{Expansion for non-unit variance pre-activation in the standard Hermite basis}
\label{appen:non-unit variance}

In Appendix~\ref{app:property_Hermite}, we derived in
\eqref{app:property_Hermite expansion} the Hermite expansion of an
activation function under a Gaussian pre-activation with zero mean and
non-unit variance, expressed in the Hermite basis associated with the
corresponding Gaussian measure. Since the information exponent (IE) is defined
with respect to the standard Hermite basis associated with a unit-variance
Gaussian~\cite{arous2021online}, our goal in this appendix is to map the
resulting expansion onto the standard Hermite basis.

Using the definition of Hermite polynomials associated with a Gaussian
measure of variance $r$ (see
\eqref{app:property_Hermite definition}), we can rewrite
\eqref{app:property_Hermite scaling relation} as
\begin{equation*}
\He^{[r]}_k(z)=r^{k/2}\,\He_k\!\left(\frac{z}{\sqrt r}\right),
\end{equation*}
which follows from the change of variables $t \mapsto t/\sqrt r$ in the
generating function. To express this polynomial in the standard Hermite basis,
we use the classical Hermite scaling identity
\begin{equation*}
\He_k(a z)=\sum_{j=0}^{\lfloor k/2 \rfloor}a^{k-2j}(a^2-1)^j\frac{k!}{(k-2j)!\, j!\, 2^j}\,\He_{k-2j}(z).
\end{equation*}

Applying this identity with $a = 1/\sqrt r$,
we obtain
$a^{k-2j} = r^{-(k-2j)/2}$ and $a^2 - 1 = (1-r)/r$.
Multiplying by the prefactor $r^{k/2}$ yields
$$r^{k/2} a^{k-2j} (a^2 - 1)^j = (-1)^j (r-1)^j,$$
which leads to the expansion
\begin{equation}
\label{eq:scaled_Hermite_standard_basis}
\He^{[r]}_k(z)
=\sum_{j=0}^{\lfloor k/2 \rfloor}(-1)^j (r-1)^j\,\frac{k!}{(k-2j)!\, j!\, 2^j}\;\He_{k-2j}(z).
\end{equation}

This expression shows that if the coefficients of the non-standard expansion
satisfy $\sigma^{[r]}_k = 0$ for all $k \le k^\star - 1$, with
$\sigma^{[r]}_{k^\star} \neq 0$, then the lowest-order Hermite polynomial
appearing in the standard basis expansion is either $\He_0$ or $\He_1$,
depending on the parity of $k^\star$. As a consequence, the information
exponent of the resulting activation function is at most $2$, independently
of the choice of activation.

Finally, expressing the non-unit-variance expansion
\eqref{app:property_Hermite expansion} in the standard Hermite
basis yields
\begin{equation}
\label{app:property_Hermite_non_unit_exp_function_standard_basis}
\sigma(z)=\sum_{k \ge 0}\frac{\sigma^{[r]}_k}{k! r^k}\sum_{j=0}^{\lfloor k/2 \rfloor}(-1)^j(r-1)^j\,\frac{k!}{(k-2j)!\, j!\, 2^j}\;\He_{k-2j}(z).
\end{equation}

\subsection{Squaring a Hermite activation}
\label{app:squaring_pure_Hermite}

We argued in the main text that squaring the labels constitutes a natural
transformation that can reduce the information exponent (IE) of an activation
function. In this appendix, we provide the mathematical justification for this
claim. Since the student activation function has IE at most $2$, it suffices
to apply such a transformation to the teacher activation only.

Starting from the Hermite expansion for unit-variance pre-activations
\eqref{app:property_Hermite unit_variance_preactivation}, we write
\begin{equation}
\label{eq:sigma_square_Hermite}
\big( \sigma(z) \big)^2=\sum_{k,k' \ge 0}\frac{\sigma_k \sigma_{k'}}{k!\,k'!}\,\He_k(z)\,\He_{k'}(z)=\sum_{k,k' \ge 0}\frac{\sigma_k \sigma_{k'}}{k!\,k'!}\sum_{j=0}^{\min(k,k')}j!\binom{k}{j}\binom{k'}{j}\,\He_{k+k'-2j}(z).
\end{equation}

We now specialize to the case where the activation is a pure Hermite
polynomial, $\sigma(z) = \He_k(z)$. In this case, the above expression reduces
to
\begin{equation}
\label{eq:pure_Hermite_square}
\big( \sigma(z) \big)^2
=\sum_{j=0}^{k}
j!\binom{k}{j}^2\,\He_{2k-2j}(z)=k!\,\He_0(z)+k^2 (k-1)!\,\He_2(z)+\cdots.
\end{equation}
The expansion \eqref{eq:pure_Hermite_square} shows that, besides the constant
term, the lowest-order Hermite polynomial appearing in $\big(\sigma(z)\big)^2$
is $\He_2(z)$. Since the constant component does not affect correlations with
the input direction, the effective lowest non-trivial Hermite level governing
the learning dynamics is therefore $\He_2$. As a consequence, squaring a pure
Hermite activation reduces the information exponent of the teacher to $\mathrm{IE}=2$, independently of the original degree $k$.

\paragraph{Eliminating the delay near the singularity with two stage learning based on label transformation}
\label{app:numerics_use_square_label}

In  the main, we showed that label transformations can enable the student to learn faster by effectively reducing the escape time from the correlated search phase. Focusing on label squaring, we demonstrated that this transformation can eliminate the singularity that arises for certain activation functions. In Figure~\ref{fig:curiculum_learning_He3_activation_appendix}, we provide numerical evidence for a two-stage learning procedure at a pre-training alignment $\mu=0.325$, which was shown to correspond to a near-singular regime (i.e., failure to escape the correlated search phase cf. Figure \ref{fig:dynamics_Hermite_activation_appendix}) for the Hermite-$3$ activation $\He_3$. Across both stages, the student is trained using one-pass SGD (details of the experimental setup are provided in the caption). The two stages differ as follows: in the first stage, the labels are squared and used to train the student; once training has proceeded for a sufficiently long time, we revert to the original labels in the second stage. As observed in Figure~\ref{fig:curiculum_learning_He3_activation_appendix}, this learning paradigm enables a much faster escape from the correlated search phase—compared to the standard training procedure shown in Figure~\ref{fig:dynamics_Hermite_activation_appendix}—and subsequently allows the student to recover the teacher direction during the descent phase.

\begin{figure}[!htbp]
    \centering
    \includegraphics[width=1.0\textwidth]{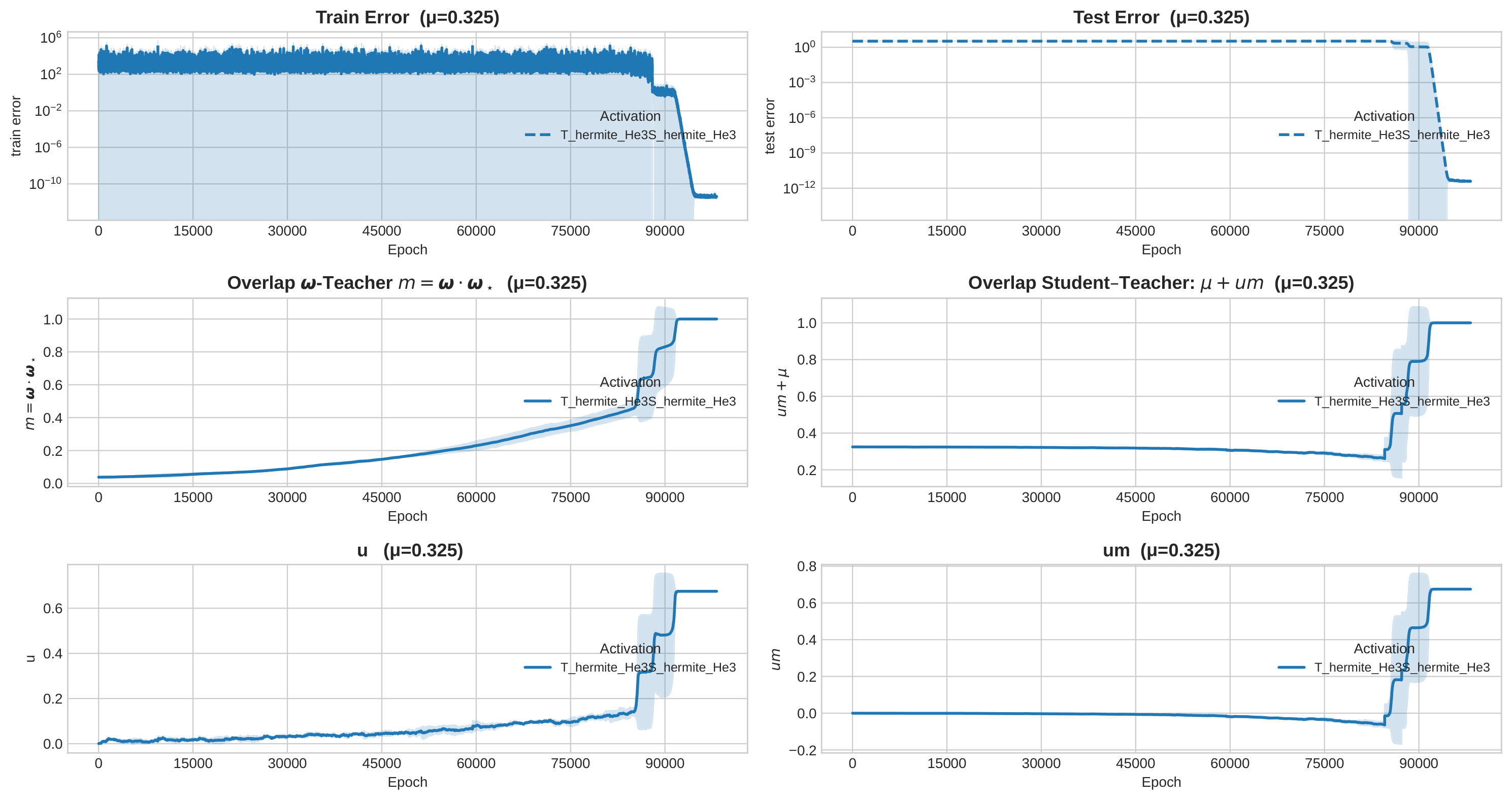}
   \caption{
Learning dynamics of the student model~\eqref{equ:student} for a pre-training alignment level $\mu=0.325$, trained on data generated by the teacher model~\eqref{equ:teacher}.
Both teacher and student activations are Hermite polynomials of degree $k=3$, as indicated in the legend.
The student is trained using a two-stage one-pass SGD procedure.
In the first stage, the ground-truth labels are squared and used until the overlap $m$ converges to approximately $0.5$; in the second stage, training proceeds using the original labels.
The hyperparameters are a batch size $B=5000$ and input dimension $d=1000$.
A spherical constraint is enforced by normalizing $\boldsymbol{\omega}$ after each gradient step, and the learning rate is set to $lr=10^{-2}\,\delta_k$, with $\delta_k=1/(k!\,k)$.
We note that when standard one-pass SGD is used on square label, the overlap remains close to zero ($m\simeq 0.04$; see the third row of Fig.~\ref{fig:dynamics_Hermite_activation_appendix}) even after $2.5\times 10^{5}$ epochs.
In contrast, with the two-stage procedure, the overlap has already escaped the correlated search phase by $2\times 10^{4}$ epochs.
The first stage ends at approximately $8.5\times 10^{4}$ epochs; during the second stage, a long plateau is observed before the dynamics eventually converge toward the global minimum of the loss.
Results are averaged over three independent runs. The observed staircase-like behavior arises from averaging across different random seeds: when the second learning phase begins, each seed reaches global convergence at a slightly different epoch. Because the transition is sharp, averaging these misaligned jumps produces the apparent stepwise structure.
}
    \label{fig:curiculum_learning_He3_activation_appendix}
\end{figure}

\section{Online SGD on the sphere}
\label{app:evolution_of_order_parameters}

We consider the spherical variant of stochastic gradient descent (SGD), where the weight vector $\boldsymbol{\omega}\in\mathbb{S}^{d-1}$ is constrained to lie on the unit sphere.
The update rule reads
\begin{equation}
\label{eq:gradient_step}
\boldsymbol{\omega}^{t+1}
=
\frac{
\boldsymbol{\omega}^{t}
-
\gamma\,\nabla^{\mathbb{S}^{d-1}}_{\boldsymbol{\omega}}\mathcal{L}(u,\boldsymbol{\omega},\boldx_i)
}{
\big\|
\boldsymbol{\omega}^{t}
-
\gamma\,\nabla^{\mathbb{S}^{d-1}}_{\boldsymbol{\omega}}\mathcal{L}(u,\boldsymbol{\omega},\boldx_i)
\big\|
},
\end{equation}
where $\nabla^{\mathbb{S}^{d-1}}_{\boldsymbol{\omega}}$ denotes the spherical gradient.
Explicitly, it is given by
\begin{equation}
\label{eq:spherical_gradient}
\boldsymbol{h}_{i}:=
\nabla^{\mathbb{S}^{d-1}}_{\boldsymbol{\omega}}\mathcal{L}(u,\boldsymbol{\omega},x_i)
=
\big(I_d-\boldsymbol{\omega}\boldsymbol{\omega}^\intercal\big)\nabla_{\boldsymbol{\omega}}\mathcal{L}(u,\boldsymbol{\omega},\boldx_i)
=
(\boldsymbol{\omega}_\star - m\boldsymbol{\omega})\,
\frac{\partial \mathcal{L}_i}{\partial m},
\end{equation}
where $m=\boldsymbol{\omega}\cdot\boldsymbol{\omega}_\star$.
In the last equality, we used the decomposition
\begin{equation}
\nabla_{\boldsymbol{\omega}}\mathcal{L}_i (u,\boldsymbol{\omega})
=
2\boldsymbol{\omega}\,\frac{\partial\mathcal{L}_i}{\partial q}
+
\boldsymbol{\omega}_\star\,\frac{\partial\mathcal{L}_i}{\partial m}.
\end{equation}

\paragraph{Small learning-rate expansion.}
Assuming $\gamma$ to be small, we expand the normalization factor in \eqref{eq:gradient_step}:
\begin{align}
\Big\|\boldsymbol{\omega}^t-\gamma\boldsymbol{h}_i
\Big\|^{-1}&=\Big(1+\gamma^2\|\boldsymbol{h}_i\|^2
\Big)^{-1/2}=1-\frac{\gamma^2}{2}\|\boldsymbol{h}_i\|^2+O(\gamma^3),
\end{align}
where
\begin{equation}
\label{eq:h_norm}
\|\boldsymbol{h}_i\|^2=(1-m^2)\Big(\frac{\partial\mathcal{L}_i}{\partial m}\Big)^2.
\end{equation}

\paragraph{Evolution of the overlap.}
Multiplying \eqref{eq:gradient_step} by $\boldsymbol{\omega}_\star$ yields the update equation for the overlap:
\begin{align}
\label{eq:m_step}
m^t
&=\Big(m^{t-1}-\gamma\,\boldsymbol{h}_i\cdot\boldsymbol{\omega}_\star\Big)
\Big(1-\frac{\gamma^2}{2}\|\boldsymbol{h}_i\|^2+O(\gamma^3)\Big) \nonumber\\
&=m^{t-1}-\gamma\Big(\boldsymbol{h}_i\cdot\boldsymbol{\omega}_\star-\frac{\gamma}{2}\|\boldsymbol{h}_i\|^2 m^{t-1}\Big)
+O(\gamma^3),
\end{align}
with
\begin{equation}
\boldsymbol{h}_i\cdot\boldsymbol{\omega}_\star
=
(1-m^2)\frac{\partial\mathcal{L}_i}{\partial m}.
\end{equation}

Now after averaging over the data we obtain \begin{equation}
    m^{t}=m^{t-1}-\gamma\Big(\boldsymbol{h}\cdot\boldsymbol{\omega}_\star-\frac{\gamma}{2}\|\boldsymbol{h}\|^2 m^{t-1}\Big)
+O(\gamma^3)
\end{equation}
with $\boldsymbol{h} :=\mathbb{E}_{\boldx_i}[\boldsymbol{h}_i]$ and $\|\boldsymbol{h}\|^2 :=\mathbb{E}_{\boldx_i}[\|\boldsymbol{h}_i\|^2]$.
The update is contractive provided the learning rate satisfies
\begin{equation}
\label{eq:learning_rate}
\gamma \le \frac{2\,\boldsymbol{h}\cdot\boldsymbol{\omega}_\star}{\|\boldsymbol{h}\|^2 m} \quad\Longleftrightarrow\quad \gamma \le \frac{2}{m} \Big(\frac{\partial\mathcal{L}}{\partial m}\Big)^{-1}.
\end{equation}

\paragraph{Continuous-time limit.}
We now set $\gamma=\delta\,dt$ with $\delta\in\mathbb{R}$ chosen such that the stability condition \eqref{eq:learning_rate} holds.
Equation~\eqref{eq:m_step} then becomes
\begin{equation}
\frac{m^t-m^{t-1}}{dt}=-\delta\,\boldsymbol{h}\cdot\boldsymbol{\omega}_\star+O(\delta^2 dt^2).
\end{equation}
Taking the limit $dt\to0$ yields the gradient-flow equation
\begin{equation}
\label{eq:dynamic_m}
\frac{dm}{dt}
=
-\delta(1-m^2)\frac{\partial\mathcal{L}}{\partial m}.
\end{equation}
Similarly, differentiation with respect to $u$ gives
\begin{equation}
\label{eq:dynamic_u}
\frac{du}{dt}
=
-\delta\,\frac{\partial\mathcal{L}}{\partial u}.
\end{equation}

\paragraph{Choice of time rescaling.}
The parameter $\delta$ is introduced to ensure that the dynamical equations \eqref{eq:dynamic_m}–\eqref{eq:dynamic_u} remain $\mathcal{O}(1)$ for different choices of activation functions, which affect the scaling of $\partial\mathcal{L}/\partial m$ and $\partial\mathcal{L}/\partial u$.
In particular, for pure Hermite activations of degree $k^\star$, an appropriate choice is
\begin{equation}
\label{equ:delta_hermite}
\delta=(k^\star! \, k^\star)^{-1}.
\end{equation}

\section{Population loss analysis}
\label{appendix:population_risk}

\subsection{Gradient of the population loss}
\label{app:gradient_population_risk}

By leveraging \eqref{app:property_Hermite expansion} and \eqref{app:property_Hermite unit_variance_preactivation} , the population loss defined in \eqref{equ:population_loss_app} becomes:
\begin{align}
\nonumber
&\mathcal{L}(u,m)  = \sum_{k=0}^{+\infty} \frac{\phi^2_k}{2 k!} + \sum_{k=0}^{+\infty} \frac{(\sigma^{[r]}_k)^2}{2k!r^{k}} - \sum_{k=0}^{+\infty} \frac{\sigma^{[r]}_k \phi_k}{k!r^{k}} (\mu+um)^k 
\end{align}
with $m= \omeg_{\star}\cdot \omeg, ~r=\mu^2 + u^2 + 2\mu um.$ 

The partial derivative of $\mathcal{L}$ respect to $u$ and $m$ are \begin{align}
\label{eq:grad_u_m}
\nonumber   & \frac{\partial \mathcal{L}}{ \partial u} =(u+\mu m) \sum_{k=0}^{+\infty} \frac{\sigma^{[r]}_k\bar{\sigma}^{[r]}_k}{k! r^{k+1}}-\sum_{k=0}^{+\infty} \phi_{k} (\mu + um)^{k-1} \Bigg[ -\frac{(u+m\mu)(\mu+mu)}{(k-1)!r^{k+1}}\sigma^{[r]}_{k} + \frac{\sigma^{[r]}_k m}{(k-1)!r^k} + \frac{(\mu+mu)(u+m\mu)\bar{\sigma}^{[r]}_k}{k!r^{k+1}} \Bigg] \\
   & \frac{\partial \mathcal{L}}{ \partial m} =u\mu \sum_{k=0}^{+\infty} \frac{\sigma^{[r]}_k\bar{\sigma}^{[r]}_k}{k! r^{k+1}} -\sum_{k=0}^{+\infty} \phi_{k} (\mu + um)^{k-1}\Bigg[ -\frac{u\mu(\mu+mu)}{(k-1)!r^{k+1}}\sigma^{[r]}_{k} + \frac{u\sigma^{[r]}_k }{(k-1)!r^k}+ \frac{(\mu+mu)u\mu\bar{\sigma}^{[r]}_k}{k!r^{k+1}} \Bigg]
\end{align}
with $\bar{\sigma}^{[r]}_k =  r^{\frac{k+1}{2}} \int \DD z \He_{k}(z)\sigma^{'}(\sqrt{r}z)z.$
The summation $\sum_{k=0}^{\infty}$ is used in a loose manner to lighten the notation. It should be understood that whenever a factorial prefactor of the form $1/(k-1)!$ appears, the summation effectively starts at $k=1$ rather than $k=0$.

\subsection{Gradient of the population loss in the correlated search phase}
\label{app:linearize_the_population_loss}
In this phase we are using $m\ll \mu$ and $u\ll\mu$, we can then do the following simplification $\mu+um\simeq \mu,~r\simeq \mu^2,$ \eqref{eq:grad_u_m} becomes \begin{align}
\label{eq:grad_u_m_2}
  \nonumber  \frac{\partial \mathcal{L}}{d u} &=  m \sum_{k=0}^{+\infty}\frac{\bar{\sigma}^{[r]}_k}{k!\mu^{k+1}}(-\phi_k +\frac{\sigma^{[r]}_k}{\mu^k}) +u\sum_{k=0}^{+\infty} \Bigg[ \frac{\bar{\sigma}^{[r]}_k\sigma^{[r]}_k}{k!r^{k+1}}+\frac{\phi_k}{(k-1)!\mu^{k+2}}\big(\sigma^{[r]}_k - \frac{\bar{\sigma}^{[r]}_k}{k}\big) \Bigg] \\
    \frac{\partial \mathcal{L}}{d m} &=u\sum_{k=0}^{+\infty} \frac{\bar{\sigma}^{[r]}_k}{k!\mu^{k+1}}(-\phi_k +\frac{\sigma^{[r]}_k}{\mu^k}),
\end{align}
where $\sigma^{[r]}_k $ and $\bar{\sigma}^{[r]}_k$ can be rewritten to become \begin{equation}
    \sigma^{[r]}_k = r^{\frac{k}{2}}\int D_z \He_k (z)\sigma(z\sqrt{r}) =r^{\frac{k}{2}} \mathbb{E}[\He_k (z)\sigma(z\sqrt{r})] ,\quad \bar{\sigma}^{[r]}_k =  r^{\frac{k+1}{2}}\int D_z \He_k(z) z \sigma^{'}(z\sqrt{r}) =r^{\frac{k+1}{2}}\mathbb{E}[z\He_k (z)\sigma^{'}(z\sqrt{r})].
\end{equation}

The equations \eqref{eq:grad_u_m_2}  can be cast as
\begin{align}
\label{equ:gradient_u_general}
    \frac{\partial \mathcal{L}}{\partial u} &= -uB -m  A   \\
      \label{equ:gradient_m_general}
    \frac{\partial\mathcal{L}}{\partial m} &=-u  A
\end{align}
with \begin{equation}
\label{app:linearize_the_population_loss A_and_B}
    A =-\sum_{k=0}^{+\infty}\frac{\bar{\sigma}^{[r]}_k}{k!\mu^{k+1}}(-\phi_k +\frac{\sigma^{[r]}_k}{\mu^k}), \quad B=-\sum_{k=0}^{+\infty} \Bigg[ \frac{\bar{\sigma}^{[r]}_k\sigma^{[r]}_k}{k!\mu^{2k+2}}+\frac{\phi_k}{(k-1)!\mu^{k+2}}\big(\sigma^{[r]}_k - \frac{\bar{\sigma}^{[r]}_k}{k}\big) \Bigg].
\end{equation}

\section{Escaping the correlated search phase}
\subsection{Generic activation function: proof of Prop. \ref{prop:escaping_mediocrity}}
\label{appendix:escaping_time}
\begin{proof}
Using \eqref{eq:dynamic_m} and \eqref{eq:dynamic_u}, together with the
linearized gradients \eqref{equ:gradient_u_general} and
\eqref{equ:gradient_m_general}, one obtains
\begin{equation}
\label{equ:dynamic_u_general}
    \begin{cases}
        \dfrac{d u}{d t} = u B + m A,  \\
        \dfrac{d m}{d t} = u A,
    \end{cases}
\end{equation}
where the factor $\delta$ has been absorbed into the definitions of $A$ and
$B$.

We rewrite \eqref{equ:dynamic_u_general} in matrix form as
\begin{equation}
\begin{bmatrix}
  \dot{u} \\
  \dot{m}
\end{bmatrix}
=
\begin{bmatrix}
  B & A \\
  A & 0
\end{bmatrix}
\begin{bmatrix}
  u \\
  m
\end{bmatrix}.
\end{equation}

This matrix has eigenvalues
\begin{equation}
  \lambda_{\pm}(\mu)
  = \frac{B \pm \sqrt{B^2 + 4A^2}}{2}.
\end{equation}
In the correlated search regime of interest, we assume $A \neq 0$, so that
$\lambda_+(\mu) > 0$ and the fixed point is linearly unstable.

Let $\epsilon := u_0^2 + m_0^2$ denote the squared norm of the initial
condition. Solving the linear system yields exponential growth along the
unstable eigendirection at rate $\lambda_+(\mu)$. As a result, the exit time
from a ball of radius $R$ satisfies, up to additive constants,
\begin{equation}
\label{equ:t_exit}
t_{\rm exit}(\mu)
=
\frac{1}{\lambda_+(\mu)}
\log\!\left(\frac{R}{\sqrt{\epsilon}}\right)
+ \mathcal{O}(1).
\end{equation}

We now specify the initialization. For random isotropic initialization on the
sphere, we have $m_0 = \mathcal{O}(d^{-1/2})$ with high probability. We further
assume $u_0^2 \sim m_0^2$, so that $\epsilon \sim d^{-1}$. For any fixed escape
radius $R>0$, this yields
\begin{equation}
t_{\rm exit}(\mu)
=
\frac{1}{2\lambda_+(\mu)}\log d
+\mathcal{O}(1).
\end{equation}

Recalling that $\tau(\mu)=\lambda_+(\mu)^{-1}$, we conclude that
\begin{equation}
t_{\rm exit}
\;\sim\;
\frac{\tau(\mu)}{2}\,\log d,
\end{equation}
which proves Prop.~\ref{prop:escaping_mediocrity}.
\end{proof}

\subsection{Case of linear activation function}
\label{app:dynamics_of_order_parameter_for_linear_activation}

The dynamics of the order parameters are given by
\begin{equation*}
\begin{cases}
  \dot{u} = (1 - \mu)m - u, \\
  \dot{m} = (1 - \mu)u(1 - m^2).
\end{cases}
\end{equation*}

Linearizing the dynamics around the fixed point $(u,m)=(0,0)$ yields a linear
system whose Jacobian matrix has eigenvalues
\begin{equation*}
  \lambda_{\pm}(\mu)
  = \frac{-1 \pm \sqrt{1 + 4(1 - \mu)^2}}{2}.
\end{equation*}

The associated characteristic time scale $\tau(\mu)$ is given by
\begin{equation*}
  \tau(\mu)
  = \frac{1}{\lambda_+(\mu)}
  = \frac{1 + \sqrt{1 + 4(1 - \mu)^2}}{2(1 - \mu)^2}.
\end{equation*}

Note that this time scale diverges as $\mu \to 1^{-}$:
\begin{equation*}
  \tau(\mu) \sim \frac{1}{(1 - \mu)^2},
  \quad \text{as } \mu \to 1^-.
\end{equation*}

\subsection{Numerical evidence for function with IE$=1$}
\label{appendix:numerical evidence of slow down}

Figure~\ref{fig:dynamics_activation_comparison_appendix} provides additional evidence of catastrophic overtraining across linear, erf, ReLU, and sigmoid activations over a wide range of pre-training alignments, with experimental details given in the caption.
\begin{figure}[!htpp]
    \centering
    \begin{subfigure}
        \centering
        \includegraphics[width=0.75\textwidth]{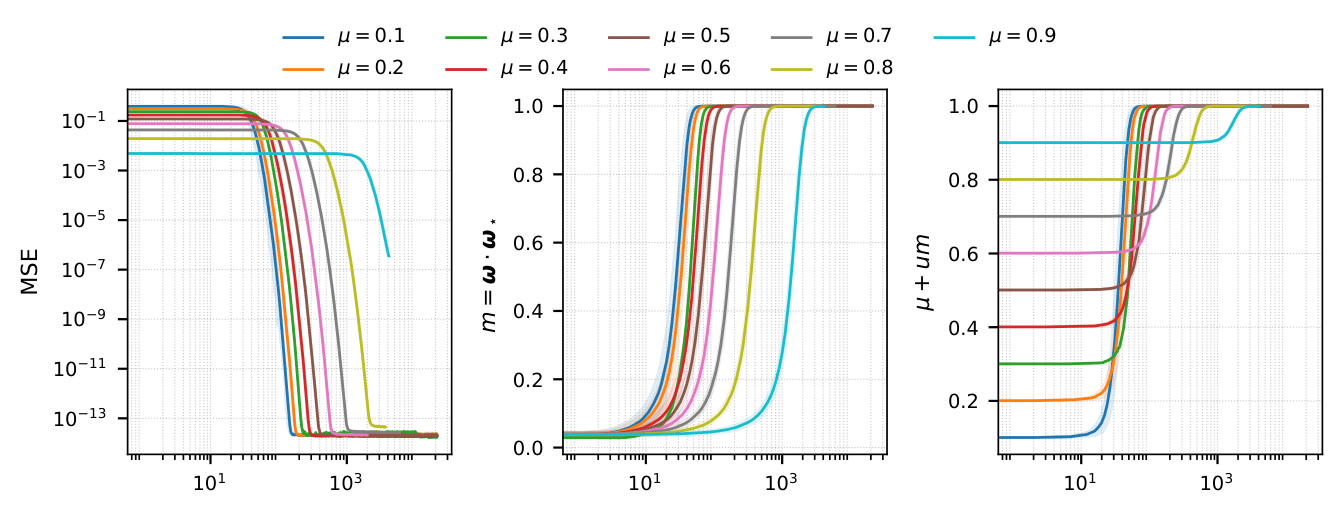}
    \end{subfigure}
    \vspace{-1.9em}
    \begin{subfigure}
        \centering
        \includegraphics[width=0.75\textwidth]{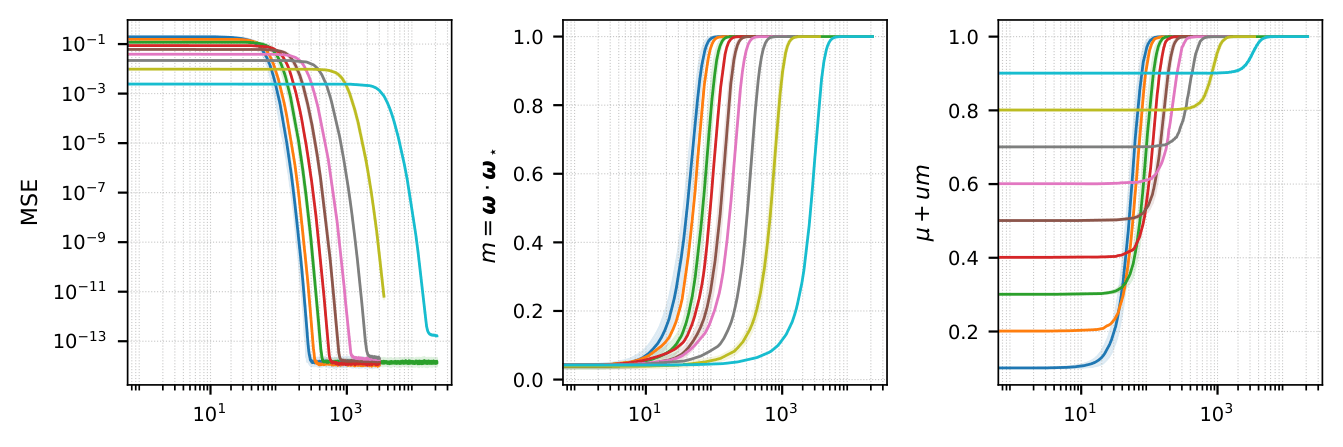}
    \end{subfigure}
    \vspace{-1.0em}
    \begin{subfigure}
        \centering
        \includegraphics[width=0.75\textwidth]{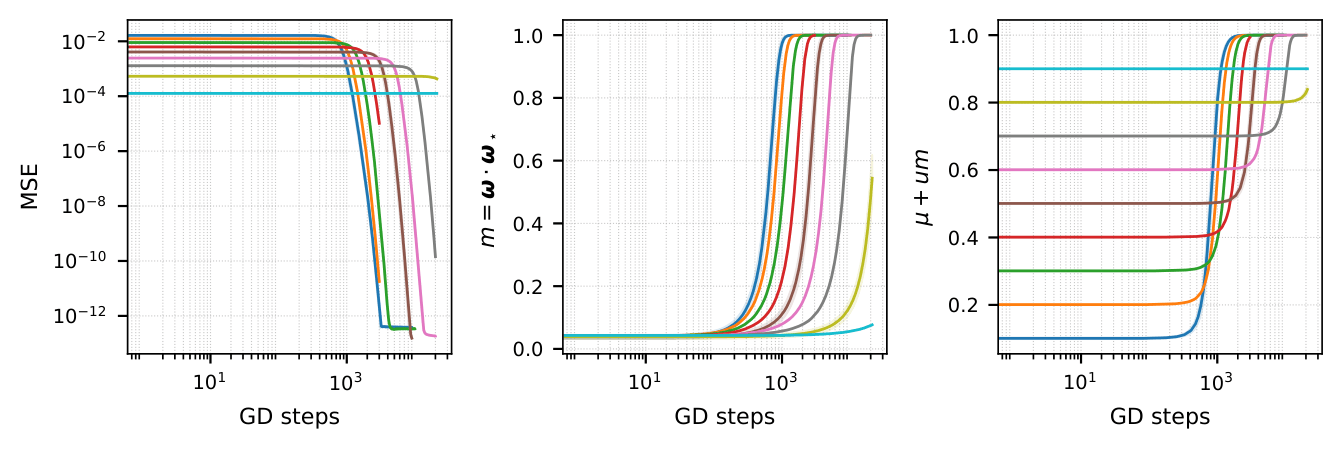}
    \end{subfigure}
    \caption{Learning dynamics of the student model~\eqref{equ:student} for different levels of pre-training alignment $\mu \in \{0.1, 0.2, 0.3, 0.4, 0.5, 0.8, 0.9\}$, trained on data generated by the teacher model~\eqref{equ:teacher}. We consider the matched teacher--student Linear, ReLU, and Sigmoid (from top to bottom) activation setting and train the student using one-pass SGD with batch size $B=5000$ and input dimension $d=1000$. The panels report the test mean squared error (MSE) (left), the alignment between the student and teacher directions $m=\boldsymbol{\omega}_\star\!\cdot\!\boldsymbol{\omega}$ (middle), and the effective teacher--student overlap $m_{\mathrm{eff}}=\mu+u m$ (right).
Shaded regions indicate one standard deviation over three independent runs.
The spherical constraint is enforced by normalizing $\boldsymbol{\omega}$ after each gradient step and we used  a learning rate $lr=0.2 .$}
    \label{fig:dynamics_activation_comparison_appendix}
\end{figure}

\subsection{Case of pure Hermite polynomial: proof of
Propositions~\ref{prop:singularity_pure_odd_Hermite} and
\ref{prop:exit_time_mu_to_1_pure_Hermite}}
\label{appendix:proof_proposition_Hermite}

A common ingredient in these propositions is that they all involve the
quantities $A$ and $B$. We therefore first derive explicit analytical
expressions for these quantities in the case of Hermite polynomial
activations, and then use these expressions to complete the proofs of
each result. For this class of functions, the Hermite coefficients
$\sigma^{[r]}_k$ can be computed explicitly.

Assume that $\sigma(\cdot)$ is the $k^{\star}$-th Hermite polynomial.
This implies that $\sigma_k = \sigma_{k^\star}\,\delta_{k^\star,k}= k^{\star}!\,\delta_{k^\star,k}.$ The associated Hermite expansion for the student in the standard Hermite basis then reads:
\begin{equation}
\sigma_{k}=\sigma_{k^\star}\delta_{k^\star,k}=k^{\star}!\delta_{k^\star,k},\quad   \sigma^{[r]}_k = r^{\frac{k}{2}} \mathbb{E}[\He_k(z) \He_{k^\star}(z\sqrt{r})] ,\quad \bar{\sigma}^{[r]}_k =r^{\frac{k}{2}}\mathbb{E}[\He_k(z) z\sqrt{r} \He^{'}_{k^\star}(z\sqrt{r})].
\end{equation}
By leveragint the following property \begin{align*}
    \mathbb{E}[\He_i(z) \He_{j}(a z)] &=\begin{cases}
        j!a^{i}\frac{\big(\frac{a^2-1}{2} \big)^{\frac{j-i}{2}}}{\big( \frac{j-i}{2} \big)!},\quad \text{if} \quad j \geq i \quad \text{and} \quad j-i=2h,\\
        0\quad \quad \quad \quad \text{otherwise},
    \end{cases}\\
    \nonumber    \He_{k}^{'}(x) &= k\He_{k-1}(x)\\
    x \He_{k}^{'}(x) &= k x \He_{k-1}(x) = k \Big( \He_{k}(x) + (k-1 )\He_{k-2}(x) \Big)
\end{align*}
we obtain the following coefficient
\begin{equation}
\label{term_Hermite_student_coefficient}
\sigma_{k}=\sigma_{k^\star}\delta_{k^\star,k}=k^{\star}!\delta_{k^\star,k},\quad \sigma^{[r]}_k = k^\star! r^{k}\frac{\Big(\frac{r-1}{2}\Big)^{\frac{k^\star-k}{2}}}{\Big(\frac{k^\star-k}{2}\Big)!}~ \text{for which }~ 0\le k \le k^\star~\text{and}~ k \equiv k^{\star}\pmod{2}. 
\end{equation}

\begin{align*}
    \bar{\sigma}^{[r]}_k 
&=\begin{cases} k^\star! r^{k} \frac{\Big(\frac{r-1}{2} \Big)^{\frac{k^\star-2-k}{2}}}{\Big(\frac{k^\star-2-k}{2} \Big)!}\Big(\frac{k^\star r-k}{k^\star-k} \Big),\quad \text{if}~ 0 \le k < k^\star~\text{and}~ k \equiv k^{\star} \pmod{2} \\
k^\star \sigma^{[r]}_{k} \quad \text{if}\quad k= k^{\star}.
\end{cases}
\end{align*}

For $k=k^\star$ the coefficient blend to \begin{equation*}
    \sigma^{[r]}_{k^\star} = k^\star!r^{k^\star},\quad \bar{\sigma}^{[r]}_{k^\star} = k^\star!k^\star r^{k^\star}.
\end{equation*}

In the next we define the admissible indices $k$ as integers satisfying
$0 \le k < k^{\star} ~\text{and}~ k \equiv k^{\star} \pmod{2}$
, we will plug-in these expressions into $A$ and $B$ (see \eqref{app:linearize_the_population_loss A_and_B}) that leads to
\begin{align}
\label{app:linearize_the_population_loss intermediate A_and_B}
 \nonumber   A &=-\Bigg[\frac{k^\star!k^\star r^{k^\star} }{k^\star!\mu^{k^\star+1}}(-k^\star! +\frac{k^\star! r^{k^\star}}{\mu^{k^\star}})+\sum_{k< k^{\star}}\frac{\bar{\sigma}^{[r]}_k\sigma^{[r]}_k}{k!\mu^{2k+1}} \Bigg]=-\Bigg[k^\star!k^\star \mu^{k^\star-1}(-1 +\mu^{k^\star})+\sum_{k< k^{\star}}\frac{\bar{\sigma}^{[r]}_k\sigma^{[r]}_k}{k!\mu^{2k+1}} \Bigg]\\
   B &=-\Bigg[\frac{(k^\star!)^2 k^\star r^{2k^\star}}{k^{\star}!\mu^{2k^\star+2}}+ \sum_{k<k^{\star}} \frac{\bar{\sigma}^{[r]}_k\sigma^{[r]}_k}{k!\mu^{2k+2}} \Bigg]=-\Bigg[k^\star! k^\star \mu^{2k^{\star}-2}+ \sum_{k<k^{\star}} \frac{\bar{\sigma}^{[r]}_k\sigma^{[r]}_k}{k!\mu^{2k+2}} \Bigg]
\end{align}
with \begin{align*}
    \bar{\sigma}^{[r]}_k\sigma^{[r]}_k &= (k^\star!)^2 r^{2k} \frac{\Big(\frac{r-1}{2}\Big)^{k^\star-k-1}}{\Big(\frac{k^\star-k}{2}\Big)!\Big(\frac{k^\star-k-2}{2}\Big)!}\Big(\frac{k^\star r-k}{k^\star-k} \Big) \Rightarrow \frac{\bar{\sigma}^{[r]}_k\sigma^{[r]}_k}{k!\mu^{2k}}= \frac{(k^\star!)^2}{k!} \mu^{2k} \frac{\Big(\frac{r-1}{2}\Big)^{k^\star-k-1}}{\Big(\frac{k^\star-k}{2}\Big)!\Big(\frac{k^\star-k-2}{2}\Big)!}\Big(\frac{k^\star r-k}{k^\star-k} \Big),
\end{align*}
that becomes \begin{equation}
\label{app:linearize_the_population_loss  term for A and B}
    \frac{\bar{\sigma}^{[r]}_k\sigma^{[r]}_k}{k!\mu^{2k+2}} = \frac{(k^\star!)^2}{k!} \mu^{2k-2} \frac{\Big(\frac{r-1}{2}\Big)^{k^\star-k-1}}{\Big(\frac{k^\star-k}{2}\Big)!\Big(\frac{k^\star-k-2}{2}\Big)!}\Big(\frac{k^\star r-k}{k^\star-k} \Big).
\end{equation}

Using \eqref{app:linearize_the_population_loss term for A and B}, \eqref{app:linearize_the_population_loss intermediate A_and_B}  becomes
\begin{equation}
    A =-\Bigg[k^\star!k^\star \mu^{k^\star-1}(-1 +\mu^{k^\star})+(k^\star!)^2 \mu^{-1}f(k^\star,\mu) \Bigg],~
   B =-\Bigg[k^\star! k^\star \mu^{2k^{\star}-2}+  (k^\star!)^2 \mu^{-2}f(k^\star,\mu) \Bigg],
\end{equation}
which after rescaling $A\rightarrow \frac{A}{k^\star!k^\star} $ and $B\rightarrow \frac{B}{k^\star!k^\star} $ (we set $\delta=(k^\star!k^\star)^{-1}$ as it was mentioned in \eqref{equ:delta_hermite}) becomes
\begin{align}
\label{equ: A bar}
  A =-\Bigg[\mu^{k^\star-1}(-1 +\mu^{k^\star})+\frac{(k^\star!)}{k^\star} \mu^{-1}f(k^\star,\mu)  \Bigg], ~~\text{and}~~ 
  B=-\Bigg[ \mu^{2k^{\star}-2}+  \frac{k^\star!}{k^\star} \mu^{-2} f(k^\star,\mu) \Bigg],
\end{align}
with $f$ given by \begin{align}
\label{equ: f}
   f(k^\star,\mu) &= \sum_{k<k^{\star}} \frac{\mu^{2k}}{k!}  \frac{\Big(\frac{r-1}{2}\Big)^{k^\star-k-1}}{\Big(\frac{k^\star-k}{2}\Big)!\Big(\frac{k^\star-k-2}{2}\Big)!}\Big(\frac{k^\star r-k}{k^\star-k} \Big)=\sum_{m= 1}^{p} \frac{\Big(1+\frac{k^\star(r-1)}{2m} \Big)\mu^{2(k^\star-2m)}\big(r-1\big)^{2m-1}}{(k^\star-2m)!m!(m-1)!2^{2m-1}},
\end{align}
where $m=\frac{k^\star-k}{2}$ and we assume $k^\star=2p+1$ or $k^\star=2p.$ 

Now we are concerned with values of $\mu$ in the neighborhood of zero such that the lowest power of $\mu$ dominates $f(k^\star,\mu)$. Since $r=\mu^2$ then 
\begin{align*}
    \Big(1+\frac{k^\star(r-1)}{2m} \Big)&=1-\frac{k^\star}{2m}+\frac{k^\star}{2m}\mu^2\\
    (r-1)^{2m-1}&=(\mu^2-1)^{2m-1}=\sum_{t=0}^{2m-1} \binom{2m-1}{t} (-1)^{2m-1-t}(\mu^2)^{t}=-1+\mu^2+\sum_{t=2}^{2m-1} \binom{2m-1}{t} (-1)^{2m-1-t}(\mu^2)^{t}.
\end{align*}
then $f(k^\star,\mu)$ can be rewritten as \begin{align}
\label{equ:f_term}
\nonumber    f(k^\star,\mu) &= \frac{\Big(1-\frac{k^\star}{2p}+\frac{k^\star}{2p}\mu^2 \Big)\mu^{2(k^\star-2p)}}{(k^\star-2p)!p!(p-1)!2^{2p-1}}\Big(-1+\mu^2\\
\nonumber &+\sum_{t=2}^{2p-1} \binom{2p-1}{t} (-1)^{2p-1-t}(\mu^2)^{t} \Big)+\sum_{m= 1}^{p-1} \frac{\Big(1+\frac{k^\star(r-1)}{2m} \Big)\mu^{2(k^\star-2m)}\big(r-1\big)^{2m-1}}{(k^\star-2m)!m!(m-1)!2^{2m-1}}
\end{align}

\begin{equation*}
f(k^\star,\mu)=
\begin{cases}
\dfrac{\mu^2\Big(-1+\mu^2+\sum_{t=2}^{2p-1} \binom{2p-1}{t} (-1)^{2p-1-t}(\mu^2)^{t} \Big)}{p!(p-1)!2^{2p-1}}
+\tilde{f}(k^\star,\mu)
& \text{if}\quad k^\star=2p,\\[0.5em]
\dfrac{\Big(-\frac{1}{2p}+(\frac{1}{2p}+1)\mu^2 \Big)\Big(-1+\mu^2+\sum_{t=2}^{2p-1} \binom{2p-1}{t} (-1)^{2p-1-t}(\mu^2)^{t} \Big)\mu^{2}}{p!(p-1)!2^{2p-1}}
+\tilde{f}(k^\star,\mu)
& \text{if}\quad k^\star=2p+1,
\end{cases}
\end{equation*}

with \begin{equation*}
  \tilde{f}(k^\star,\mu)  =\sum_{m= 1}^{p-1} \frac{\Big(1+\frac{k^\star(r-1)}{2m} \Big)\mu^{2(k^\star-2m)}\big(r-1\big)^{2m-1}}{(k^\star-2m)!m!(m-1)!2^{2m-1}}.
\end{equation*}

Let's call $f_{0}(k^\star,\mu)$ the part of $f(k^\star,\mu)$ that contains the lowest power of $\mu,$ $f_{0}(k^\star,\mu)$ is given by \begin{equation}
\label{equ: f_0}
f_{0}(k^\star,\mu)=
\begin{cases}
-\dfrac{\mu^2}{p!(p-1)!2^{2p-1}}, 
& \text{if } k^\star = 2p, \\[1ex]
\dfrac{\mu^{2}}{(2p)p!(p-1)!2^{2p-1}}, 
& \text{if } k^\star = 2p+1.
\end{cases}
\end{equation}

Having assembled all the necessary ingredients, we are now ready to proceed with the proof.

\begin{proof}
\underline{Proof of Proposition \ref{prop:singularity_pure_odd_Hermite}}. 

Let's call $k^\star =2p+1 \ge 3$, the Hermite degree. Using \eqref{equ: f}, let's consider $\mu$ such that \begin{equation}
    1+\frac{k^{\star}(\mu^2-1)}{2}>0 \Leftrightarrow \mu^2 > 1-\frac{2}{k^\star} \Leftrightarrow \mu >\mu_0,
\end{equation}
with $\mu_{0}=\sqrt{1-\frac{2}{k^\star}}\in(0,1).$ Now consider $\mu>\mu_0$ therefore $\forall m\in \{1,\cdots,p\}$ the following hold \begin{equation}
\begin{cases}
    1+\frac{k^\star(\mu^2-1)}{2m}>0\\
    (r-1)^{2m-1} <0
\end{cases} \implies f(k^\star,\mu) <0.
\end{equation}
Using \eqref{equ: A bar} one can therefore concludes that $\forall \mu>\mu_0, A(\mu)>0.$

Using \eqref{equ: f_0} and \eqref{equ: A bar}, for $\mu=\varepsilon>0$ with $\varepsilon$ in the neighbourhood of zero, we have
 \begin{align}
\nonumber \frac{k^\star!}{k^\star}\varepsilon^{-1}f(k^\star,\varepsilon)&=\frac{(2p+1)!}{(2p+1)} \frac{\varepsilon}{(2p)p!(p-1)!2^{2p-1}}+O(\varepsilon^2)\\
&\implies A(\varepsilon) = -\frac{(2p+1)!}{(2p+1)} \frac{\varepsilon}{(2p)p!(p-1)!2^{2p-1}}+O(\varepsilon^2) <0.
\end{align} 

Since $A$ is continuous on $(0,1)$ and changes sign on this interval, there exists $\tilde{\mu}\in(0,1)$ such that $A(\tilde{\mu})=0$.

\end{proof}

\begin{proof}
\underline{Proof of Proposition \ref{prop:exit_time_mu_to_1_pure_Hermite}}. 

Using \eqref{equ: f} and \eqref{equ: f_0}, one can show that when $\mu$ is in the neighborhood of zero, $f(k^\star,\mu)$ becomes
\begin{equation}
\frac{k^{\star}!}{k^{\star}}\mu^{-1}f(k^\star,\mu)=
\begin{cases}
-\frac{(2p)!}{2p}\dfrac{\mu}{p!(p-1)!2^{2p-1}}+O(\mu^2), 
& \text{if } k^\star = 2p, \\[1ex]
\frac{(2p+1)!}{2p+1}\dfrac{\mu}{(2p)p!(p-1)!2^{2p-1}}+O(\mu^2), 
& \text{if } k^\star = 2p+1.
\end{cases}
\end{equation}
Now given that $k^\star >2$ one obtains the following \begin{align}
    A &= -\Big(\mu^{k^\star-1}(-1+\mu^{k^\star}) +\frac{k^{\star}!}{k^{\star}}\mu^{-1}f(k^\star,\mu)\Big)=
\begin{cases}
\frac{(2p)!}{2p}\dfrac{\mu}{p!(p-1)!2^{2p-1}}+O(\mu^2), 
& \text{if } k^\star = 2p, \\[1ex]
-\frac{(2p+1)!}{2p+1}\dfrac{\mu}{(2p)p!(p-1)!2^{2p-1}}+O(\mu^2), 
& \text{if } k^\star = 2p+1,
\end{cases}\\
B&=-\Bigg[ \mu^{2k^{\star}-2}+  \frac{k^\star!}{k^\star} \mu^{-2} f(k^\star,\mu) \Bigg]=\begin{cases}
\displaystyle
\frac{(2p)!}{2p}\frac{1}{p!(p-1)! \, 2^{2p-1}}+O(\mu^2),
& \text{if } k^\star = 2p, \\[1ex]
\displaystyle
-\frac{(2p+1)!}{2p+1}\frac{1}{p!(p-1)! \, (2p)\, 2^{2p-1}} +O(\mu^2),
& \text{if } k^\star = 2p+1.
\end{cases}
\end{align}

In the regime $\mu \to 0^{+}$, let's call $A_0$ and $B_0$ respectively the leading order of $A$ and $B$. Using \eqref{equ:t_exit} we get
\begin{equation*}
    \sqrt{B_0^2 + 4A_0^2}
    = |B_0|\left(1 + \frac{4A_0^2}{B_0^2}\right)^{1/2}
    = |B_0|\left(1 + \frac{2A^2}{B_0^2}\right)= |B_0|(1 + \frac{2A_0^2}{B^2_0}),
\end{equation*}
which leads to
\begin{equation*}
  \frac{-B_0 + |B_0|\left(1 + \frac{2A_0^2}{B_0^2}\right)}{2A_0^2}
  =\frac{-B_0 + |B_0|}{2A_0^2} + \frac{|B_0|}{B_0^2} =
  \begin{cases}
     \displaystyle \frac{1}{B_0} & \text{if } B_0 > 0, \\[6pt]
     \displaystyle -\frac{B_0}{A_0^2}+\frac{1}{|B_0|} & \text{if } B_0 < 0.
  \end{cases}
\end{equation*}

One finally obtains
\begin{equation}
  \tau(\mu)
  \sim
  \begin{cases}
     \displaystyle \frac{1}{B_0},
     & \text{if } k^{\star} = 2p \text{ with } p>1, \\[8pt]
     \displaystyle
     -\frac{B_0p!(p-1)!(2p)2^{2p-1}}{(2p+1)!}\,
     \frac{1}{\mu^2},
     & \text{if } k^{\star} = 2p+1 \text{ with } p>1.
  \end{cases}
\end{equation}

From \eqref{equ: f} we have that $f(k^\star,\mu)$ is expressed as a positive powers of $\mu^2-1$ (we recall that $r= \mu^2$). 
In the regime $\mu^2\to 1,$ we have 
\begin{align}
   f(k^\star,\mu) &= \sum_{m= 1}^{p} \frac{\Big(1+\frac{k^\star(r-1)}{2m} \Big)\mu^{2(k^\star-2m)}\big(r-1\big)^{2m-1}}{(k^\star-2m)!m!(m-1)!2^{2m-1}} = \frac{\mu^{2(k^\star-2)}(\mu^2-1)}{(k^\star-2)!2}+O((\mu^2-1)^2),\\
   -1+\mu^{k^\star} &= \frac{k^\star}{2}(\mu^2-1) +O((\mu^2-1)^2),
\end{align}
which leads to \begin{align*}
    A&=-\Big(\frac{k^\star}{2}(\mu^2-1)\mu^{k^\star-1} +\frac{k^\star-1}{2\mu} \mu^{2(k^\star-2)}(\mu^2-1)+O((\mu^2-1)^2)\Big)=a_{1}(\mu)\varepsilon + O(\varepsilon^2)\\
    B&=-\Big(\mu^{2k^\star-2} +\frac{k^\star-1}{2\mu^2} \mu^{2(k^\star-2)}(\mu^2-1)+O((\mu^2-1)^2)\Big)=b_{0}(\mu)+b_{1}(\mu)\varepsilon+O(\varepsilon^2),
\end{align*}
with $\varepsilon=\mu^2-1,a_{1}(\mu)=-\frac{k^\star}{2}\mu^{k^\star-1} -\frac{k^\star-1}{2\mu}\mu^{2(k^\star-2)}, b_{0}(\mu)=-\mu^{2k^\star-2}$ and $b_{1}(\mu)=-\frac{k^\star-1}{2\mu^2} \mu^{2(k^\star-2)}.$

One then has \begin{align*}
    \tau(\mu)&=\frac{-(b_{0}(\mu)+b_{1}(\mu)\varepsilon)+\sqrt{(b_{0}(\mu)+b_{1}(\mu)\varepsilon)^2+4a^{2}_{1}(\mu)\varepsilon^2}}{2a^{2}_{1}(\mu)\varepsilon^2}=-\frac{b_0(\mu)}{a_1^2(\mu)}\frac{1}{\varepsilon^2}+O(\frac{1}{\varepsilon})\\
    &=\frac{\mu^{2k^\star-2}}{(\frac{k^\star}{2}\mu^{k^\star-1} +\frac{k^\star-1}{2\mu}\mu^{2(k^\star-2)})^2}\frac{1}{(\mu^2-1)^2}+O(\frac{1}{\mu^2-1})\sim \frac{4}{(2k^\star -1)^2}\frac{1}{(\mu^2-1)^2}.
\end{align*}

\end{proof}

\begin{figure}[!htbp]
    \centering
    \includegraphics[width=0.5\linewidth]{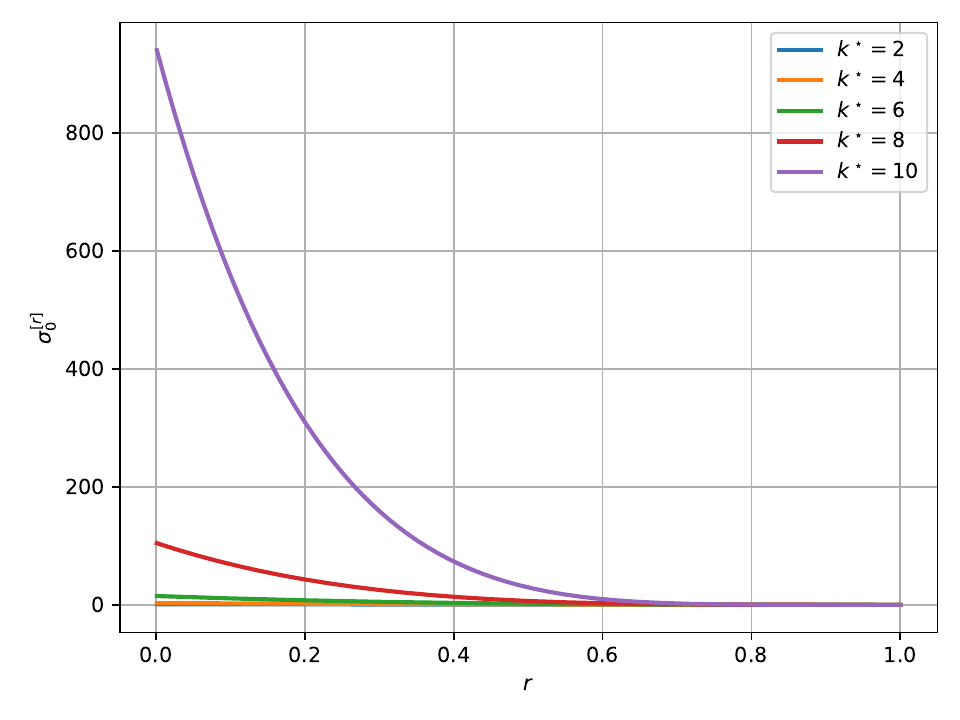}
    \caption{Mean student output (y-axis) for a pure Hermite activation of degree $k^{\star}$ as a function of the pre-activation variance (x-axis).}
    \label{fig:mean_even_degree_Hermite}
\end{figure}

\begin{figure}
    \centering
    \begin{subfigure}
        \centering
        \includegraphics[width=0.9\textwidth]{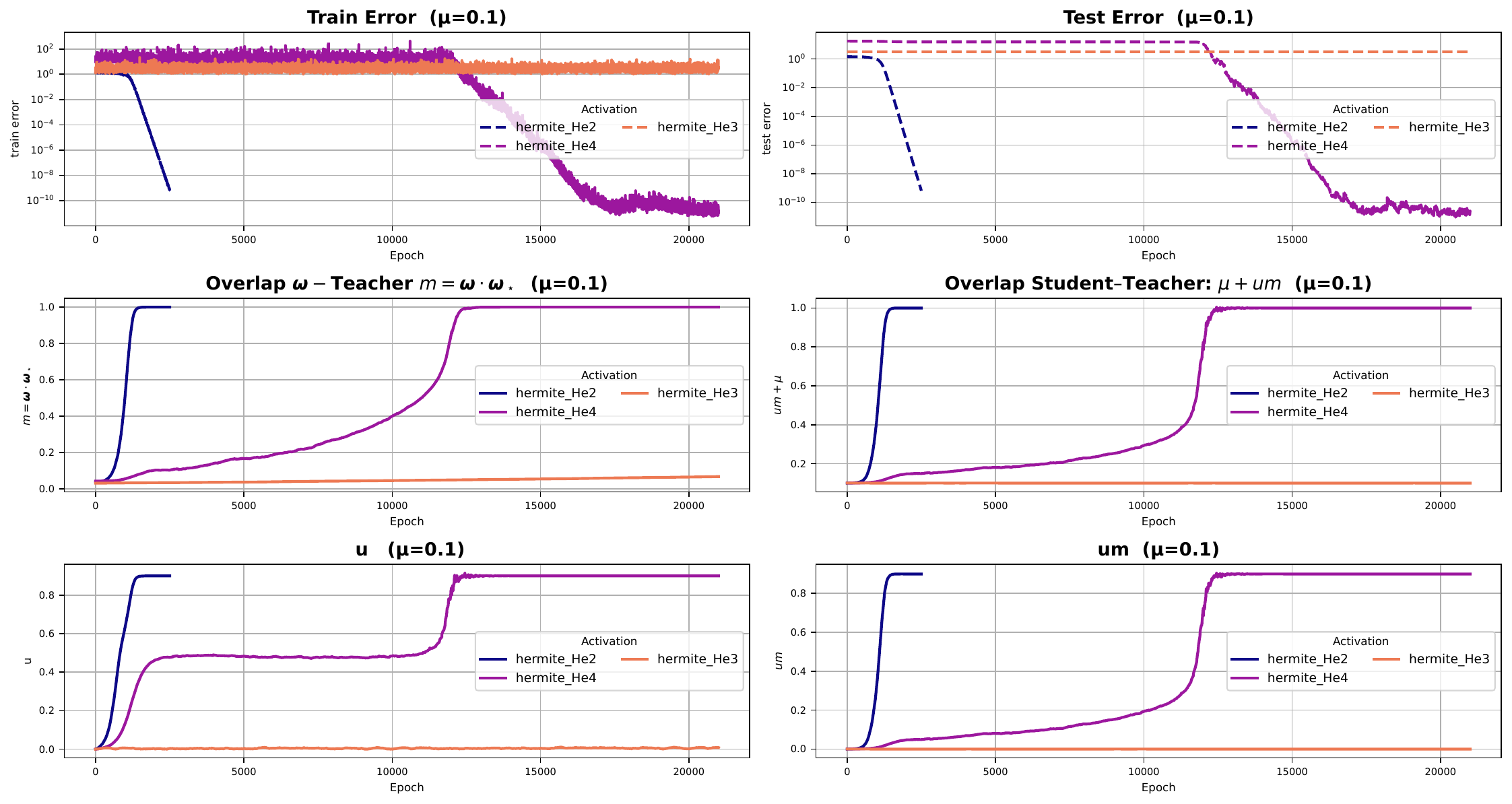}
    \end{subfigure}
    \vspace{-0.5em}
    \begin{subfigure}
        \centering
        \includegraphics[width=0.9\textwidth]{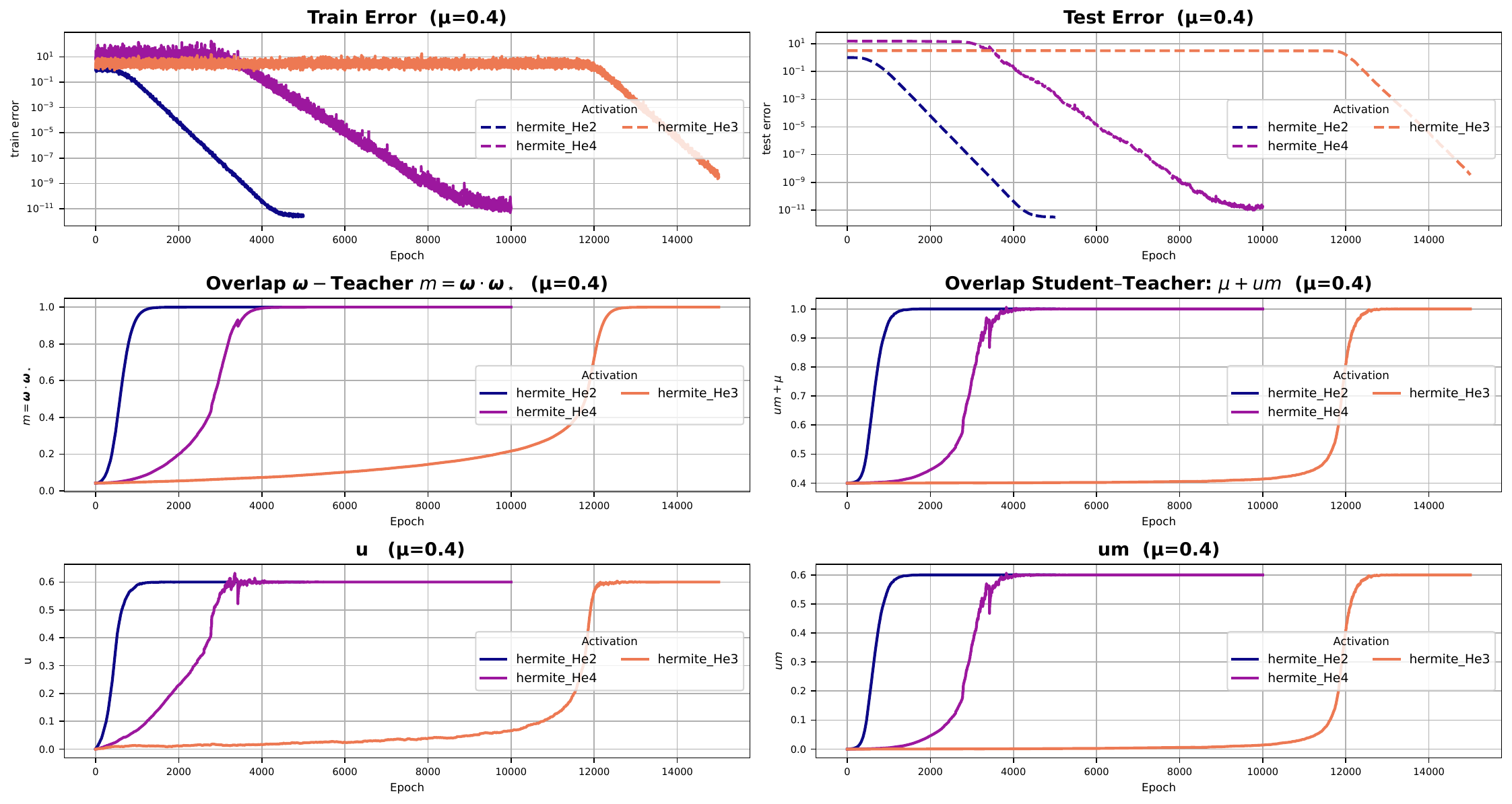}
    \end{subfigure}
    \caption{Learning dynamics of the student model~\eqref{equ:student} for different levels of pre-training alignment $\mu \in \{0.1, 0.4\}$, trained on data generated by the teacher model~\eqref{equ:teacher}. We consider the matched teacher--student activation function to be Hermite polynomial with degree $k\in\{2,3,4\}$ as indicated in the legend. The student was trained using one-pass SGD with batch size $B=5000$ and input dimension $d=1000$. The panels report the dynamic of the test/train mean squared error (MSE), the alignment between the student and teacher directions $m=\boldsymbol{\omega}_\star\!\cdot\!\boldsymbol{\omega}$, the effective teacher--student overlap $m_{\mathrm{eff}}=\mu+u m$, $u$ and the product between overlap $um$.
The results were average over three independent runs.
The spherical constraint is enforced by normalizing $\boldsymbol{\omega}$ after each gradient step and we used  a learning rate $lr=10^{-2}\times \delta_{k} $ with $\delta_k=1/(k!\times k)$.}
    \label{fig:numrics_dynamics_mean_activation_appendix}
\end{figure}

\subsection{Effect of the mean for pure even Hermite activations}
\label{app:mean_even_Hermite}

We consider the case in which the teacher activation function is a Hermite
polynomial of degree $k^{\star}$, and the student employs the same activation.
As shown in \eqref{term_Hermite_student_coefficient}, the Hermite coefficients
of the student model depend on the parity of $k^{\star}$.
In particular, for odd degrees $k^{\star}=2p+1$, the first nonzero coefficient
in the Hermite expansion is $\sigma^{[r]}_{1}$, whereas for even degrees
$k^{\star}=2p$, it is $\sigma^{[r]}_{0}$.
As a direct consequence, the mean of the activation satisfies $\mathbb{E}_{z\sim\mathcal{N}(0,r)}[\sigma(z)]=
\begin{cases}
0,& \text{if } k^{\star}=2p+1,\; p\ge 1, \\[0.3em]
\sigma^{[r]}_0,& \text{if } k^{\star}=2p,\; p\ge 1,
\end{cases}$
where we used the identity \eqref{app:property_Hermite expectation}. For even-degree Hermite activations, the nonzero mean is $\sigma^{[r]}_0=k^{\star}!\frac{(r-1)^{k^{\star}/2}}{2^{k^{\star}/2}\,(\tfrac{k^{\star}}{2})!}.$
This nonzero mean induces an additional contribution to the early-time dynamics
of the order parameters $u$ and $m$.
Importantly, this effect vanishes continuously as $r \to 1$, as illustrated in
Fig.~\ref{fig:mean_even_degree_Hermite}.

In the regime considered in Appendix~\ref{app:linearize_the_population_loss},
where $r \simeq \mu^2$, increasing the initial signal strength $\mu$ reduces the
magnitude of the zeroth Hermite coefficient $\sigma^{[r]}_0$, and thus weakens
the corresponding mean effect.
Numerically, this manifests as the student dynamics after escaping the correlated search phase exhibiting an initial
plateau for small $\mu$, before escaping toward alignment, whereas this plateau
progressively disappears as $\mu$ increases (see
Fig.~\ref{fig:numrics_dynamics_mean_activation_appendix}).

\paragraph{Numerical evidence of the effect of the presence of the mean on the dynamics of Hermite polynomials}
\label{app:numerics_bias_even_Hermite}

When considering Hermite activations, as predicted by our theory (cf.\ Figure~\ref{fig:escaping_time_IE_1_matching_activation}), we observe that for odd degrees the escape time diverges as $\mu \to 0^{+}$, i.e., $\tau(\mu)\to +\infty$, whereas for even degrees the escape time $\tau$ remains finite. In the main text, we argued that this qualitative difference arises from the fact that students with even-degree Hermite activations exhibit a non-zero mean, while those with odd-degree activations do not. In Figure~\ref{fig:numrics_dynamics_mean_activation_appendix}, we provide numerical evidence supporting this claim. Furthermore, Figure~\ref{fig:mean_even_degree_Hermite} illustrates how this mean depends on both~$\mu$ and the Hermite degree. The key takeaway is that whenever the mean is non-vanishing, learning initially proceeds much faster; however, after this component is learned, the dynamics enter an extended plateau before higher-order features are eventually acquired.

\section{A physical interpretation of LoRA dynamics}
\label{app:ODE_of_the_damp_harmonic_oxillator}

Using the linearized dynamics in the correlated search phase
\eqref{equ:dynamic_u_m_correlated_search_phase}, which are fully characterized
by the coefficients $A$ and $B$, the evolution of the order parameters admits a
useful interpretation in terms of an equivalent physical system near an
unstable fixed point. In particular, we show that the effective dynamics in
this regime are formally equivalent to those of a damped nonlinear oscillator.
Such physics-inspired viewpoints have long been used in machine learning, both
to motivate optimization algorithms and to provide theoretical insight into
learning dynamics~\cite{mezard1987spin,choromanska2015loss,mehta2019high}.

Concretely, the coupled first-order system in
\eqref{equ:dynamic_u_m_correlated_search_phase} can be rewritten as the
second-order ordinary differential equation
\begin{equation}
\label{equ:ODE_of_the_damp_harmonic_oxillator}
     \ddot{g}(t) - B\,\dot{g}(t) + \frac{\partial V}{\partial g}
     = 0,
\end{equation}
which describes the motion of a particle in a one-dimensional nonlinear
potential $V(g) = -A^{2}\log\!\big(\cosh g\big)$.
The relation $g(t)=\tanh^{-1} m(t)$ links the auxiliary variable $g$ to the order
parameter $m$, so that $g\to\pm\infty$ corresponds to $m\to\pm 1$. Within this mechanical analogy, the term proportional to $\dot g$ acts as a
linear velocity-dependent force: it is dissipative when $B<0$, corresponding to
effective damping, and amplifying when $B>0$, corresponding to negative damping.
Accordingly, $B$ controls whether the dynamics locally dissipate or inject
energy, while $A$ sets the curvature of the potential and thus the strength of
the signal-induced instability. Altogether, this suggests that the right part of the LoRA block primarily determines the effective directions and curvature that drive learning, while the left part mainly controls the time scale at which the dynamics progress along these directions. Detailed
derivations are provided in
Appendix~\ref{app:ODE_of_the_damp_harmonic_oxillator_proof}.

\emph{For linear activations, we have $A = (1-\mu)$ and $B=-1$. In the physics interpretation, the dynamics are equivalent to a particle rolling down a cliff, initially placed at the top (cf. Fig.\ref{fig:linear_potential}). Converging to the global minimizer corresponds to the particle moving away from the top of the cliff. The parameter $A$ controls the slope of the cliff, implying that smaller values of~$\mu$ correspond to steeper slopes and therefore faster escape from the top. The parameter $B$, on the other hand, acts as an additional force applied to the particle during its descent: a negative value of~$B$ corresponds to a dissipative (damping) force,
whereas a positive value would correspond to an amplifying (impulsive) force.
} See Appendix~\ref{app:ODE_of_the_damp_harmonic_oxillator_proof_linear} for detailed derivations.

\subsection{Proof of the physics interpretation of the linearized dynamics}
\label{app:ODE_of_the_damp_harmonic_oxillator_proof}

Our analysis holds in the regime described in
Appendix~\ref{app:linearize_the_population_loss}, where higher-order powers of
the order parameters can be neglected. In this regime, the learning dynamics
are governed by the closed system of ordinary differential equations given in
Eq.~\eqref{equ:dynamic_u_general}. Since these equations are linear in the order
parameters, they define a system of first-order linear differential equations.
In this appendix, we show that this system can be recast as a second-order
differential equation, which admits a natural interpretation as a
one-dimensional dissipative mechanical system.

\paragraph{Change of variables.}
We introduce the following change of variables:
\begin{equation}
  m(t) = \tanh g(t),
\end{equation}
where $g(t) \in \mathbb{R}$. Differentiating with respect to time yields
\begin{align}
  \dot{m}(t)
  &= \dot{g}(t)\cosh^{-2} g(t)
   = \dot{g}(t)\bigl(1 - m(t)^2\bigr).
\end{align}
In the linearized regime considered in
Appendix~\ref{app:linearize_the_population_loss}, the overlap $m(t)$ remains
small throughout the search phase, so that $m(t)^2 \ll 1$. As a consequence, we
may approximate
\begin{equation}
  \dot{m}(t) \simeq \dot{g}(t).
\end{equation}

\paragraph{Derivation of the second-order equation.}
Using the second dynamical equation in
\eqref{equ:dynamic_u_general}, differentiating once more with respect to time
gives
\begin{equation*}
  \ddot{g}(t) = A \dot{u}(t).
\end{equation*}
From Eq.~\eqref{equ:dynamic_u_general}, the evolution of $u(t)$ is
\begin{equation*}
  \dot{u}(t) = B u(t) + A m(t).
\end{equation*}
Substituting this expression into the equation for $\ddot{g}(t)$ and using
$u(t) \simeq \dot{g}(t)$, we obtain
\begin{align*}
  \ddot{g}(t)
  &= A \bigl(B u(t) + A m(t)\bigr)
   = B \dot{g}(t) + A^2 \tanh g(t).
\end{align*}
Rearranging terms, we arrive at the second-order differential equation
\begin{equation}
  \ddot{g}(t) - B \dot{g}(t) - A^2 \tanh g(t) = 0.
  \label{eq:newtonian_r}
\end{equation}

\paragraph{Mechanical interpretation.}
Equation~\eqref{eq:newtonian_r} can be written in Newtonian form with a
velocity-dependent force:
\begin{equation}
  \ddot{g}(t) - B \dot{g}(t) + \frac{\partial V(g)}{\partial g} = 0,
\end{equation}
where the effective potential $V(g)$ is
\begin{equation}
  V(g) = - A^2 \log\,\bigl(\cosh g\bigr).
\end{equation}
This representation interprets the learning dynamics as the motion of a
particle in a one-dimensional potential $V(g)$, subject to a linear force
proportional to the velocity. Importantly, this term is \emph{dissipative} only
when $B<0$ (effective friction), while it becomes \emph{amplifying} when $B>0$
(negative friction).

This distinction is made explicit by introducing the associated mechanical
energy
\begin{equation}
  E(t) := \frac{1}{2}\dot{g}(t)^2 + V\bigl(g(t)\bigr).
\end{equation}
Differentiating and using the equation of motion yields
\begin{align}
  \frac{\mathrm{d}}{\mathrm{d}t}E(t)
  &= \dot{g}(t)\ddot{g}(t) + V'\bigl(g(t)\bigr)\dot{g}(t) = \dot{g}(t)\Bigl(\ddot{g}(t) + V'\bigl(g(t)\bigr)\Bigr)
   = B\,\dot{g}(t)^2.
  \label{eq:energy_balance_B}
\end{align}
Therefore, if $B<0$ then $\dot{E}(t)\le 0$ and the dynamics monotonically
dissipate energy, consistent with a frictional interpretation. Conversely, if
$B>0$ then $\dot{E}(t)\ge 0$ and the velocity-dependent term injects energy into
the system, leading to effective self-amplification. Hence, the qualitative
behavior (plateaus, instabilities, and escape times) is governed by the
competition between the driving set by the potential shape (controlled by $A$)
and the sign and magnitude of the velocity-dependent term controlled by $B$.

\begin{figure}
    \centering
    \includegraphics[width=0.5\linewidth]{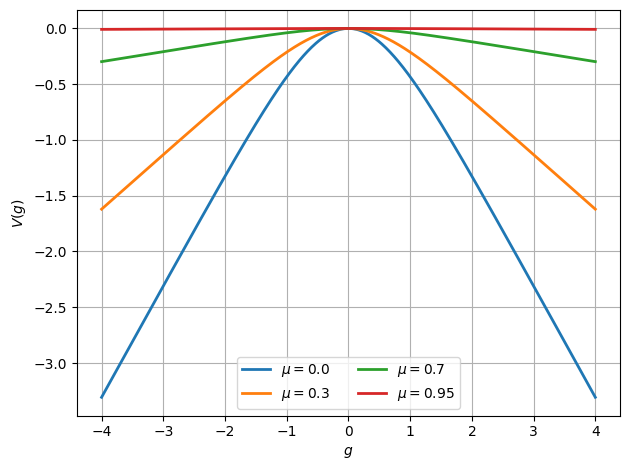}
    \caption{Effective potential for linear activation}
    \label{fig:linear_potential}
\end{figure}

\paragraph{Case of linear activation}
\label{app:ODE_of_the_damp_harmonic_oxillator_proof_linear}

When both the teacher and the student use a linear activation function, the
coefficients reduce to $B=-1$ and $A=1-\mu$, and
Eq.~\eqref{eq:newtonian_r} becomes
\begin{equation}
\label{diff_oscillation}
  \ddot{g}(t) + \dot{g}(t) - (1-\mu)^2 \tanh g(t) = 0.
\end{equation}
This is a nonlinear second-order ordinary differential equation.

\begin{itemize}
\item \textbf{Case $\mu=1$.}
When $\mu=1$, the nonlinear term vanishes and
Eq.~\eqref{diff_oscillation} reduces to a linear equation, which admits the
explicit solution
\begin{equation*}
  g(t) = g_0 + \dot{g}_0\bigl(1 - e^{-t}\bigr).
\end{equation*}
In terms of the original order parameters, this corresponds to
\begin{equation*}
  u(t) = u_0 e^{-t}, \qquad m(t) = m_0.
\end{equation*}
Hence, $u(t)\to 0$ as $t\to\infty$, while the overlap $m(t)$ remains frozen at its
initial value. For a random initialization
$\boldsymbol{\omega}\sim \mathrm{Unif}(\mathbb{S}^{d-1})$, we have
$m_0 = O(d^{-1/2})$, so that the initial alignment vanishes in the large-$d$
limit with high probability. In this regime, the associated population loss
satisfies $\mathcal{L}(u_0,m_0)=o(1)\; \text{as}\; d\to +\infty$, indicating that the system is initialized in the vicinity of the teacher solution.

\item \textbf{Case $0 \le \mu < 1$.}
For $\mu<1$, Eq.~\eqref{diff_oscillation} does not admit a closed-form analytical
solution. Nevertheless, it admits a clear mechanical interpretation as the
damped motion of a particle in the effective potential
\begin{equation}
  V(g) = - (1-\mu)^2 \log\!\bigl(\cosh g\bigr).
\end{equation}
For any $\mu<1$, this potential is concave and symmetric around $g=0$, with a
curvature that vanishes as $\mu \to 1$. A random initialization corresponds to
$m\simeq 0$, and hence $g\simeq 0$, placing the particle near the unstable
maximum of the potential. Perfect alignment corresponds to $m(t)=\pm 1$, or
equivalently $g(t)\to \pm\infty$, which can be interpreted as the particle
escaping from the top of the potential. The convergence rate toward alignment is
controlled by the competition between the potential-induced instability (cf. Fig. \ref{fig:linear_potential}),
parametrized by $(1-\mu)^2$, and the effective damping coefficient, which in
this case is equal to $1$.
\end{itemize}

\section{Descent phase}
\label{descent_phase}
In this section, we aim to provide mathematical results showing that, once the
dynamics enter the \textit{descent phase}, they necessarily converge toward a global minimizer of the population loss. Our results are built on the following
assumption on the population loss.

\begin{assumption}[Population loss]
\label{ass:grad_loss_function_m}
There exist $\eta\in(0,1)$ and $c>0$ such that, for all $u,m\in\mathbb{R}$ with
$|m|\in[\eta,1),\, \operatorname{sign}(m)\,\partial_m \mathcal{L}(u,m) \;\le\; -c.$
\end{assumption}

In this phase, at least one of the order parameters scales with $\mu$, so that
the linear approximation used in the correlated search phase is no longer
valid and the full structure of the population loss must be taken into
account. The dynamics of the order parameters in the descent phase are governed
by
\begin{equation}
\label{equ:dynamic_descent_phase}
    \dot m = -\,\phi_m(u,m)\,(1-m^2),
    \qquad 
    \dot u = -\,\phi_u(u,m),
\end{equation}
where $\phi_m := \partial_m \mathcal{L}(u,m)$ and 
$\phi_u := \partial_u \mathcal{L}(u,m)$.

The population loss $\mathcal{L}(u,m)$ acts as a Lyapunov function for the flow
\eqref{equ:dynamic_descent_phase}. Taking its time derivative along the
trajectories yields
\begin{equation}
\label{equ:dynamic_lyapunov}
\frac{d\mathcal{L}(u,m)}{dt}
= -\,\phi_u(u,m)^2 - \phi_m(u,m)^2\,(1-m^2)
\;\le\; 0,
\end{equation}
with equality if and only if $\phi_u(u,m)=0$ or $|m|=1$.
Equation~\eqref{equ:dynamic_lyapunov} shows that the dynamics monotonically
decrease the population loss. Since both $u$ and $m$ are nontrivial along the
trajectory, the dynamics converge to a minimizer of the population loss. The following result guarantees
that, in the descent phase, the overlap satisfies $|m|\to 1$.

\begin{proposition}
\label{prop:exp_convergence_m}
Assume that Assumption~\ref{ass:grad_loss_function_m} holds, and let $(u(t),m(t))$
be a solution of \eqref{equ:dynamic_descent_phase} with
$|m(t_{0})|\in[\eta,1)$ for some $t_0\ge t_{\rm exit}$. Then there exists a
constant $c>0$ (depending only on $\mathcal{L}$) such that, for all $t\ge t_0$,
\begin{equation}
    1 - |m(t)|
    \;\le\;
    \bigl(1 - |m(t_0)|\bigr)\,e^{-2c\,(t-t_0)}.
\end{equation}
In particular, $|m(t)|\to 1$ exponentially fast as $t\to\infty$, and the sign of
$m(t)$ remains constant for all $t\ge t_0$.
\end{proposition}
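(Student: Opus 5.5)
The plan is to track the quantity $x(t):=1-|m(t)|$ directly and obtain a linear differential inequality for it from the overlap equation in \eqref{equ:dynamic_descent_phase}. The first step is sign preservation. Suppose $m$ reaches $0$ after $t_0$, and let $t_1$ be the first time $|m(t)|=\eta$. On $[t_0,t_1]$ we have $|m|\ge\eta$, so Assumption~\ref{ass:grad_loss_function_m} applies. It gives
\begin{equation*}
\frac{d}{dt}|m| = \operatorname{sign}(m)\,\dot m = -\operatorname{sign}(m)\,\phi_m(u,m)\,(1-m^2) \ge c\,(1-m^2) > 0
\end{equation*}
as long as $|m|<1$. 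Hence $|m|$ is nondecreasing on the region $\{|m|\in[\eta,1)\}$. Since $|m(t_0)|\ge\eta$, the trajectory can never return below $\eta$. By continuity $m$ never crosses $0$, so its sign is constant for all $t\ge t_0$.

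Next, $|m|=1$ is invariant: the factor $(1-m^2)$ vanishes there, and the ODE has locally Lipschitz coefficients, so uniqueness holds. Thus either $|m|$ reaches $1$ only in the limit, or $|m|\equiv 1$, in which case the bound is trivial. Therefore $|m(t)|\in[\eta,1)$ for all $t\ge t_0$ and the assumption holds along the whole trajectory.

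Then I derive the exponential rate. For $t\ge t_0$,
\begin{equation*}
\dot x = -\frac{d}{dt}|m| \le -c\,(1-|m|)(1+|m|) \le -c\,(1+\eta)\,x .
\end{equation*}
Grönwall then gives $x(t)\le x(t_0)\,e^{-c(1+\eta)(t-t_0)}$. To match the stated rate $e^{-2c(t-t_0)}$, I would either absorb constants, since the statement only asks for some $c>0$ depending on $\mathcal{L}$, and rename $c(1+\eta)/2$ as $c$; or use $1+|m|\ge 1+\eta$ together with $|m|\to1$ to get $2c$ asymptotically. I would take the first route and restate the constant.

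The main obstacle is the well-posedness and bookkeeping. The assumption is stated only for $|m|\in[\eta,1)$, uniformly in $u$, so I must make sure that $u(t)$ stays finite, or at least that the flow exists globally. For this I would use the Lyapunov identity \eqref{equ:dynamic_lyapunov}: $\mathcal{L}$ is nonincreasing, and I would assume it is coercive in $u$, or simply note that the assumption is uniform in $u$ and the $m$-equation needs no control on $u$ beyond existence of the solution. I would also handle the first-exit-time argument carefully with a continuity argument, to justify that $|m|$ cannot drop below $\eta$.
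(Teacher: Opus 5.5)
Your proposal is correct and follows essentially the same route as the paper: differentiate $|m|$, use Assumption~\ref{ass:grad_loss_function_m} to get $\frac{d}{dt}|m|\ge c(1-m^2)$, bound $1+|m|\ge 1+\eta$ to obtain $\dot x\le -c(1+\eta)\,x$ for $x=1-|m|$, apply Gr\"onwall, and rename $c(1+\eta)/2$ as $c$. Your barrier argument for staying in $\{|m|\ge\eta\}$ and your uniqueness argument that $|m|=1$ cannot be reached in finite time are more careful than the paper, which simply asserts $|m(t)|\in[\eta,1)$. For the barrier argument, phrase it via the infimum of times with $|m|<\eta$ and rely on the strict positivity of $\frac{d}{dt}|m|$, since the ``first time $|m|=\eta$'' wording degenerates when $|m(t_0)|=\eta$.
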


We showed above that $|m(t)| \to 1$. It remains to characterize the
asymptotic behavior of the scalar parameter $u$. It is important to keep in
mind that, although $|m|=1$, in the presence of teacher--student activation
mismatch the global minimizer of the population loss does not necessarily
correspond to recovery of the ground-truth direction, meaning that
$m_{\rm eff}=\mu+um\neq \pm 1$. Therefore, full recovery of the teacher
direction is guaranteed only in the matching case $\phi(\cdot)=\sigma(\cdot)$.
To prove this, we require the following assumption.

\begin{assumption}[Slice well-posedness and sign-identifiability]
\label{ass:slice_wellposed_identifiable_relu}
Assume the matching setting $\sigma=\phi$. The activation
$\sigma:\mathbb R\to\mathbb R$ is locally Lipschitz (hence differentiable
a.e.) and of at most polynomial growth, so that the slice losses defined below
are finite and $C^1$ in $u$.

Fix $s\in\{-1,+1\}$ and define the slice loss
$$\mathcal{L}(u,s)=\mathbb E\Big[\big(\sigma(\lambda_\star)- \sigma((\mu+us)\lambda_\star)\big)^2\Big],\qquad
\lambda_\star\sim\mathcal N(0,1).$$

We assume:
\begin{enumerate}
\item[i)] \textbf{Convex slice with bounded sublevel sets (no escape to infinity).}
For each $s\in\{-1,+1\}$, the map $u\mapsto \mathcal L(u,s)$ is convex.
Moreover, for the relevant loss level $\ell_0:=\mathcal L(u(t_0),s)$ (with $t_0$
as in Proposition~\ref{prop:conv_u_given_m_parity}), the sublevel set
$\{u\in\mathbb R:\ \mathcal L(u,s)\le \ell_0\}$ is bounded.

\item[ii)] \textbf{No nontrivial scaling symmetries.}
If there exists $a\neq 0$ such that $\sigma(a z)=\sigma(z)$ for all
$z\in\mathbb R$, then $a\in\{-1,+1\}$.

\item[iii)] \textbf{Realizability on the slice.}
For each $s\in\{-1,+1\}$, the minimum is attained at zero loss:
$\min_u \mathcal L(u,s)=0$.
\end{enumerate}
\end{assumption}

\begin{remark}
Convexity in Assumption~\ref{ass:slice_wellposed_identifiable_relu} is not
assumed to be strict; in particular, when $\sigma$ is even, the slice loss may
admit two global minimizers.
\end{remark}

Under this assumption, the following proposition holds.

\begin{proposition}
\label{prop:conv_u_given_m_parity}
Assume the matching teacher--student setting $\sigma=\phi$ and that
Assumption~\ref{ass:slice_wellposed_identifiable_relu} holds.
Let $(u(t),m(t))$ be any solution of \eqref{equ:dynamic_descent_phase} such that
$m(t)\longrightarrow s\in\{-1,+1\}\qquad\text{as } t\to\infty.$

Then $u(t)$ converges to a global minimizer of the slice loss
$\mathcal L(\,\cdot\,,s)$. Moreover,
$$
u(t)\longrightarrow
\begin{cases}
s(1-\mu), & \text{if $\sigma$ is not even},\\[4pt]
s(1-\mu)\ \text{or}\ s(-1-\mu), & \text{if $\sigma$ is even}.
\end{cases}
$$
\end{proposition}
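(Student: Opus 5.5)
The plan is to treat $u$ as asymptotically driven by a gradient flow on the limiting slice loss $\mathcal L(\cdot,s)$, and then to identify the minimizers of that slice loss explicitly.

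\emph{Reduction to the slice.} Since $m(t)\to s$, the $u$-equation in \eqref{equ:dynamic_descent_phase} reads $\dot u=-\partial_u\mathcal L(u,m(t))$. Its right-hand side is asymptotically close to $-\partial_u\mathcal L(u,s)$. At $|m|=1$ we have $\omeg=s\,\omeg_\star$, so the model preactivation is $(\mu+us)\lambda_\star$. Hence $\mathcal L(u,s)$ is exactly the slice loss of Assumption~\ref{ass:slice_wellposed_identifiable_relu}.

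\emph{Boundedness of $u$.} The Lyapunov identity \eqref{equ:dynamic_lyapunov} gives $\mathcal L(u(t),m(t))\le\ell_0$ for all $t\ge t_0$. Combined with continuity of $\mathcal L$ in $m$, uniformly on compacts in $u$, and the bounded sublevel set in Assumption~\ref{ass:slice_wellposed_identifiable_relu}(i), this shows that $u(t)$ stays in a compact set $K$. I expect this step to need some care, because a sublevel set bounded at $m=s$ does not automatically control $u$ at $m\neq s$. I would first use coercivity of the slice: with polynomial growth, $\mathcal L(u,s)\to\infty$ as $|u|\to\infty$ by convexity plus boundedness of the sublevel set. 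I would then argue that $\mathcal L(u,m)$ is continuous and uniformly close to $\mathcal L(u,s)$ on large balls for $m$ near $s$, which lets a continuity/trapping argument keep $u$ inside the region $\{\mathcal L(\cdot,s)\le\ell_0+1\}$.

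\emph{Asymptotic autonomy.} On $K$, the perturbation $\sup_{u\in K}|\partial_u\mathcal L(u,m(t))-\partial_u\mathcal L(u,s)|$ tends to $0$, using the $C^1$ dependence and $m(t)\to s$. I would then invoke a Markus-type asymptotically autonomous limit-set argument, or argue directly as follows.
\begin{itemize}
\item Integrating $\frac{d}{dt}\mathcal L$ shows that $\int^\infty\phi_u^2\,dt<\infty$.
\item Because the $\omega$-limit set of $u(t)$ is connected and $\dot u$ is bounded, every limit point $u^\ast$ must satisfy $\partial_u\mathcal L(u^\ast,s)=0$.
\item By convexity of the slice, these critical points form the set of global minimizers, which is an interval.
\end{itemize}
The main obstacle is upgrading ``every limit point is a minimizer'' to actual convergence. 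If $\sigma$ is even, the minimizer set may have two components, and a connected $\omega$-limit set then lies in one of them. So it remains to show that each component is a single point, which the next step provides.

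\emph{Identification of minimizers.} By Assumption~\ref{ass:slice_wellposed_identifiable_relu}(iii) the minimum loss is $0$. This means $\sigma(\lambda_\star)=\sigma((\mu+us)\lambda_\star)$ almost surely, hence for all $z$ by continuity. If $\mu+us=0$, then $\sigma$ would be constant, which contradicts realizability being nontrivial (alternatively, one can exclude it by (ii) applied to any $a$). Otherwise (ii) forces $\mu+us\in\{\pm1\}$. If $\sigma$ is not even, $a=-1$ is excluded, so $\mu+us=1$, i.e.\ $u=s(1-\mu)$ since $s^2=1$. If $\sigma$ is even, both $u=s(1-\mu)$ and $u=s(-1-\mu)$ occur. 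The minimizer set is therefore finite, the connected limit set is a single point, and $u(t)$ converges to one of the listed values.
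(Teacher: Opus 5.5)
Your outline follows the paper's proof: Lyapunov monotonicity, boundedness of $u$ from the slice's sublevel sets, limit points being critical points of $\mathcal L(\cdot,s)$, convexity making them global minimizers, and (ii)--(iii) identifying the zeros. Several of your additions repair steps the paper only asserts. The paper infers $\phi_u(u(t),m(t))\to 0$ from $\int\phi_u^2\,dt<\infty$ without any uniform-continuity argument. It passes from ``bounded with all limit points minimizers'' to convergence without your connectedness-plus-finiteness argument. It also ignores the case $\mu+us=0$.

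The step that does not close as you sketch it is the one you flagged: boundedness of $u$. The paper simply asserts that $\mathcal L(u(t),s)$ is nonincreasing, but the Lyapunov identity only gives this for $\mathcal L(u(t),m(t))$. Your trapping argument is circular. To trap $u$ in $\{|u|\le R\}$ you need $\mathcal L(\cdot,m)$ uniformly close to $\mathcal L(\cdot,s)$ at $|u|=R$. That holds only for $|m-s|\le\delta(R)$, i.e.\ after some time $t_1(R)$. You then need $|u(t_1)|<R$, which is what you are trying to prove. Closeness on compact balls says nothing if $|u(t)|\to\infty$ while $m(t)\to s$, so that case remains open.

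The missing ingredient is a lower bound on $\mathcal L(u,m)$ for large $|u|$ that is uniform in $m$. It follows from the slice coercivity you already derived.
\begin{itemize}
\item Let $g(v):=\mathbb E_{z\sim\mathcal N(0,v)}[\sigma(z)^2]$.
\item By the triangle inequality in $L^2$, the square root of the slice loss is at most $\|\sigma\|_{L^2(\mathcal N(0,1))}+\sqrt{g((\mu+us)^2)}$. Coercivity of the slice therefore forces $g(v)\to\infty$ as $v\to\infty$.
\item For arbitrary $m$, the student preactivation is $\mathcal N(0,r)$ with $r=\mu^2+u^2+2\mu u m\ge(|u|-\mu)^2$.
\item The reverse triangle inequality gives $\|\phi(\lambda_\star)-\sigma(z)\|_{L^2}\ge\sqrt{g(r)}-\|\phi\|_{L^2(\mathcal N(0,1))}$.
\end{itemize}
Hence $\mathcal L(u,m)\to\infty$ as $|u|\to\infty$, uniformly in $m\in[-1,1]$. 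The bound $\mathcal L(u(t),m(t))\le\mathcal L(u(t_0),m(t_0))$ then confines $u$ to a compact set directly, with no trapping needed, and the rest of your argument goes through.

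One minor inconsistency: under convexity the minimizer set is an interval, so it cannot have two components. In fact, for even $\sigma$, convexity would make the slice loss vanish for all $\mu+us\in[-1,1]$, contradicting (ii). The even branch is therefore vacuous under the stated assumption. Your final finiteness-plus-connectedness step is unaffected either way.
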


\begin{remark}
In the matching activation setting, the above propositions guarantee that once
the dynamics escape the correlated search phase, they converge so as to recover
the teacher direction. No such guarantee can be made in the presence of
activation mismatch between teacher and student. Proofs of
Propositions~\ref{prop:exp_convergence_m} and
\ref{prop:conv_u_given_m_parity} are provided in
Appendix~\ref{app_proof_m_to_1_u_to_1}.
\end{remark}

\subsection{Descent phase: proof of Proposition 
\ref{prop:exp_convergence_m} and Proposition \ref{prop:conv_u_given_m_parity}}
\label{app_proof_m_to_1_u_to_1}

\begin{proof}
\underline{Proof of Proposition~\ref{prop:exp_convergence_m}.}
Write $\phi_m(u,m) = \partial_m \mathcal{L}(u,m)$ and define
$$
h(u,m) := -\,\phi_m(u,m) = -\,\partial_m \mathcal{L}(u,m).
$$
By Assumption~\ref{ass:grad_loss_function_m}, for all $u$ and all $m$ with
$|m|\in[\eta,1)$,
$$
\operatorname{sign}(m)\,\partial_m \mathcal{L}(u,m) \;\le\; -c,
\quad\text{so that}\quad
-\,\operatorname{sign}(m)\,\partial_m \mathcal{L}(u,m) \;\ge\; c.
$$

The $m$-dynamics in \eqref{equ:dynamic_descent_phase} is
$$
\dot m = -\,\phi_m(u,m)\,(1-m^2)
       = -\,\partial_m \mathcal{L}(u,m)\,(1-m^2).
$$
For $t\ge t_0$, since $|m(t_0)|\in(\eta,1)$ and the drift always pushes $|m|$
away from zero in the region $|m|\in[\eta,1)$, we have $|m(t)|\in[\eta,1)$
and $\operatorname{sign}(m(t)) = \operatorname{sign}(m(t_0))$ for all $t\ge t_0$.

Set $s(t):=|m(t)|$. Then
$$
\dot s(t)
= \frac{d}{dt}|m(t)|
= \operatorname{sign}(m(t))\,\dot m(t)
= -\,\operatorname{sign}(m(t))\,\partial_m \mathcal{L}(u(t),m(t))\,(1-m(t)^2).
$$
Using the assumption
$-\,\operatorname{sign}(m)\,\partial_m \mathcal{L}(u,m) \ge c$
for $|m|\in[\eta,1)$ and the identity $1-m(t)^2 = 1-s(t)^2$, we obtain
$$
\dot s(t) \;\ge\; c\,(1-s(t)^2).
$$

Define $g(t) := 1 - s(t)$. Then
\begin{align*}
    \dot g(t)
    &= -\dot s(t)
     \;\le\; -c\,(1-s(t)^2) \\
    &= -c\,(1-s(t))\,(1+s(t))
     \;\le\; -c\,(1+\eta)\,g(t),
\end{align*}
since $s(t)\ge\eta$ for all $t\ge t_0$.
Renaming $2c := c(1+\eta)$ (i.e., absorbing constants into $c$), we obtain
$$
\dot g(t) \;\le\; -2c\,g(t).
$$
By Grönwall's inequality, for all $t\ge t_0$,
$$g(t) \;\le\; g(t_0)\,e^{-2c\,(t-t_0)},$$
that is,
$$ 1 - |m(t)| \;\le\;
    \bigl(1 - |m(t_0)|\bigr)\,e^{-2c\,(t-t_0)}.
$$
Hence $|m(t)|\to 1$ exponentially fast as $t\to\infty$, and the sign of $m(t)$
cannot change after $t_0$.
\end{proof}

By Proposition~\ref{prop:exp_convergence_m}, in the descent phase we have 
$|m(t)|\to 1$ and the sign of $m(t)$ is constant. We now show that, 
conditional on $m(t)\to s\in\{-1,1\}$, the scalar parameter $u(t)$ converges 
to the corresponding optimal value $s(1-\mu)$.

\begin{proof}
\underline{Proof of Proposition~\ref{prop:conv_u_given_m_parity}.}
Fix $s\in\{-1,1\}$. In the matching-activation case $\sigma=\phi$, and for $m=s$,
the population loss along the teacher direction can be written as
$$\mathcal L(u,s)
=
\mathbb E_{\lambda_\star}\Big[
\big(\sigma(\lambda_\star)-\sigma((\mu+us)\lambda_\star)\big)^2
\Big],
\qquad \lambda_\star\sim\mathcal N(0,1),$$
which  is nonnegative. 

Moreover, by definition,
$\mathcal L\bigl(s(1-\mu),s\bigr)=0,$
and, if $\sigma$ is even, also $\mathcal L\bigl(s(-1-\mu),s\bigr)=0.$ Hence $\min_u \mathcal L(u,s)=0$.

We first show convergence to the set of global minimizers.
Along the dynamics~\eqref{equ:dynamic_descent_phase}, we have
$$
\frac{d}{dt}\,\mathcal L(u(t),m(t))
= \phi_u(u,m)\,\dot u + \phi_m(u,m)\,\dot m
= -\,\phi_u(u,m)^2 - (1-m^2)\phi_m(u,m)^2
\;\le\; 0.
$$

Therefore, $\mathcal L(u(t),m(t))$ is nonincreasing and bounded from below, and
thus converges as $t\to\infty$. By Assumption~\ref{ass:slice_wellposed_identifiable_relu}\,(i), the slice loss $u\mapsto\mathcal L(u,s)$ is convex and has bounded sublevel sets. Since
$\mathcal L(u(t),s)$ is nonincreasing along the trajectory, it follows that
$u(t)$ remains in a bounded sublevel set and therefore remains bounded for all
$t\ge t_0$. Consequently, the trajectory admits limit points.

Let $(u_\infty,m_\infty)$ be any limit point.
Since $m(t)\to s$ by hypothesis, we have $m_\infty=s$.
Moreover, since $\mathcal L(u(t),m(t))$ converges and its derivative is a sum of
negative squares, it follows that
$$
\phi_u(u(t),m(t))\longrightarrow 0
\qquad\text{as }t\to\infty.
$$

By continuity of $\phi_u$ and the fact that $m(t)\to s$, any limit point
$u_\infty$ of $u(t)$ satisfies $\phi_u(u_\infty,s)=0$.
Since $u\mapsto \mathcal L(u,s)$ is convex, every critical point is a global
minimizer, hence $u_\infty\in\arg\min_u \mathcal L(u,s)$.
Because the trajectory $u(t)$ is bounded and all its limit points belong to the
set of minimizers, we conclude that $u(t)$ converges to a global minimizer of
$\mathcal L(\,\cdot\,,s)$.

It remains to identify the minimizers.
Let $u$ be such that $\mathcal L(u,s)=0$.
Then
$$
\sigma((\mu+us)z)=\sigma(z)
\qquad\text{for a.e.\ } z\in\mathbb R.$$

By Assumption~\ref{ass:slice_wellposed_identifiable_relu}\,(ii),
this implies $\mu+us\in\{-1,+1\}$.
If $\sigma$ is not even, the identity $\sigma(-z)=\sigma(z)$ fails, so the case
$\mu+us=-1$ is impossible, and the only minimizer is $u=s(1-\mu)$.
If $\sigma$ is even, both $\mu+us=\pm1$ are admissible, yielding the two
minimizers $u=s(1-\mu)$ and $u=s(-1-\mu)$.

Therefore,
$$
u(t)\longrightarrow
\begin{cases}
s(1-\mu), & \text{if $\sigma$ is not even},\\[4pt]
s(1-\mu)\ \text{or}\ s(-1-\mu), & \text{if $\sigma$ is even}.
\end{cases}
$$
\end{proof}

\section{The role of the loss function}
\label{app:different_loss}

So far, our analysis has focused on the mean squared loss. In this appendix, we
investigate how the choice of a different loss function affects the role played
by the signal strength $\mu$. More precisely, we consider the correlation loss
\begin{equation}
    \ell(\hat{y},y) = 1 - \hat{y}y,
\end{equation}
which belongs to the correlated statistical query (CSQ) family and was also
employed in recent work such as~\cite{damian2023smoothing,gerace2024gaussian,arnaboldi2024repetita}.

We consider the teacher--student setting with linear activation functions for
both the teacher and the student. In this case, the population loss takes the
form
\begin{equation}
    \mathcal{L}(u,m)
    := \mathbb{E}_{\boldsymbol{x}}
    \Big[ 1 - (\omeg_\star \cdot \boldsymbol{x})
    \big((\mu \omeg_\star + u \omeg)\cdot \boldsymbol{x}\big) \Big]
    = 1 - \mu - u m,
\end{equation}
where $m := \omeg \cdot \omeg_\star$ denotes the overlap between the student
direction $\omeg$ and the teacher direction $\omeg_\star$.

The gradients of the population loss with respect to the order parameters $u$
and $m$ are given by
\begin{equation}
    \frac{\partial \mathcal{L}}{\partial u} = -m,
    \qquad
    \frac{\partial \mathcal{L}}{\partial m} = -u.
\end{equation}
Notably, the gradient of the population loss is completely independent of the
signal strength~$\mu$.

\paragraph{Interpretation.}
In the setting considered here, the fact that the population gradient does not
depend on $\mu$ has a direct and striking consequence: an arbitrarily strong
planted component of the teacher direction $\omeg_\star$ in the student model
does not influence the learning dynamics. In other words, the optimization
problem behaves as if it were effectively blind to the initial signal encoded
by~$\mu$. This behavior is in sharp contrast with what is observed when using
the quadratic loss, where the signal strength $\mu$ plays a crucial role in
shaping the learning dynamics and the escape from the search phase. The key
observation here concerns the structure of the population gradient field, which
remains entirely independent of the signal strength~$\mu$. The present observation suggests that, in practical scenarios such as LoRA
fine-tuning, the choice of the loss function can play a non-trivial and
sometimes dominant role in determining whether and how the pre-trained weight
is exploited during learning.

\section{Numerical evidence of delayed learning near the singularity}
\label{app:numerics_evidence_singularity}

Focusing on the Hermite-$3$ activation $\He_3$, we provide in Figure~\ref{fig:dynamics_Hermite_activation_appendix} numerical evidence of the effect of the pre-training alignment~$\mu$ for a student model trained using one-pass
SGD (further details of the numerical setup are given in the caption). As predicted by the theory and illustrated in Figure~\ref{fig:escaping_time_IE_1_matching_activation}(see panel (b)), the parameter~$\mu$ has a non-trivial impact on the dynamics governing the escape from the correlated search phase. The observations reported here are fully consistent with the theoretical predictions. 
Crucially, near the singular regime at $\mu=0.325$, a significant delay in escaping the search phase is observed numerically, in agreement with the theory. Across the different observables, we find that the quantity that exhibits a mild increase is the overlap~$m$, while the LoRA coefficient~$u$ remains noisy during this stage.

\begin{figure}[!htbp]
    \centering
    \begin{subfigure}
        \centering
        \includegraphics[width=1.0\textwidth]{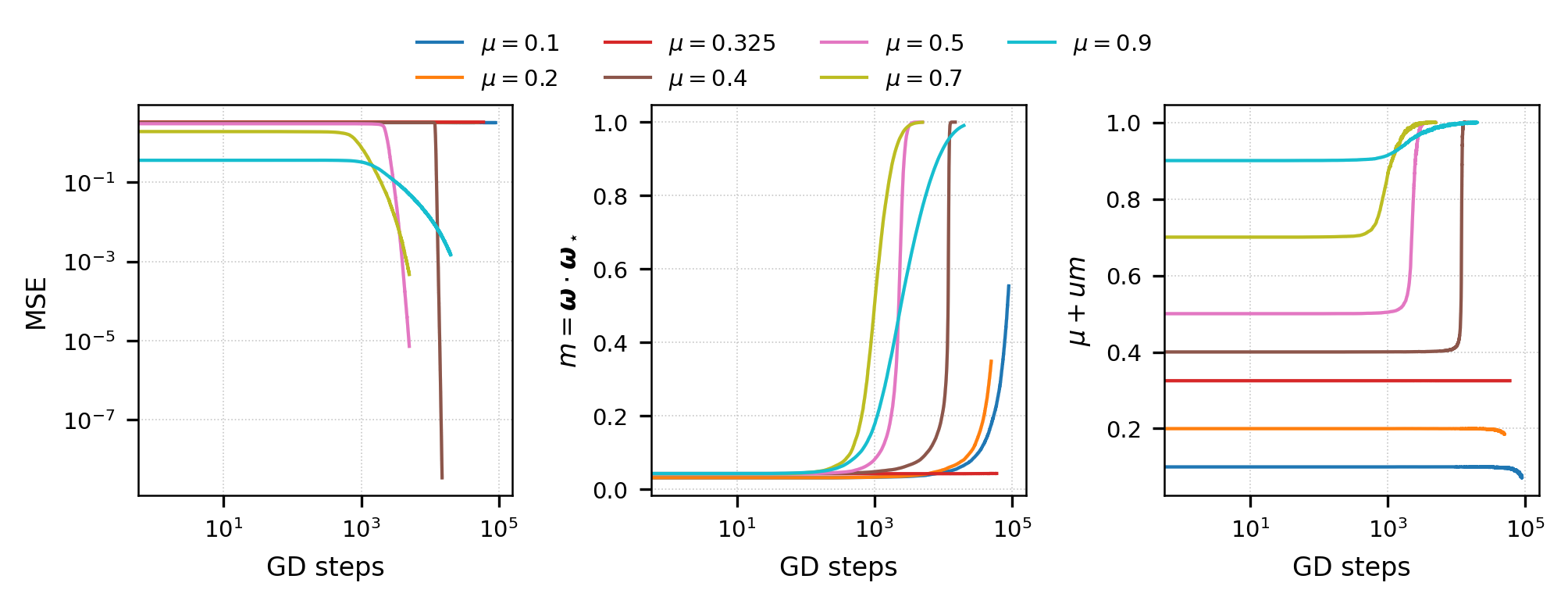}
    \end{subfigure}
    \vspace{-1.0em}
    \begin{subfigure}
        \centering
        \includegraphics[width=1.0\textwidth]{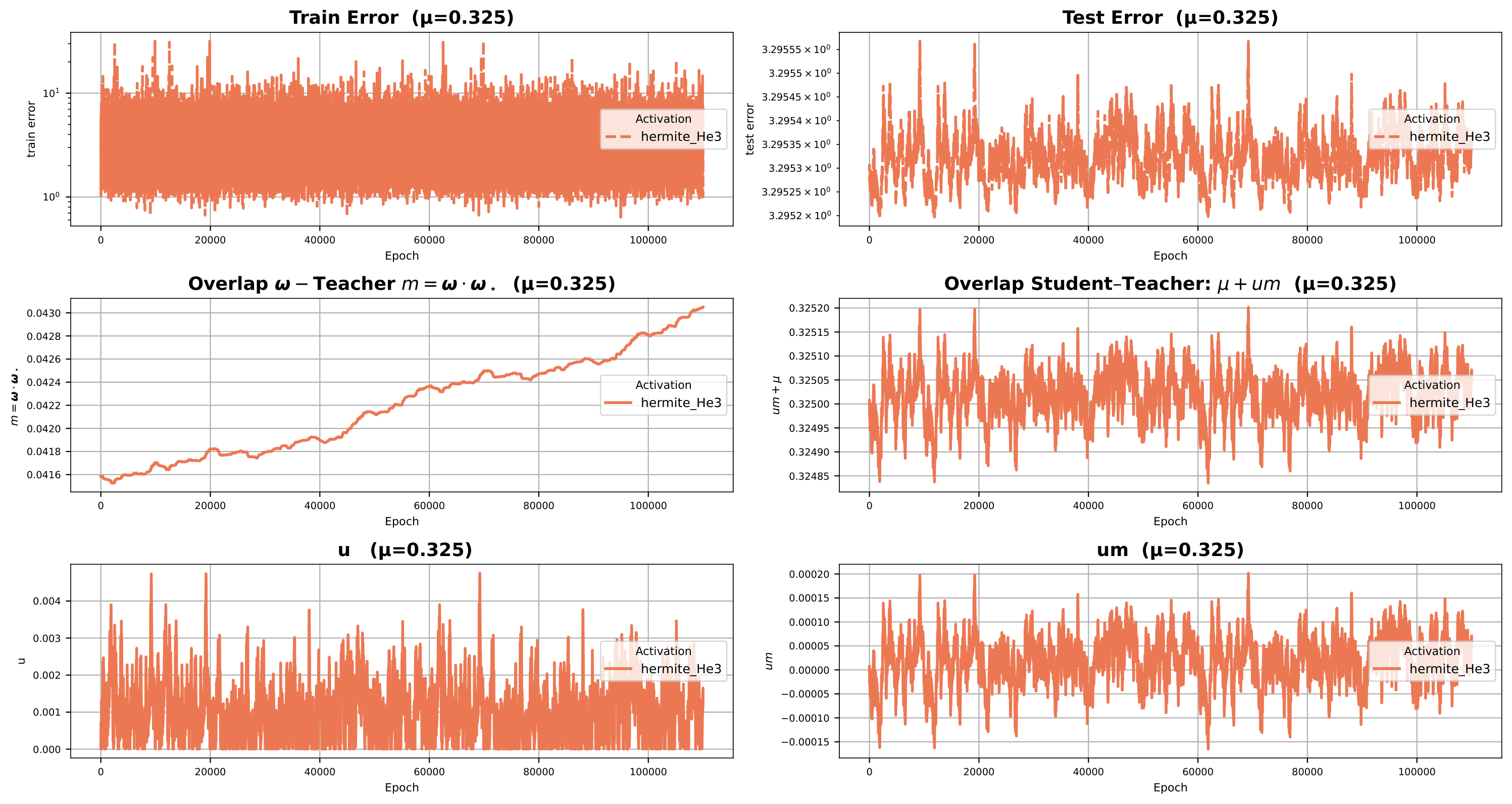}
    \end{subfigure}
   \caption{Learning dynamics of the student model~\eqref{equ:student} for different levels of pre-training alignment
$\mu \in \{0.1,0.2,0.325,0.4,0.5,0.7,0.9\}$, trained on data generated by the teacher model~\eqref{equ:teacher}.
We consider the matched teacher--student activation function to be a Hermite polynomial of degree $k=3$, as indicated in the legend.
The student is trained using one-pass SGD with batch size $B=5000$ and input dimension $d=1000$. \textbf{Top:} the panels report the test mean squared error (MSE) (left), the alignment between the student and teacher directions
$m=\boldsymbol{\omega}_\star\!\cdot\!\boldsymbol{\omega}$ (middle), and the effective teacher--student overlap $m_{\mathrm{eff}}=\mu+u m$ (right). Results are averaged over three independent runs. A spherical constraint is enforced by normalizing $\boldsymbol{\omega}$ after each gradient step, and the learning rate is set to
$lr=10^{-2}\,\delta_k$, with $\delta_k=1/(k!\,k)$.
As shown in the middle panel, the case $\mu=0.325$, which lies close to the singular region (cf.\ panel (b) of Fig.~\ref{fig:escaping_time_IE_1_matching_activation}),
exhibits a significantly longer escape time compared to the other values of $\mu$. \textbf{Bottom:} from the second to the last row, the panels display a more detailed evolution of the observables:
train/test error, the scalar parameter $u$, and the product $u m$ for the case $\mu=0.325$.
}
\label{fig:dynamics_Hermite_activation_appendix}
\end{figure}

\section{Additional Transformer LoRA Transfer Experiments}
\label{app:vit-lora-experiments}

We provide additional details on the controlled vision-transformer LoRA transfer experiments discussed in the main text. In all experiments, a small ViT is first pretrained on a source classification task, checkpoints are extracted along the pretraining trajectory, and the pretrained backbone is then frozen during downstream adaptation. LoRA modules are inserted in the attention projections and trained on the downstream task, together with the classification head when appropriate. Source checkpoints are compared under the same downstream protocol, and curves are averaged over seed-matched runs. The top row of each figure shows the source pretraining dynamics, while the bottom row shows downstream LoRA fine-tuning dynamics from different source checkpoints. Shaded regions indicate standard deviation across seeds.

\subsection{CIFAR-10$\to$SVHN}
\label{app:cifar10-svhn}

For the natural-image transfer experiment, we pretrain a small ViT classifier on the standard CIFAR-10 training set, consisting of $50{,}000$ images, and evaluate source performance on the $10{,}000$ CIFAR-10 test images. CIFAR-10 and SVHN are both $10$-class RGB classification tasks, but their input distributions differ substantially. We save source checkpoints along the CIFAR-10 pretraining trajectory and use each checkpoint to initialize a downstream SVHN model. During downstream adaptation, the pretrained backbone is frozen, and only the LoRA parameters and the readout are trained. The downstream budget is fixed to $100$ SVHN training examples per class, i.e. $1{,}000$ labeled examples in total, while performance is evaluated on the full SVHN test set. The same optimizer, learning rate, LoRA rank, and normalization protocol are used for all source checkpoints. Figure~\ref{fig:cifar10-svhn-2x2} shows that CIFAR-10 source accuracy continues to improve along pretraining, whereas SVHN adaptation is best from an earlier checkpoint.

\begin{figure}[!htbp]
    \centering
    \includegraphics[width=0.8\linewidth]{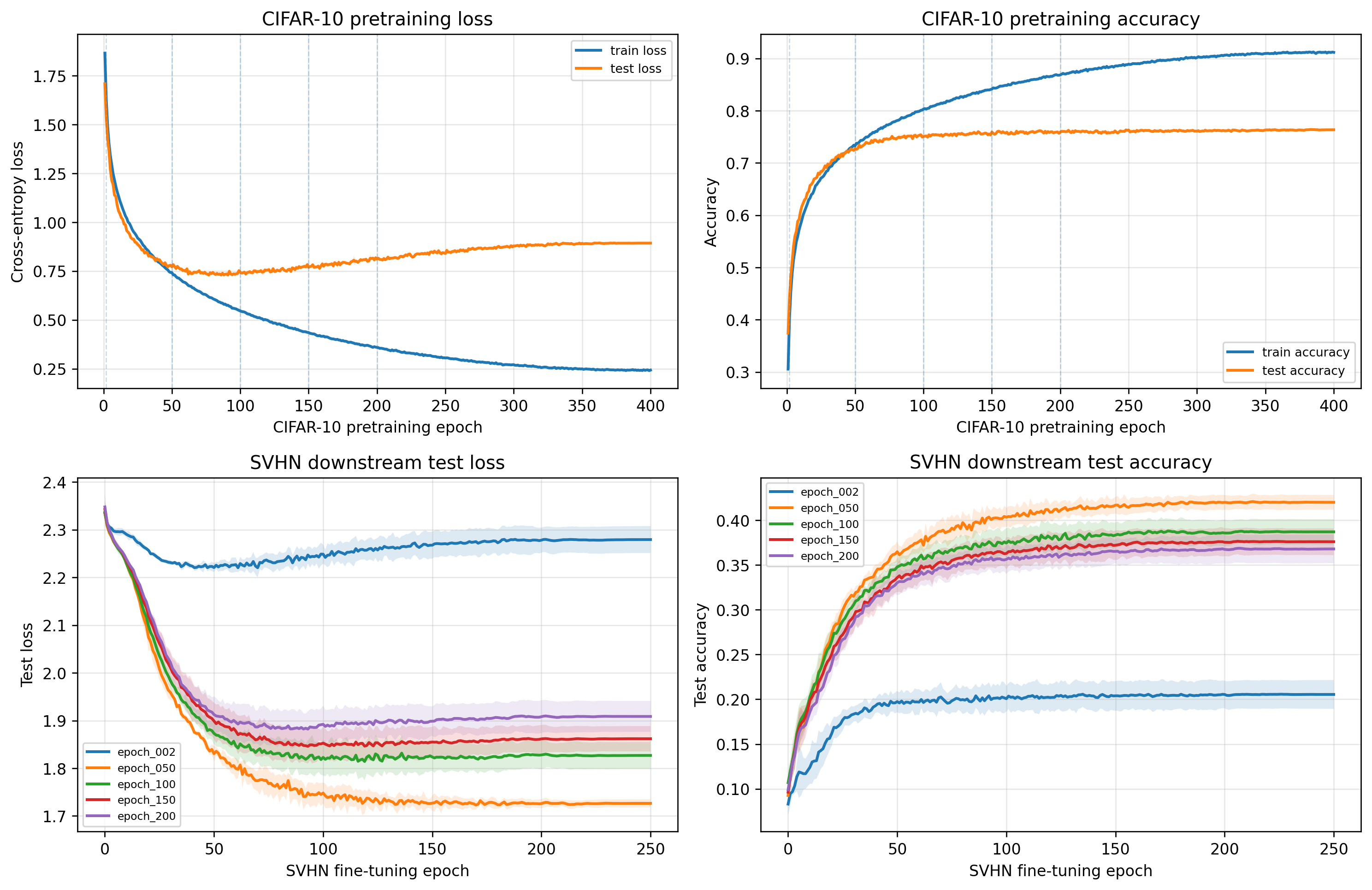}
    \caption{
    CIFAR-10 pretraining and CIFAR-10$\rightarrow$SVHN LoRA adaptation dynamics.
    Top row: CIFAR-10 pretraining loss and accuracy.
    Bottom row: downstream SVHN test loss and accuracy for LoRA fine-tuning initialized from different CIFAR-10 source checkpoints.
    Although CIFAR-10 source accuracy improves along pretraining, downstream SVHN adaptation is best from an earlier checkpoint, illustrating the decoupling between source-task accuracy and downstream LoRA adaptability.
    }
    \label{fig:cifar10-svhn-2x2}
\end{figure}

\subsection{EMNIST$\to$FashionMNIST}
\label{app:emnist-fashionmnist}

For the EMNIST$\to$FashionMNIST experiment, the source task is EMNIST Letters classification with $26$ classes, while the downstream task is FashionMNIST classification with $10$ classes. Thus, both the input distribution and the label space change. We pretrain a small ViT on the standard EMNIST Letters training split, containing $124{,}800$ images, with $10\%$ held out for validation, and evaluate source performance on the $20{,}800$ EMNIST Letters test images. The model uses patch size $4$, embedding dimension $192$, depth $3$, $4$ attention heads, pretraining batch size $256$, learning rate $10^{-3}$, weight decay $10^{-4}$, and a cosine learning-rate schedule. We save source checkpoints at epochs $2,5,10,15,20,50,$ and $100$. For downstream adaptation, the EMNIST backbone is frozen, LoRA modules are trained on the attention projections, and the classification head is reinitialized and trained jointly with LoRA because the label space changes from $26$ to $10$ classes. We use $500$ FashionMNIST examples per class, i.e. $5{,}000$ training examples in total, and evaluate on the full $10{,}000$-image FashionMNIST test set. The downstream fine-tuning uses LoRA rank $8$, LoRA scaling $\alpha=8$, LoRA targets $q,v$, batch size $128$, learning rate $5\times 10^{-4}$, and weight decay $10^{-4}$. Figure~\ref{fig:emnist-fashionmnist-2x2} shows that the latest EMNIST checkpoint has the best source accuracy, but an earlier checkpoint gives better FashionMNIST adaptation after fine-tuning.

\begin{figure}[!htbp]
    \centering
    \includegraphics[width=0.8\linewidth]{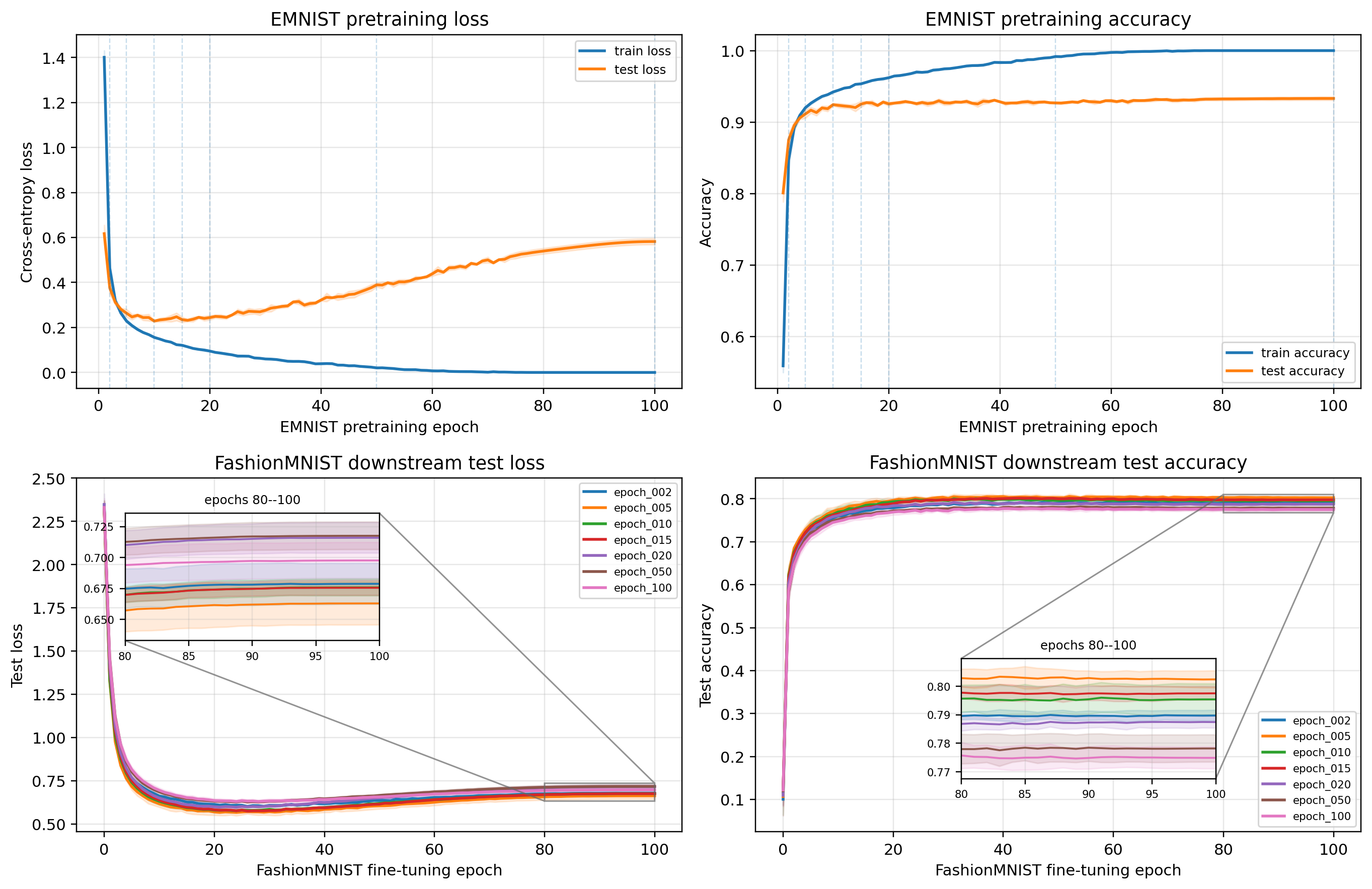}
    \caption{
    EMNIST pretraining and EMNIST$\rightarrow$FashionMNIST LoRA adaptation dynamics.
    Top row: EMNIST pretraining loss and accuracy.
    Bottom row: downstream FashionMNIST test loss and accuracy for LoRA fine-tuning initialized from different EMNIST source checkpoints.
    The downstream task changes both the input distribution and the label space, so the classification head is reinitialized and trained jointly with LoRA. Despite the later EMNIST checkpoints having higher source accuracy, the best FashionMNIST performance is obtained from an earlier source checkpoint. The insets zoom into epochs $80$--$100$, showing that this ordering persists near convergence.
    }
    \label{fig:emnist-fashionmnist-2x2}
\end{figure}

\subsection{EMNIST$\to$shifted-EMNIST}
\label{app:emnist-shifted}

For the shifted-EMNIST experiment, the source and downstream tasks share the same $26$-class EMNIST Letters label space, but the downstream input distribution is shifted. The shift is constructed by applying a rotation of $25^\circ$ and additive noise with standard deviation $0.20$ to the EMNIST images, isolating a covariate shift while keeping the semantic label space fixed. We pretrain a small ViT on clean EMNIST Letters using the standard $124{,}800$-image training split, with $10\%$ held out for validation, and evaluate source performance on the $20{,}800$ test images. The model uses patch size $4$, embedding dimension $128$, depth $2$, $4$ attention heads, pretraining batch size $256$, learning rate $10^{-3}$, and weight decay $10^{-4}$. We save source checkpoints at epochs $2,5,10,15,20,50,$ and $100$. For downstream adaptation, we initialize from each source checkpoint, freeze the backbone, and train LoRA modules on the attention projections with LoRA rank $4$ and scaling $\alpha=4$. Since the label space is unchanged, the pretrained head is retained and trained jointly with LoRA. We use a low-data downstream regime with $25$ shifted-EMNIST examples per class, i.e. $650$ training examples in total, batch size $128$, learning rate $7\times10^{-4}$, and weight decay $10^{-4}$. Downstream performance is evaluated on the full shifted version of the EMNIST Letters test set. Figure~\ref{fig:emnist-shifted-2x2} shows the strongest manifestation of the effect: late source checkpoints remain strong clean-EMNIST classifiers but adapt substantially more slowly and converge to worse shifted-EMNIST performance under LoRA.

\begin{figure}[!htbp]
    \centering
    \includegraphics[width=0.8\linewidth]{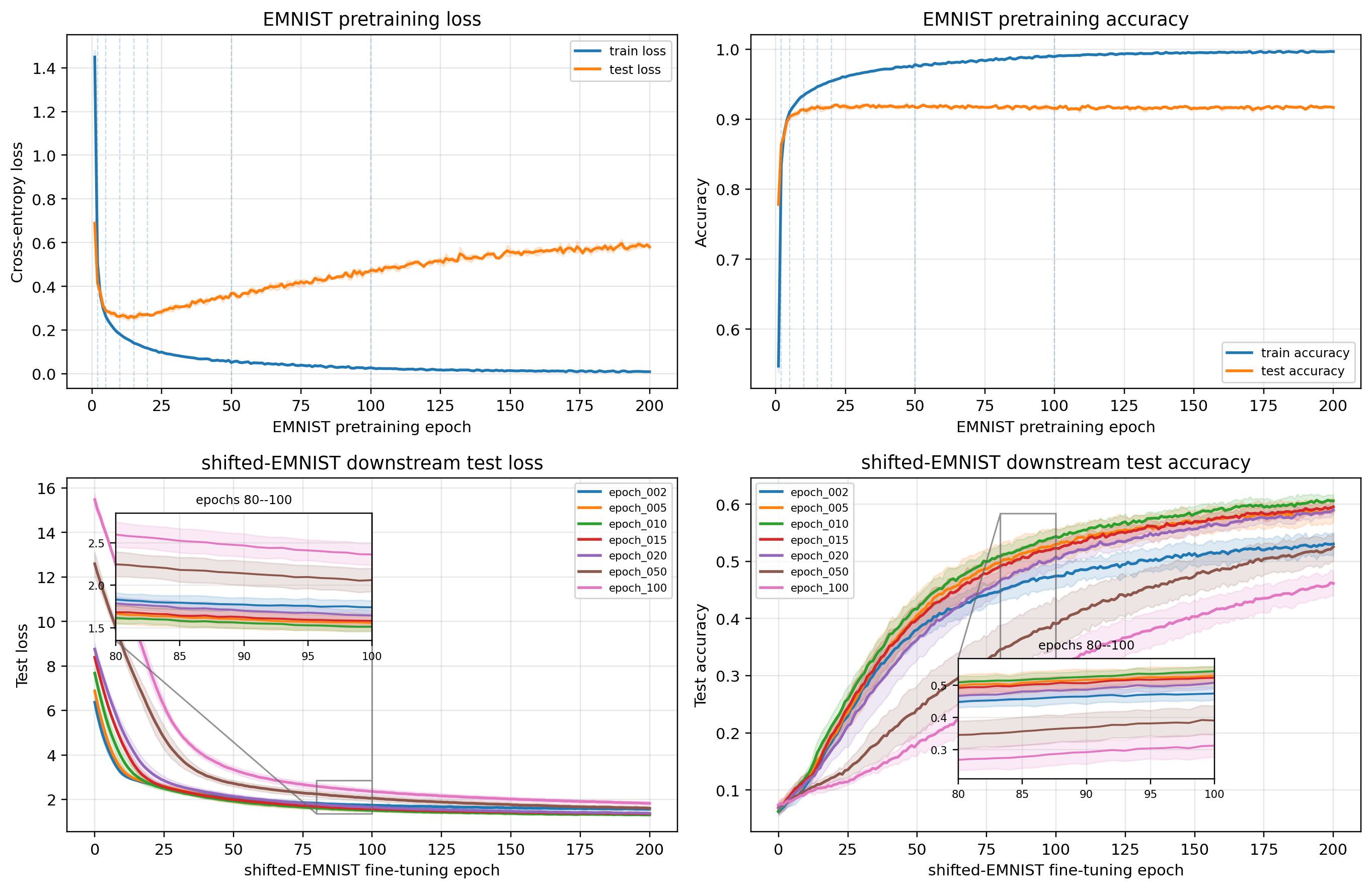}
    \caption{
    EMNIST pretraining and EMNIST$\rightarrow$shifted-EMNIST LoRA adaptation dynamics.
    Top row: EMNIST pretraining loss and accuracy.
    Bottom row: downstream shifted-EMNIST test loss and accuracy for LoRA fine-tuning initialized from different EMNIST source checkpoints.
    In this setting the downstream task preserves the label space but shifts the input distribution. The effect is stronger than in EMNIST$\rightarrow$FashionMNIST: later source checkpoints remain strong EMNIST classifiers but adapt substantially more slowly and converge to worse shifted-EMNIST performance under LoRA. The insets zoom into epochs $80$--$100$ and highlight the separation between early/intermediate and late source checkpoints.
    }
    \label{fig:emnist-shifted-2x2}
\end{figure}

\section{Beyond Single Index Model}
\label{role_of_the_rank_versus_signal}

As emphasized by Hu \textit{et al}.~\cite{hu2022lora}, the LoRA rank plays a non-trivial role. In particular, Sec.~7.3 suggests that the low-rank adaptation matrix can amplify task-relevant features that were already learned during pretraining but not sufficiently emphasized by the base model. Motivated by this perspective, we move beyond the single-index setting and consider a more realistic LoRA scenario, where the goal is to reconstruct a matrix of teacher directions $\omeg^\star\in\mathbb{R}^{K\times d}$ in a narrow two layers neural network model recently studied in~\cite{barbier2025generalization}. Our aim is to disentangle the roles of pretraining and rank: \textit{ we observed that the pretrained weights primarily control the escape from the search phase, while the LoRA rank mainly shapes the subsequent descent phase and the final attainable loss.} We consider both linear and $\mathrm{He}_2$ activations. For each case, we compute the characteristic escape time from the search phase; see Fig.~\ref{fig:escape_time_two_layer_neural_network}. For the $\mathrm{He}_2$ activation, we also integrate the full dynamics and show in Fig.~\ref{fig:dynamics_activation_two_layers_hermite2} that the pretrained weights guide the search phase, whereas the rank controls the final attainable loss.

\paragraph{Data generation.}
The data are generated according to a noiseless teacher model
\begin{equation}
\label{eq:teacher_two_layers}
y(\boldsymbol{x})=\frac{1}{K}\sum_{k=1}^K v_k \phi(\omeg_k^\star\cdot\boldsymbol{x}),
\end{equation}
where $\boldsymbol{x}_i\sim\mathcal N(0,I_d)$, the teacher weights satisfy
$\omeg_k^\star\in\mathbb S^{d-1}$ and $\omeg_k^\star\cdot\omeg_{k'}^\star=\delta_{kk'}$,
and $\phi:\mathbb R\to\mathbb R$ is a generic activation function.

\paragraph{LoRA student model.}
We consider a simplified LoRA-style parameterization of the student in which each teacher direction is initialized with a correlated component and adapted through a shared low-rank update.
Specifically, for rank $R$, let
$\boldU\in\mathbb{R}^{K\times R}$
and $\omeg\in\mathbb{R}^{d\times R}$,
with columns
$$
\omeg=\big[\omeg^{(1)},\dots,\omeg^{(R)}\big],
\qquad
\boldU=\big[\boldu^{(1)},\dots,\boldu^{(R)}\big].
$$
The effective weight vector associated with neuron $k$ is
\begin{equation}
\label{eq:lora_weights}
\tilde{\omeg}_k
=\omeg^{0}_k+
\sum_{r=1}^{R} u_{k,r}\,\omeg^{(r)} = \mu_k\,\omeg_k^\star
+
\sum_{r=1}^{R} u_{k,r}\,\omeg^{(r)},
\qquad \mu_k\in(0,1),
\end{equation}
 with $\omeg^{0}_k=\mu_k\,\omeg_k^\star$ where $\mu_k$ controls the strength of the pre-trained alignment with the $k$-th teacher direction.
The corresponding student output is
\begin{equation}
\label{eq:student_two_layers}
\hat y(\boldsymbol{x})=\frac{1}{K}\sum_{k=1}^K v_k \sigma(\tilde{\omeg}_k\cdot\boldsymbol{x}).
\end{equation}

\paragraph{Order parameters.}
As usual, define the overlaps (for $r,s\in\{1,\cdots,R\}$)
$$
m_{k,r}:=\omeg_k^\star\cdot \omeg^{(r)},
\qquad
q_{rs}:=\omeg^{(r)}\cdot \omeg^{(s)},
\qquad
\Delta_k:=1-\mu_k.
$$
We also introduce the rank-$r$ ``Gram'' of the $u$'s:
$$
C_{rs}:=\sum_{k=1}^K u_{k,r}u_{k,s}\qquad (r,s\in\{1,\cdots,R\}).
$$

\paragraph{Population loss.}
The population loss is
\begin{align}
\nonumber \mathcal L(\omeg,\boldU)
&=\frac{1}{2K}\,\mathbb E_x\Big[\big(y(\boldx)-\hat y(\boldx)\big)^2\Big].
\end{align}

\paragraph{Spherical gradient flow (columns constrained on the sphere).}
Assume each column is constrained to the sphere $\|\omeg^{(r)}\|^2=1$ (equivalently $q_{rr}=1$).
Then the spherical (Riemannian) gradient is
$$
\nabla^{\mathbb S}_{\omeg^{(r)}}\mathcal L
=
\Big(I-\omeg^{(r)}{\omeg^{(r)}}^\intercal\Big)\nabla_{\omeg^{(r)}}\mathcal L
=
\Big(I-\omeg^{(r)}{\omeg^{(r)}}^\intercal\Big)\nabla_{\omeg^{(r)}}\mathcal L,
$$
and the continuous-time gradient flow is
$$
\frac{d u_{k,r}}{dt}=-\frac{\partial \mathcal L}{\partial u_{k,r}},
\qquad
\frac{d \omeg^{(r)}}{dt}\,=-\nabla^{\mathbb S}_{\omeg^{(r)}}\mathcal L.
$$

\paragraph{Induced dynamics for the overlaps.}
Differentiate $m_{k,r}=\omeg_k^\star\cdot \omeg^{(r)}$:
$$
\frac{d m_{k,r}}{dt}
=-\omeg_k^\star\cdot \nabla^{\mathbb S}_{\omeg^{(r)}}\mathcal L=-\Big(\omeg_k^\star-m_{k,r}\omeg^{(r)}\Big)\nabla_{\omeg^{(r)}}\mathcal L.
$$

\emph{Effective alignment and sufficient statistics.}
For teacher direction $k$, the component of the student along $\omeg_k^\star$ is
\begin{equation}
\label{eq:m_eff_rankR}
m_{k,\mathrm{eff}}
:=\omeg_k^\star\cdot \tilde{\omeg}_k
=
\mu_k+\sum_{r=1}^R u_{k,r}m_{k,r},
\end{equation}
so tracking the pairs $(u_{k,r},m_{k,r})_{r=1}^R$ is sufficient to describe how much the $k$-th teacher direction
is represented in the current model.

\paragraph{Search-phase linearization for Linear activation}
Now we consider linear activation \begin{equation}
    \phi(x)=\sigma(x)=x
\end{equation}

In the search phase, the dynamics of the overlap leads to
\begin{equation*}
    \frac{d m_{ir}}{dt}
     =\frac{v_i\Delta_i}{K}\sum_{j=1}^K v_j u_{jr}
    ,\quad \frac{d u_{ir}}{dt}
    = \frac{v_i}{K}\big[ \sum_{j=1}^K v_j \Delta_j m_{jr} - \sum_{j=1}^K v_j u_{jr}
    \big].
\end{equation*}

Hence, introducing the collective scalar observables
\begin{equation}
    s_r := \sum_{j=1}^K v_j u_{jr} = \bv^\top \bm u_r,
    \qquad
    g_r := \sum_{j=1}^K v_j \Delta_j m_{jr}
    = (\bv\odot\bDelta)^\top \bm m_r,
\end{equation}
the dynamics becomes
\begin{equation}
    \frac{d m_{ir}}{dt}= \frac{v_i\Delta_i}{K}\,s_r,\quad \frac{d u_{ir}}{dt}
    = \frac{v_i}{K}\,(g_r-s_r) .
\end{equation}

Let
\begin{equation}
    \bm d := \bv\odot \bDelta.
\end{equation}
Then for each fixed rank direction $r$,
\begin{equation*}
    \dot{\bm m}_r
    = \frac{1}{K}\,\bm d\,(\bv^\top \bm u_r)
    ,\quad
    \dot{\bm u}_r
    = \frac{1}{K}\,\bv\,\big( \bm d^\top \bm m_r - \bv^\top \bm u_r \big).
\end{equation*}
Equivalently,
\begin{equation}
    \dot{\bm m}_r = -K_0^{\rm lin}\bm u_r + O(\varepsilon^2),
    \qquad
    \dot{\bm u}_r = -\widehat K_0^{\rm lin}\bm m_r - E_0^{\rm lin}\bm u_r + O(\varepsilon^2),
\end{equation}
with
\begin{equation}
    K_0^{\rm lin} =   -\frac{1}{K}(\bv\odot\bDelta)\bv^\top,
    \qquad \widehat K_0^{\rm lin}  =-\frac{1}{K}\bv(\bv\odot\bDelta)^\top,
    \qquad E_0^{\rm lin} = \frac{1}{K}\bv\bv^\top.
\end{equation}

Projecting onto the two collective coordinates $(g_r,s_r)$ gives a closed system:
\begin{equation}
    \dot g_r= (\bv\odot\bDelta)^\top \dot{\bm m}_r
    = \frac{\|\bv\odot\bDelta\|^2}{K}\,s_r
   ,\quad \dot s_r= \bv^\top \dot{\bm u}_r
    = \frac{\|\bv\|^2}{K}\,(g_r-s_r).
\end{equation}
Therefore the non-trivial linearized dynamics is governed by
\begin{equation}
    \frac{d}{dt}
    \begin{pmatrix}
        g_r\\
        s_r
    \end{pmatrix}
    =
    \begin{pmatrix}
        0 & \alpha\\
        \beta & -\beta
    \end{pmatrix}
    \begin{pmatrix}
        g_r\\
        s_r
    \end{pmatrix}
    +O(\varepsilon^2),
\end{equation}
where
\begin{equation}
    \alpha:=\frac{\|\bv\odot\bDelta\|^2}{K}=\frac{1}{K}\sum_{i=1}^{K} v^2_i(1-\mu_i)^2,
    \qquad
    \beta:=\frac{\|\bv\|^2}{K}=\frac{1}{K}\sum_{i=1}^{K} v^2_i.
\end{equation}
Its eigenvalues are
\begin{equation}
    \lambda_\pm
    =
    \frac{-\beta\pm\sqrt{\beta^2+4\alpha\beta}}{2},
\end{equation}
so that the escape rate is
\begin{equation}
\label{equ:escaping_time_linear_two_layer}
  \tau^{-1} =  \lambda_{\rm esc}^{\rm lin}= \frac{-\beta+\sqrt{\beta^2+4\alpha\beta}}{2}.
\end{equation}

In particular, for linear activations the leading-order escape depends only on the collective mismatch vector $\bv\odot\bDelta$, and not on the rank $R$: each rank direction $r$ obeys the same reduced two-dimensional dynamics. 

\begin{figure}
    \centering
    \includegraphics[width=0.5\linewidth]{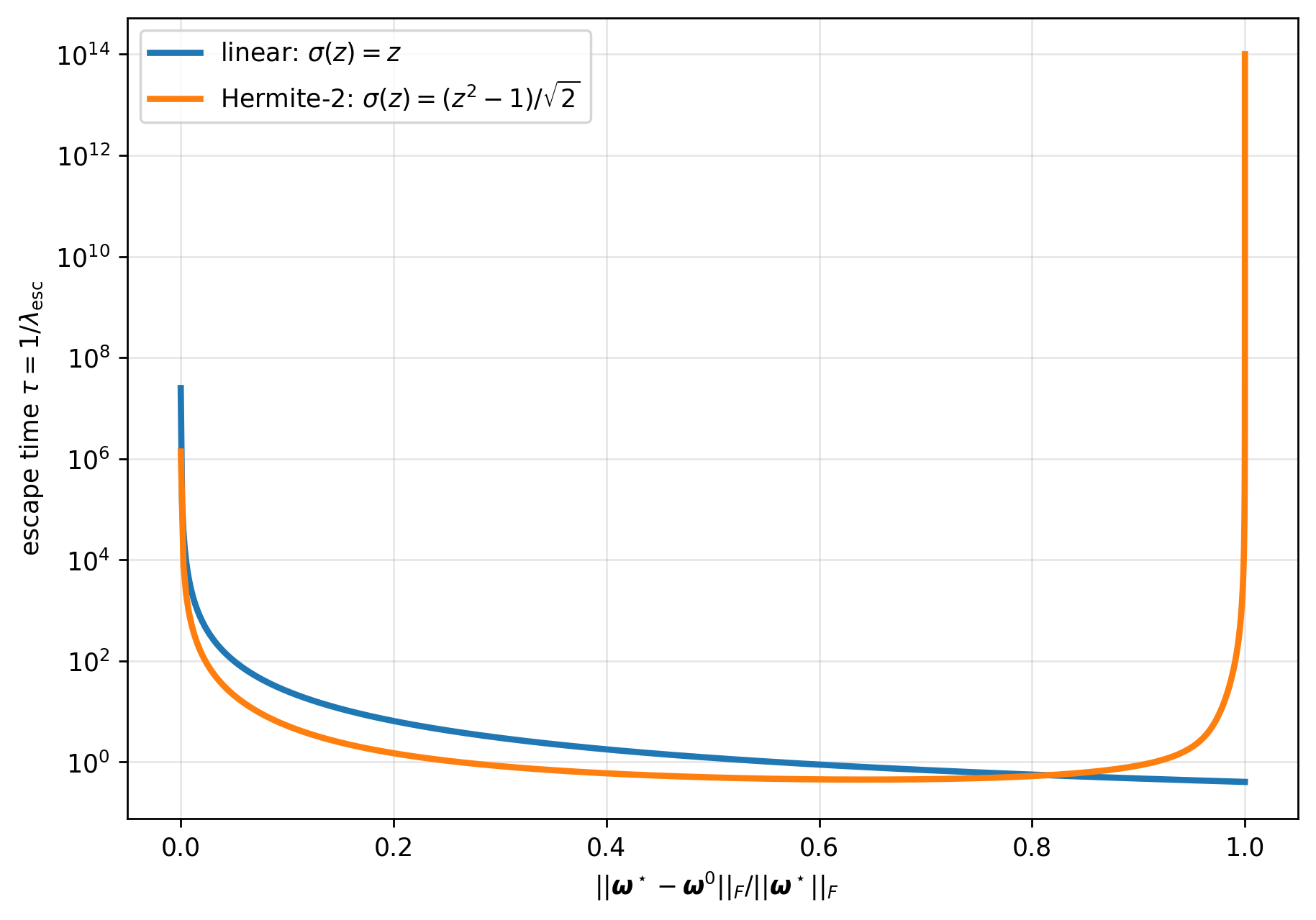}
    \caption{Search-phase escape time as a function of the normalized Frobenius distance between the pretrained weigth matrix $\omeg^0$ and ground truth matrix $\omeg^\star$ for linear \eqref{equ:escaping_time_linear_two_layer} and Hermite-2 \eqref{equ:escaping_time_quadratic_two_layer} activations. 
We vary a homogeneous pre-training alignment $\mu_i=\mu$ for all hidden units and keep the readout weights fixed, with $v_i=1$ throughout the comparison. 
For each value of $\mu$, the escape time is computed as $\tau=1/\lambda_{\rm esc}$ from the corresponding linearized search dynamics. 
The linear activation depends only on the collective mismatch vector $\bv\odot(1-\bmu)$, whereas the Hermite-2 activation uses the quadratic residual mismatch associated with the projected teacher subspace. 
The Frobenius norm is normalized by the teacher norm.
}
    \label{fig:escape_time_two_layer_neural_network}
\end{figure}

\begin{figure}
    \centering
        \includegraphics[width=1.0\textwidth]{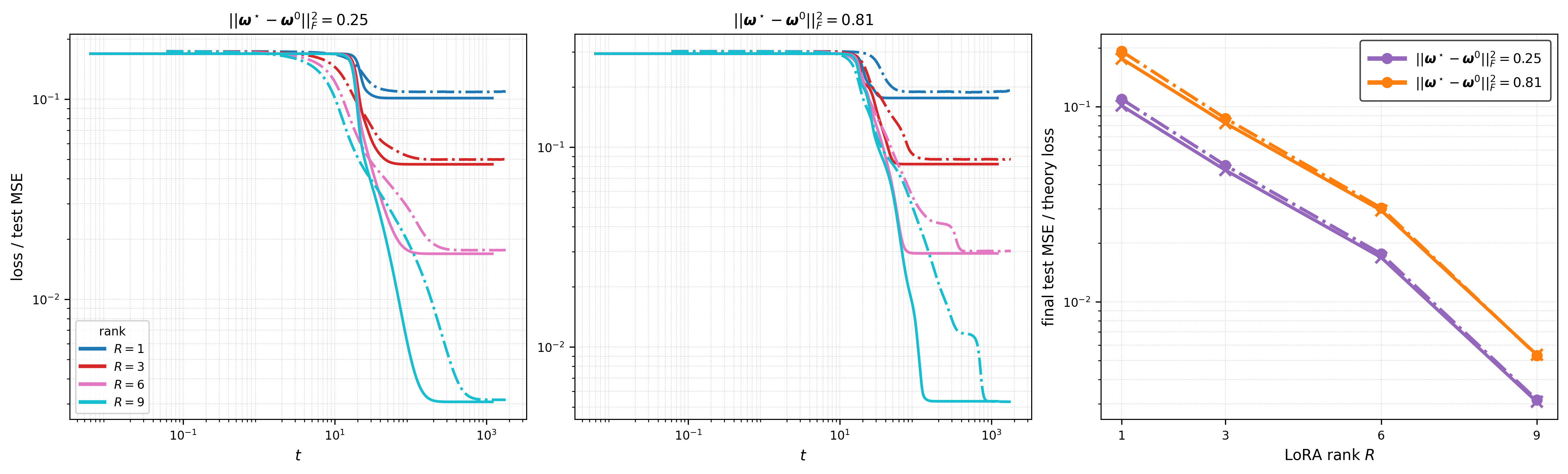}
        \label{fig:the_role_of_the_rank}
        \vspace{-19pt}
    \caption{
Learning dynamics of the two-layer student model with Hermite-2 activation and homogeneous readout, using output weights $v_i=1$. The hidden-layer width is set to $K=10$. We consider two levels of initial mismatch between the student pretrained weight and the teacher weights, measured by $\|\boldsymbol{\omega}^\star-\boldsymbol{\omega}^{0}\|_F^2\in\{0.25,0.81\}$, obtained by taking the same pretraining alignment $\mu_i\in\{0.5,0.1\}$ across all hidden units. The student is trained on data generated by the teacher model~\eqref{eq:teacher_two_layers}. We compare the overlap-theory prediction, obtained by integrating the ODE~\eqref{equ:two_layer_dynamic_hermite2}, with finite online-SGD experiments for different LoRA ranks $R\in\{1,3,6,9\}$. Solid lines correspond to the ODE prediction, while dash-dotted lines correspond to online SGD. The online-SGD experiments use input dimension $d=50$, learning rate $\eta=0.06$, and batch size $B=5000$, and results are averaged over three independent runs. \textbf{Left--middle:} evolution of the population loss \eqref{equ:population_loss_He2} as a function of time for the two pretraining mismatch levels, with colors indicating the LoRA rank. \textbf{Right:} final population loss as a function of the LoRA rank for different pretraining weights levels.
}
 \label{fig:dynamics_activation_two_layers_hermite2}
\end{figure}

\paragraph{Search-phase linearization for normalized Hermite-2.}

We consider the activation
\begin{equation}
    \sigma(z)=\phi(z)=\frac{\mathrm{He}_2(z)}{\sqrt 2}
    =\frac{z^2-1}{\sqrt 2}.
\end{equation}

In the search phase we assume
\begin{equation}
    u_{ir}=O(\varepsilon),
    \qquad
    m_{ir}=O(\varepsilon),
    \qquad
    q_{rs}=\delta_{rs}+O(\varepsilon).
\end{equation}

The linearized search dynamics is:
\begin{align}
    \dot m_{ir}
    = -\kappa_i u_{ir}   +O(\varepsilon^2),\quad
    \dot u_{ir}= -\kappa_i m_{ir}  -\eta_i u_{ir}+O(\varepsilon^2),\quad \dot q_{rs}= O(\varepsilon^2),
\end{align}
where
\begin{equation}
    \kappa_i=\frac{v_i\mu_i}{K}\big[ 3v_i(\rho_i-1)
        + \sum_{j\neq i}v_j(\rho_j-1) \big],
    \quad \eta_i :=\frac{v_i}{K} \big[ v_i(3\rho_i-1)+ \sum_{j\neq i}v_j(\rho_j-1)
    \big].
\end{equation}

We recall that
\begin{equation}
    r_{ii}=\rho_i+O(\varepsilon^2),
    \qquad
    r^\star_{ii}=\mu_i+O(\varepsilon^2),
    \qquad
    r_{ij}=O(\varepsilon^2),
    \qquad
    r^\star_{ij}=O(\varepsilon^2),\quad 
    \rho_i:=\mu_i^2.
\end{equation}

Equivalently, for each fixed pair $(i,r)$,
\begin{equation}
    \frac{d}{dt}
    \begin{pmatrix}
        m_{ir}\\
        u_{ir}
    \end{pmatrix}=
    - \begin{pmatrix}
        0 & \kappa_i\\
        \kappa_i & \eta_i
    \end{pmatrix}
    \begin{pmatrix}
        m_{ir}\\
        u_{ir}
    \end{pmatrix}
    +O(\varepsilon^2).
\end{equation}
The two eigenvalues of this block are
\begin{equation}
    \lambda_{i,\pm}
    =
    \frac{-\eta_i
    \pm \sqrt{\eta_i^2+4\kappa_i^2}}{2}.
\end{equation}
Therefore 
\begin{equation}
\label{equ:escaping_time_quadratic_two_layer}
    \lambda_{\rm esc}
    =  \max_i  \frac{-\eta_i +\sqrt{\eta_i^2+4\kappa_i^2}}{2}.
\end{equation}

\paragraph{Gradient Flow Equation in the Descend-Phase for Hermite-2} 

~\\

The dynamics of the order parameter are
\begin{equation}
\label{equ:two_layer_dynamic_hermite2}
 \dot u_{ir}
=-\gamma G^u_{ir},\quad \dot m_{ir}
=-\gamma H_{ir}-\frac{\chi}{2}m_{ir}L_{rr}+O(\gamma^3),
\quad
\dot q_{rs}=-\gamma(J_{sr}+J_{rs})+\frac{\chi}{2}\Big[2L_{rs}
-q_{rs}(L_{rr}+L_{ss})\Big]+O(\gamma^3).
\end{equation} 
with 
\begin{align*}
L_{rs}
&=\sum_{a\neq r}\sum_{b\neq s}\Big(q_{ab}-q_{sb}q_{as}-q_{ra}q_{rb}+q_{ra}q_{sb}q_{rs}\Big)
G^q_{ra}G^q_{sb}+\sum_{a\neq r}\sum_{j=1}^K\Big(m_{ja}-q_{as}m_{js}-q_{ra}m_{jr}+q_{ra}q_{rs}m_{js}\Big)G^q_{ra}G^m_{js}
\nonumber\\
&+\sum_{i=1}^K\sum_{b\neq s}
\Big(m_{ib}-q_{sb}m_{is}-m_{ir}q_{rb}+m_{ir}q_{sb}q_{rs}\Big)G^m_{ir}G^q_{sb}+\sum_{i=1}^K\sum_{j=1}^K\Big(\delta_{ij}
-m_{is}m_{js}-m_{ir}m_{jr}+q_{rs}m_{ir}m_{js}
Big)G^m_{ir}G^m_{js},\\
L_{rr}
&=
\sum_{a,b\neq r}
(q_{ab}-q_{ra}q_{rb})
G^q_{ra}G^q_{rb}+
2\sum_{a\neq r}\sum_{i=1}^K
(m_{ia}-q_{ra}m_{ir})
G^q_{ra}G^m_{ir}+
\sum_{i=1}^K\sum_{j=1}^K
(\delta_{ij}-m_{ir}m_{jr})
G^m_{ir}G^m_{jr},
\end{align*}
where $\chi:=\frac{\gamma^2}{d}$ and $\gamma$ is the learning rate.

We have defined

\begin{align*}
G^m_{ir}
&=\frac{v_i}{K}\Bigg[v_i u_{ir}\Big(\mu_i(3r_{ii}-1)-2r_{ii}^\star\Big)+\sum_{j\neq i}v_j\Big(\mu_i(r_{jj}-1)u_{ir}+2(\mu_i r_{ij}-r_{ij}^\star)u_{jr}\Big)\Bigg],
\\
G^u_{ir}
&=\frac{v_i}{K}\Bigg[v_i\Big((3r_{ii}-1)\Xi_{ir}-2r_{ii}^\star m_{ir}\Big)+\sum_{j\neq i}v_j\Big((r_{jj}-1)\Xi_{ir}+2r_{ij}\Xi_{jr}-2r_{ji}^\star m_{jr}\Big)\Bigg],
\\
G^q_{rs}&=\frac{1}{2K}\Bigg[\sum_i v_i^2u_{ir}u_{is}(3r_{ii}-1)+\sum_{i\neq j}v_iv_j\Big(\frac12 u_{ir}u_{is}(r_{jj}-1)+\frac12 u_{jr}u_{js}(r_{ii}-1)+2u_{ir}u_{js}r_{ij}
\Big)
\Bigg],\\
H_{ir}
&=\sum_{a\neq r}(m_{ia}-m_{ir}q_{ra})G^q_{ra}+\sum_{j=1}^K(\delta_{ij}-m_{ir}m_{jr})G^m_{jr},\quad  J_{sr}=\sum_{a\neq r}(q_{as}-q_{ra}q_{rs})G^q_{ra}+\sum_{i=1}^K(m_{is}-q_{rs}m_{ir})G^m_{ir}.
\end{align*} 
Finally thee remaining scalar are:
\begin{align*}
    r_{ii}   &= \mu_i^2
    +2\mu_i\sum_{a=1}^{R}u_{ia}m_{ia}
    +\sum_{a,b=1}^{R}u_{ia}u_{ib}q_{ab},\quad r^\star_{ii}=\mu_i+\sum_{a=1}^{R}u_{ia}m_{ia},
    \\
    r_{ij} &= \mu_i\sum_{a=1}^{R}u_{ja}m_{ia} +\mu_j\sum_{a=1}^{R}u_{ia}m_{ja} +\sum_{a,b=1}^{R}u_{ia}u_{jb}q_{ab},
    r^\star_{ij}
    =\sum_{a=1}^{R}u_{ja}m_{ia},
    \qquad i\neq j,\\
    \Xi_{ir}&:=\mu_i m_{ir}+\sum_{a=1}^{R}u_{ia}q_{ra}.
\end{align*}

The system of ODE \eqref{equ:two_layer_dynamic_hermite2}  helps to track the dynamic of the population loss The population loss is given by \begin{equation}
\label{equ:population_loss_He2}
\mathcal L_{H_2} = \frac{1}{2K}\big[   \sum_{i,j=1}^K v^2_i  +  \sum_{k,\ell=1}^K v_k v_\ell r_{k\ell}^2
        - 2\sum_{i,k=1}^K v_i v_k (r^\star_{ik})^2
    \big].
\end{equation}


\end{document}